\pdfoutput=1
\documentclass[11pt]{article}
\usepackage[english]{babel}
\usepackage[letterpaper,top=2.2cm,bottom=2.2cm,left=2.7cm,right=2.7cm]{geometry}
\usepackage[T1]{fontenc}
\usepackage{lmodern,microtype}
\usepackage{amsmath,amssymb,amsthm,amsfonts,mathtools}
\usepackage{mleftright}\mleftright
\usepackage{bbm}
\usepackage{booktabs}
\usepackage{array}
\usepackage{graphicx}
\usepackage{xcolor}
\usepackage[backend=bibtex,style=alphabetic,natbib=true,sortcites=true,maxcitenames=2,maxbibnames=99,maxalphanames=6,minalphanames=3,isbn=false]{biblatex}
\DeclareFieldFormat[misc]{title}{\iffieldundef{eprint}{\mkbibemph{#1}}{\mkbibquote{#1\isdot}}}
\AtBeginBibliography{\sloppy}
\usepackage{aliascnt}
\usepackage{hyperref}
\usepackage[capitalize,noabbrev]{cleveref}
\allowdisplaybreaks

\theoremstyle{plain}
\newtheorem{theorem}{Theorem}[section]
\newaliascnt{proposition}{theorem}

\aliascntresetthe{proposition}
\newaliascnt{lemma}{theorem}
\newtheorem{lemma}[lemma]{Lemma}
\aliascntresetthe{lemma}
\newaliascnt{corollary}{theorem}
\newtheorem{corollary}[corollary]{Corollary}
\aliascntresetthe{corollary}
\newaliascnt{claim}{theorem}
\newtheorem{claim}[claim]{Claim}
\aliascntresetthe{claim}
\theoremstyle{definition}
\newaliascnt{definition}{theorem}
\newtheorem{definition}[definition]{Definition}
\aliascntresetthe{definition}
\theoremstyle{remark}
\newaliascnt{remark}{theorem}

\aliascntresetthe{remark}
\crefname{proposition}{Proposition}{Propositions}
\crefname{lemma}{Lemma}{Lemmas}
\crefname{corollary}{Corollary}{Corollaries}
\crefname{claim}{Claim}{Claims}
\crefname{definition}{Definition}{Definitions}
\crefname{remark}{Remark}{Remarks}
\makeatletter
\newenvironment{proofof}[1]{\par\pushQED{\qed}\normalfont\topsep6\p@\@plus6\p@\relax
  \trivlist\item[\hskip\labelsep]\textbf{#1.}\ \ignorespaces}{\popQED\endtrivlist\@endpefalse}
\makeatother

\crefformat{equation}{(#2#1#3)}
\Crefformat{equation}{Equation~(#2#1#3)}
\crefrangeformat{equation}{(#3#1#4) to~(#5#2#6)}
\crefmultiformat{equation}{(#2#1#3)}{ and~(#2#1#3)}{, (#2#1#3)}{, and~(#2#1#3)}
\crefname{appendix}{Appendix}{Appendices}
\Crefname{appendix}{Appendix}{Appendices}

\newcommand{\reals}{\mathbb{R}}
\newcommand{\naturals}{\mathbb{N}}
\newcommand{\para}[1]{\paragraph{#1}}
\newcommand{\eps}{\varepsilon}
\newcommand{\out}{y}
\newcommand{\outDom}{\mathcal{Y}}
\newcommand{\alg}{M}
\newcommand{\domain}{\mathcal{X}}
\newcommand{\dataset}{\boldsymbol{s}}

\newcommand{\prob}[2]{\underset{#1}{\mathbb{P}} \left(#2 \right)}
\newcommand{\expect}[2]{\underset{#1}{\mathbb{E}} \left[#2 \right]}
\newcommand{\loss}{\ell}
\newcommand{\lossFunc}[3]{\loss\left(#1; #2, #3 \right)}
\newcommand{\lossRV}{L}
\newcommand{\lossRVfunc}[2]{\lossRV_{#1, #2}}
\newcommand{\HockeyStick}[1]{\boldsymbol{H}_{#1}}
\newcommand{\HockeyStickDiv}[3]{\HockeyStick{#1}\left(#2 ~\left\Vert~ #3 \right. \right)}
\newcommand{\Var}{\operatorname{Var}}
\newcommand{\chisq}[2]{\chi^{2}_{#1,#2}}
\newcommand{\chiVar}{\tilde V}
\newcommand{\Phibar}{\overline{\Phi}}
\newcommand{\ind}{\mathbbm{1}}
\newcommand{\Pbar}[1]{\bar{P}_{#1}}
\newcommand{\Nrm}{\mathcal{N}}
\newcommand{\rangeBnd}{{\Delta}}
\newcommand{\gridSpacing}{h}
\newcommand{\truncSet}{E}
\newcommand{\lossShift}{\alpha}
\newcommand{\tailShift}{\beta}
\newcommand{\privParam}[1]{\eps_{#1}}
\newcommand{\lossMean}[2]{\mu_{#1,#2}}
\newcommand{\lossVar}[2]{V_{#1,#2}}
\newcommand{\truncMean}[2]{\mu'_{#1,#2}}
\newcommand{\truncVar}[2]{V'_{#1,#2}}
\newcommand{\subsampVar}{W}
\newcommand{\ratio}{r}
\newcommand{\ratioFunc}[3]{\ratio\left(#1; #2, #3 \right)}
\newcommand{\ratioRV}{R}
\newcommand{\ratioRVfunc}[2]{\ratioRV_{#1, #2}}
\newcommand{\renyiOrder}{q}
\newcommand{\tcdpOrder}{\bar q}
\newif\ifshownotes
\shownotestrue
\ifshownotes
\providecommand{\vf}[1]{\textcolor{red}{[VF: #1]}}
\providecommand{\ms}[1]{\textcolor{blue}{[MS: #1]}}
\else
\providecommand{\vf}[1]{}
\providecommand{\ms}[1]{}
\fi
\newif\ifarxivonly
\arxivonlytrue

\title{Gaussian Approximation for Composition and Subsampling of Differentially Private Algorithms}
\author{
    Vitaly Feldman\\
    Apple
    \and
    Moshe Shenfeld\footnote{Work partially done while author was an intern at Apple.}\\
    The Hebrew University of Jerusalem
}
\date{}

\begin{document}
\maketitle

\begin{abstract}

The privacy analysis of differentially private stochastic gradient descent (DP-SGD) relies on bounds for the composition of subsampled mechanisms. The privacy of such a composition is bounded by that of a pair of product distributions, whose privacy loss is a sum of independent terms and has been analyzed via a variety of notions and concentration results. Yet, existing closed-form bounds either lose a logarithmic factor or require restrictive conditions on the noise scale and delicate analysis of subsampling. Further, while it is well known that sums of independent random variables are approximately Gaussian, existing CLT-based techniques give an approximation with a large error in $\delta$.

We show that the privacy loss of this pair is approximately Gaussian as long as each step has a small privacy loss except with small probability. Specifically, its privacy loss distribution is close to a Gaussian with the same mean and variance, after truncating the rare large privacy losses. For an additive error $\tailShift$ in $\delta$, the closeness in both directions is up to a shift of $\eps$ that grows only as $\ln(1/\tailShift)$. This shift is a lower-order term in typical applications. In particular, the composition satisfies Gaussian differential privacy up to a lower-order shift of $\eps$ and the additive error $\tailShift$. We also give a simple way to analyze the resulting variance-based bounds for subsampled mechanisms which relies on the $\chi^2$ divergence of a single step. For the Gaussian mechanism at sampling rate $1/t$ and $t>\ln^{3}(1/\delta)$, this gives the predicted rate for $\sigma\gtrsim\max\left\{1,\sqrt{\ln(1/\delta)}/\ln(t)\right\}$, whereas the standard R\'enyi-divergence analyses need $\sigma\gtrsim\sqrt{t\ln(1/\delta)}/\ln(t)$. We extend these approximation and asymptotic results to random allocation (balls-and-bins sampling). As another application, we show that, for bounded privacy losses, a tight Gaussian approximation of the type we give explains the empirically observed high accuracy of the connect-the-dots discretization of privacy loss distributions.
\end{abstract}

\section{Introduction}\label{sec:intro}
The privacy analysis of multi-step differentially private (DP) algorithms relies on composition of privacy losses of individual steps. For efficiency purposes each step is often used on a random subsample of the data with the analysis relying on privacy amplification by subsampling. In differentially private stochastic gradient descent (DP-SGD) \citep{SCS13,BST14,ACGMMT16}, for example, each step releases the sum of the clipped gradients of a batch with Gaussian noise of scale $\sigma$. It is typically analyzed with \emph{Poisson subsampling} in which each element joins each batch independently with probability $\lambda$. The privacy of $t$ steps is then upper bounded by that of a pair of $t$-fold product distributions \citep{DRS22, ZDW22} and the privacy loss of such a pair is a sum of $t$ independent random variables. Such sums have been analyzed through concentration inequalities \citep{DRV10,CR21} and through several privacy notions, such as R\'enyi differential privacy \citep{Mironov17}, concentrated differential privacy \citep{DR16} and its variants \citep{BS16,BDRS18}, and Gaussian differential privacy (GDP) \citep{DRS22}.

For $\sigma\ge1$ the variance of this sum predicts $(\eps,\delta)$-DP with $\eps$ of order $\lambda\sqrt{t\ln(1/\delta)}/\sigma$, which we call the \emph{central limit rate}. Several closed-form bounds at this rate are known and used in theoretical analyses of private learning \citep{BFTT19,BFGT20,Steinke25}.
However, the known closed-form bounds give the central limit rate only in restricted regimes, and the black-box ones lose additional factors. If each step is shown to be $(\eps_0,\delta_0)$-DP, amplified by subsampling and then combined by advanced composition \citep{KLNRS08,DRV10}, the resulting bound exceeds the central limit rate by a factor $\sqrt{\ln(1/\delta)}$ at $\lambda=1/t$. The moments accountant \citep{ACGMMT16} and its refinement for the sampled Gaussian mechanism \citep{MTZ19}, both based on R\'enyi differential privacy \citep{Mironov17}, remove this factor through a per-step bound $O(\lambda^2\renyiOrder/\sigma^2)$ on the R\'enyi divergence of order $\renyiOrder$. This bound holds only for orders up to about $\sigma^2\ln(1/\lambda)$ \citep[Lemma~3]{ACGMMT16}, \citep{MTZ19}, whereas the conversion to $(\eps,\delta)$-DP at the central limit rate uses orders of about $\ln(1/\delta)/\eps$. For $\lambda=1/t$ the two requirements are compatible only if $\sigma\gtrsim\sqrt{t\ln(1/\delta)}/\ln(t)$. Practical applications typically rely on $\sigma$ being a small constant, and thus these closed-form bounds do not give the central limit rate for such $\sigma$.

Further, central limit theorems for the privacy loss \citep{SMM19,DRS22} predict that the composition is approximately Gaussian differentially private. Yet, their quantitative (Berry--Esseen) form \citep[Theorem~3.4]{DRS22} has an additive error in $\delta$, of order $1/\sqrt t$ at $\lambda=1/t$. Thus it implies the central limit rate only for relatively large $\delta$.

This leads to the questions of whether the Gaussian approximation predicted by the central limit theorems holds beyond the large error in $\delta$ and whether the closed-form bounds at the central limit rate hold under mild and practically realistic conditions on $\sigma$ and $\delta$.

\subsection{Our contributions}\label{sec:contrib}
We give new and relatively simple bounds on the concentration of the privacy loss which imply that the answer to both questions is yes under mild conditions on the privacy loss of each step. If the privacy loss of each step is small except with small probability then the privacy loss of the product pair (in either direction) is close to a Gaussian random variable with the same mean and variance, after truncating the rare large privacy losses. Specifically, if the privacy loss of every step deviates from its mean by at most $\rangeBnd$, then for every $\tailShift$ the distribution function of the privacy loss of the product pair is bounded from above and from below by that of the Gaussian random variable with the same mean $\mu$ and variance $V$, up to a \emph{loss shift} of $O(\rangeBnd\ln(1/\tailShift))$ in its argument and a \emph{tail shift} $\tailShift$ in its value (\cref{thm:gauss-approx}). We say that such a privacy loss is \emph{Gaussian-concentrated}. For a pair with a Gaussian-concentrated privacy loss, the privacy profile is within the same shifts (in $\eps$ and in $\delta$) of the privacy profile of $\sqrt V$-GDP biased in $\eps$ by the excess $\mu-V/2$ of the mean over half the variance (\cref{lem:gc-profile}). Importantly, the loss shift does not depend on the number of steps $t$ and depends only logarithmically on the tail shift. Therefore, in contrast to the Berry--Esseen bounds, the approximation covers values of $\delta$ below every fixed power of $1/t$ (as in the second case of \cref{cor:gauss-leading}). For privacy losses that are bounded only with high probability we truncate the privacy loss of each step. The truncation uses part of the tail shift $\tailShift$ for the probability of the truncated events and replaces the mean and the variance by those of the truncated privacy loss.

For upper bounds on the privacy profile we can combine the excess $\mu-V/2$ of the mean with the loss shift into one shift $\lossShift$. We say that an algorithm is \emph{$(\lossShift,\tailShift)$-shifted $\sqrt V$-GDP} if its privacy profiles are at most that of $\sqrt V$-GDP shifted by $\lossShift$ in $\eps$, plus $\tailShift$ (\cref{def:shifted-gdp}). Thus an adaptive composition of steps as above is $(\mu-V/2+O(\rangeBnd\ln(1/\tailShift)),\tailShift)$-shifted $\sqrt V$-GDP. Both parts of its shift are lower-order terms in typical applications (e.g., \cref{cor:gauss-leading}). Shifted GDP is preserved by post-processing and therefore transfers from the product of the dominating pairs of the steps to their adaptive composition (although the resulting upper bounds need not be tight).
In \cref{app:notions} we compare shifted GDP with zero-concentrated differential privacy \citep{BS16}, truncated concentrated differential privacy \citep{BDRS18} and concentrated differential privacy \citep{DR16}.

\para{From Bernstein's inequality to a Gaussian coupling.}
As a warm-up, in \cref{sec:bernstein} we show that the standard Bernstein concentration inequality can be used to derive the upper tail part of our concentration result (up to a second-order factor). This bounds the privacy profile by $\delta$ at $\eps=\mu+\sqrt{2V\ln(1/\delta)}+O(\rangeBnd\ln(1/\delta))$, where $\mu$ is the mean of the total privacy loss and $\rangeBnd$ bounds the deviation of each privacy loss above its mean (\cref{thm:bernstein-bounded}). After truncation, this is essentially all we need for upper bounds at the central limit rate (\cref{thm:subsampled}). A tail inequality cannot, however, bound a privacy profile from below. For the two-sided result we couple the total privacy loss with a Gaussian random variable \citep{Sakhanenko96,Zaitsev25}. If $\rangeBnd$ also bounds the deviation of each privacy loss below its mean, the error of this coupling is $O(\rangeBnd\ln(1/\tailShift))$ except with probability $\tailShift$, whatever the number of steps (\cref{lem:coupling}).

\para{Subsampling through the $\chi^2$ divergence.}
In \cref{sec:chi2-subsampling} we show a way to simplify the resulting concentration bounds by using the $\chi^2$ divergence $\chiVar\coloneqq\chisq{P}{Q}$ of the pair $(P,Q)$ that dominates one step and a tail condition. This result is based on the observation that the $\chi^2$ divergence bounds the mean and the variance of the privacy loss (\cref{lem:chi2-moments}). In addition, subsampling at rate $\lambda$ multiplies the $\chi^2$ divergence by exactly $\lambda^2$ (in the add direction, by at most $\lambda^2/(1-\lambda)$).
Together with the basic Bernstein concentration inequality, this gives $\eps=O\left(t\lambda^2\chiVar+\lambda\sqrt{t\chiVar\ln(1/\delta)}\right)$ for $t$ subsampled steps (\cref{thm:subsampled}). Using the Gaussian concentration, we also show that $t$ subsampled steps are shifted $\sqrt{\subsampVar}$-GDP for $\subsampVar=t\lambda^2\chiVar/(1-\lambda)$
(\cref{cor:subsampled-approx}). Further, we show that the upper bounds on the mean and the variance in both the add and the remove direction in terms of $\chiVar$ are tight up to an $O(\lambda)$ fraction for sufficiently well-concentrated losses (\cref{cor:subsampled-lower}). This makes $\chiVar$ a particularly convenient single proxy for understanding the privacy profile of subsampled compositions.

\para{The subsampled Gaussian mechanism.}
In \cref{sec:gauss-two-sided} we apply the bounds for composition of subsampled algorithms to the Gaussian noise addition mechanism on which the analysis of DP-SGD relies. Our bounds give explicit and nearly tight conditions on $\sigma$, $t$ and $\delta$, under which $t$ steps at sampling rate $\lambda$ satisfy $\eps=O\left(t\lambda^2/\sigma^2+\lambda\sqrt{t\ln(1/\delta)}/\sigma\right)$ (\cref{eq:gauss-conditions}). For example, for all sufficiently large $t$ the condition holds when $\sigma\gtrsim\max\left\{1,\sqrt{\ln(1/\delta)}/\ln(t)\right\}$ and $t>\ln^{3}(1/\delta)$, and for a fixed $\sigma\ge1$ it holds for all $\delta\ge\exp(-\Omega(\sigma^2\ln^2(t)))$. For comparison, we note that at $\lambda=1/t$ our results hold even for $\sigma$ smaller by a factor of at least order $\sqrt t/\ln(t)$ than that of the closed-form analyses based on R\'enyi divergence. Indeed, these analyses bound exponential moments of high order of the privacy loss, whereas we need only its first two moments and a tail bound (see \cref{sec:comparison} for a more detailed comparison). Further, the likelihood ratio of the Gaussian mechanism is within $\rangeBnd$ of $1$ except with probability at most $\tailShift/(2t)$, for some $\rangeBnd$ with $\ln(1+\rangeBnd)$ of order $\sqrt{\ln(t/\tailShift)}/\sigma$ (\cref{eq:gauss-cutoff}). Thus DP-SGD is shifted $\sqrt{\subsampVar}$-GDP for $\subsampVar=t\lambda^2(e^{1/\sigma^2}-1)/(1-\lambda)$. The shift is of lower order when $\rangeBnd\sqrt{\ln(1/\tailShift)}$ is small compared with $\sqrt t/\sigma$ (\cref{cor:gauss-poisson}).

\para{Random allocation.}
In $1$-out-of-$t$ \emph{random allocation} (also known as balls-and-bins sampling), every element is used in exactly one step, chosen independently and uniformly at random \citep{CGHLKKMSZ25,CCGHST25}. Its privacy loss is not a sum of independent terms. \citet{FS25ra} analyze it through a reduction to Poisson subsampling. The resulting bound for the Gaussian mechanism therefore inherits the assumption on $\sigma$ of the analysis of Poisson subsampling, and the reduction needs the additional assumption $\sigma\gtrsim\sqrt{\ln(t/\delta)}$.

Our Gaussian approximation in terms of $\chiVar=\chisq{P}{Q}$ extends to random allocation even more directly than to Poisson subsampling. The likelihood ratio of the dominating pair of random allocation \citep{FS25ra} is an average of $t$ independent one-step likelihood ratios, each with mean $1$ and variance $\chiVar$. In \cref{sec:alloc} we couple this average, rather than its logarithm, with a Gaussian random variable with the same mean and variance. We show that if each one-step likelihood ratio is within $\rangeBnd$ of $1$ and $(\rangeBnd/t)\ln(1/\tailShift)$ is sufficiently small, then the privacy losses of the pair are Gaussian-concentrated with mean $\chiVar/(2t)$ and variance $\chiVar/t$ (as for $\sqrt{\chiVar/t}$-GDP). The tail shift is $\tailShift$ and the loss shift is of order $(\rangeBnd/t)\ln(1/\tailShift)$ (\cref{thm:ra-gauss}). For the Gaussian mechanism, random allocation is shifted $\sqrt{\chiVar/t}$-GDP, with a parameter slightly smaller than that of Poisson subsampling at rate $1/t$. Thus we obtain the central limit rate under a condition of the same form as for Poisson subsampling, with no additional assumption. We also obtain the same asymptotic upper bounds (\cref{cor:gauss-ra,cor:gauss-ra-closed-form,cor:gauss-leading-ra}).

\para{Connect-the-dots discretization.}
As an additional application of our techniques we consider the connect-the-dots (CtD) discretization \citep{DGKKM22}, which Google's \texttt{dp\_accounting} library \citep{GoogleDPAccounting} uses by default for the privacy loss distributions (PLDs) of continuous mechanisms. Its authors found that it ``can work with much larger discretization intervals while keeping a similar error bound compared to previous approaches'' such as privacy buckets \citep{MM18}, and asked ``where these improvements come from'' \citep[Section~7]{DGKKM22} (see \Cref{fig:ctd-gauss} for a numerical comparison). A single discretized step can have $\eps$ larger by almost the grid spacing $\gridSpacing$ (\cref{clm:ctd-rr}). A simple analysis thus bounds the increase of $\eps$ for the $t$-step composition only by $t\gridSpacing$. The error analysis in \citep[Lemma~5.3]{GLW21}, based on a coupling argument, gives an upper bound of $\tilde O\left(\sqrt t\,\gridSpacing\right)$, and it applies to connect-the-dots discretization.
Our tight bounds imply that under composition the privacy profile is determined by the mean and the variance of the privacy loss up to the shifts of the Gaussian concentration. We show that the discretization changes the mean and the variance only by $O(\gridSpacing^2)$ per step (\cref{lem:ctd-facts}). Thus, for bounded privacy losses, it increases the $\eps$ of the $t$-step composition by $O(t\gridSpacing^2)$, up to logarithmic factors and the accuracy of the Gaussian concentration (\cref{cor:ctd-error}). This bound improves quadratically on that in \citep{GLW21}.

\subsection{Related work}\label{sec:related}
Composition is the most basic and the most extensively studied property of differential privacy \citep{DR14}. Advanced composition \citep{DRV10} shows that the privacy parameter of $t$ steps grows as $\sqrt t$ rather than as $t$, and the optimal composition of $(\eps_0,\delta_0)$-DP steps was characterized in \citep{KOV15}. Concentrated differential privacy \citep{DR16} and its variants \citep{BS16,BDRS18}, R\'enyi differential privacy \citep{Mironov17} and Gaussian differential privacy \citep{DRS22} describe privacy by quantities that compose simply. We refer to \citep{Steinke25} for a detailed survey.

The analysis of DP-SGD \citep{SCS13,BST14,ACGMMT16}, which combines privacy amplification by subsampling \citep{KLNRS08,BBG18} with composition over many steps, has driven the development of both the theory and the numerical methods. The moments accountant \citep{ACGMMT16} and the analyses that followed it \citep{MTZ19,ZW19} bound the R\'enyi divergences of the subsampled Gaussian mechanism. Numerical accountants compute the privacy profile of the composition from the PLDs of the steps \citep{MM18,KJH20,KJPH21,KH21,GLW21,DGKKM22}, and dominating pairs \citep{ZDW22} reduce adaptive compositions to products of distributions. Central limit theorems for the privacy loss go back to \citep{SMM19}. \citet[Theorem~3.5]{DRS22} prove that the composition of many mechanisms that are each close to perfect privacy is asymptotically GDP (see also \citep{BDLS20}). However, \citet{GLW21} point out that the GDP accountant can significantly underreport the true $\eps$ of DP-SGD. Refined approximations use Edgeworth expansions \citep{ZDLS20,WGZSSW26} or the saddle-point method \citep{AGACKS23}. We compare these approaches with ours in \cref{sec:comparison}.

Our approach is closer in spirit to concentration-based analyses of composition, such as the proof of advanced composition by Azuma's inequality \citep{DRV10}, concentrated differential privacy \citep{DR16,BS16}, and martingale inequalities for privacy losses with subgaussian increments \citep{CR21}. Such subgaussian bounds, however, do not capture amplification by subsampling (\cref{app:notions}).

Practical implementations of DP-SGD usually process the data in passes in which every element is used once, whereas the privacy analysis assumes Poisson subsampling, in which batch sizes vary and an element can be used in any number of steps. In random allocation or \emph{balls-and-bins} sampling every element is used in exactly one of the $t$ steps, chosen uniformly at random. Random allocation was proposed for DP-SGD \citep{CGHLKKMSZ25}, for matrix mechanisms \citep{CCGHST25} and for private low-communication distributed mean estimation \citep{AFKRT26}. \citet{FS25ra} gave its dominating pair, from which our analysis starts. Several theoretical analyses, including that of its R\'enyi divergence, were given in \citep{FS25ra,DCO25,SK26}. \citet{FS26pld} showed that its PLD can be computed efficiently and accurately, and \citet{DO26} compared the sampling schemes asymptotically.

\section{Preliminaries}\label{sec:prelim}
We denote the domain of elements by $\domain$ and the set of outputs by $\outDom$. An algorithm $\alg$ maps every dataset $\dataset \in \domain^{*}$ to a distribution over $\outDom$. For a distribution $P$ over $\outDom$ we write $P(\omega)$ for the probability of $\omega$ or, in the continuous case, for its density with respect to a fixed dominating measure. All spaces are standard Borel and all randomized mappings are measurable.

\subsection{Privacy losses and dominating pairs}\label{sec:prelim-pairs}
We augment the domain to $\domain_\bot\coloneqq\domain\cup\{\bot\}$ with a null element $\bot$ such that $\alg(\dataset, \bot) = \alg(\dataset)$ for all $\dataset \in (\domain_\bot)^*$. We call two datasets in $(\domain_\bot)^*$ \emph{adjacent} if one is obtained from the other by replacing a single element by $\bot$ (zero-out adjacency). We write their induced output distributions as $\alg(\dataset,x)$ and $\alg(\dataset,\bot)$, where $x\in\domain$ is the \emph{distinguished} element.

\begin{definition}[Privacy loss and hockey-stick divergence \citep{BKOZB12,DR16}]\label{def:hs}
Let $P$ and $Q$ be distributions over $\outDom$. The \emph{privacy loss} of $\omega\in\outDom$ is $\lossFunc{\omega}{P}{Q}\coloneqq\ln(P(\omega)/Q(\omega))$, which is $+\infty$ when $Q(\omega)=0$. The \emph{privacy loss random variable} $\lossRVfunc{P}{Q}$ is $\lossFunc{\omega}{P}{Q}$ for $\omega\sim P$, and its distribution is the \emph{privacy loss distribution (PLD)}. Denoting $[\,\cdot\,]_+\coloneqq\max\{\cdot\,,0\}$, for any $\kappa\ge0$, the \emph{$\kappa$-hockey-stick divergence} is
\[
 \HockeyStickDiv{\kappa}{P}{Q}\coloneqq\int_{\outDom}\left[P(\omega)-\kappa Q(\omega)\right]_+d\omega
 =\expect{}{\left[1-\kappa e^{-\lossRVfunc{P}{Q}}\right]_+}.
\]
\end{definition}
The hockey-stick divergence of a pair depends only on the distribution of its privacy loss. Following \citep[Definition~2.2]{FS26pld}, for a random variable $X$ with values in $[-\infty,\infty]$ and $\kappa\ge0$ we write $\HockeyStick{\kappa}(X)\coloneqq\expect{}{[1-\kappa e^{-X}]_+}$, so that $\HockeyStickDiv{\kappa}{P}{Q}=\HockeyStick{\kappa}(\lossRVfunc{P}{Q})$ by \cref{def:hs}. For every $l \in\reals$ we have that $\HockeyStick{e^{\eps}}(X+l)=\HockeyStick{e^{\eps-l}}(X)$.

We denote the mean and the variance of $\lossRVfunc{P}{Q}$ by $\lossMean{P}{Q}$ and $\lossVar{P}{Q}$.
For a set $\truncSet$, we denote by $\truncMean{P}{Q}$ and $\truncVar{P}{Q}$ the mean and the variance of the \emph{truncated privacy loss} $\lossFunc{\omega}{P}{Q}\,\ind\{\omega\in\truncSet\}$ for $\omega\sim P$, which is set to zero outside $\truncSet$; the set is specified wherever these moments appear.

We say that distributions $P$ and $Q$ are \emph{$(\eps,\delta)$-close} if $\HockeyStickDiv{e^{\eps}}{P}{Q}\le\delta$ and $\HockeyStickDiv{e^{\eps}}{Q}{P}\le\delta$. An algorithm is \emph{$(\eps,\delta)$-differentially private (DP)} \citep{DKMMN06} if for all adjacent datasets their induced output distributions are $(\eps,\delta)$-close. We refer to the case where the first dataset has $\bot$ in place of the distinguished element as the \emph{add} direction and to the reverse as the \emph{remove} direction \citep{FS25ra}. This definition is closely-related to differential privacy with respect to adding or removing an element (with $\bot$ corresponding to removal). The \emph{privacy profile} of a pair $(P,Q)$ is the function $\eps\mapsto\HockeyStickDiv{e^{\eps}}{P}{Q}$ \citep{BBG18}. We denote by $\privParam{P,Q}(\delta)$ the smallest $\eps\ge0$ for which $P$ and $Q$ are $(\eps,\delta)$-close, and by $\privParam{\alg}(\delta)$ the smallest $\eps\ge0$ for which an algorithm $\alg$ is $(\eps,\delta)$-DP.

For $\kappa=e^{\eps}$, the random variable $[1-e^{\eps-\lossRVfunc{P}{Q}}]_+$ in \cref{def:hs} is at most the indicator of the event $\{\lossRVfunc{P}{Q}>\eps\}$. Therefore we have the \emph{tail bound}
\begin{equation}\label{eq:tail-bound}
 \HockeyStickDiv{e^{\eps}}{P}{Q}\le\prob{}{\lossRVfunc{P}{Q}>\eps} .
\end{equation}
For $P$ absolutely continuous with respect to $Q$ we denote the likelihood ratio by $\ratioFunc{\omega}{P}{Q}\coloneqq e^{\lossFunc{\omega}{P}{Q}} = P(\omega)/Q(\omega)$, and by $\ratioRVfunc{P}{Q}\coloneqq e^{\lossRVfunc{P}{Q}}$ its value for $\omega\sim P$. For $\omega\sim Q$ the likelihood ratio $\ratioFunc{\omega}{P}{Q}=e^{-\lossFunc{\omega}{Q}{P}}$ is distributed as $1/\ratioRVfunc{Q}{P}$, it has mean $1$, and its variance is the \emph{$\chi^2$ divergence}
\[
 \chisq{P}{Q}\coloneqq\expect{\omega\sim Q}{(\ratioFunc{\omega}{P}{Q}-1)^2}=\expect{\omega\sim Q}{\ratioFunc{\omega}{P}{Q}^2}-1.
\]

\begin{definition}[Dominating pair \citep{ZDW22}]\label{def:dom}
A pair of distributions $(P,Q)$ \emph{dominates} a pair $(P',Q')$ if $\HockeyStickDiv{\kappa}{P'}{Q'}\le\HockeyStickDiv{\kappa}{P}{Q}$ for every $\kappa\ge0$. It \emph{dominates} an algorithm $\alg$ if it dominates $(\alg(\dataset,x),\alg(\dataset,\bot))$ for all adjacent datasets $(\dataset,x)$ and $(\dataset,\bot)$.
\end{definition}
We use the following characterization of domination by post-processing.
\begin{lemma}[{\citep[Theorem~2.5]{KOV15}}]\label{lem:kov}
The pair $(P,Q)$ dominates $(P',Q')$ if and only if there is a randomized mapping $\varphi$ with $\varphi(P)=P'$ and $\varphi(Q)=Q'$.
\end{lemma}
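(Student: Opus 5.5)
The proof has two directions. The \emph{if} direction is a data-processing inequality for the hockey-stick divergence. The \emph{only if} direction is a form of Blackwell's comparison theorem for binary experiments, which I would prove by reducing to likelihood ratios and applying Strassen's theorem on the convex order.

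\emph{If.} Let $\varphi$ be a Markov kernel with $\varphi(P)=P'$ and $\varphi(Q)=Q'$, and write $\varphi(A\mid\omega)\in[0,1]$. First I use the variational form $\HockeyStickDiv{\kappa}{P}{Q}=\sup_A\left(P(A)-\kappa Q(A)\right)$. Then for every measurable $A$,
\[
P'(A)-\kappa Q'(A)=\int\varphi(A\mid\omega)\left(P(\omega)-\kappa Q(\omega)\right)d\omega\le\int\left[P(\omega)-\kappa Q(\omega)\right]_+d\omega .
\]
Taking the supremum over $A$ gives $\HockeyStickDiv{\kappa}{P'}{Q'}\le\HockeyStickDiv{\kappa}{P}{Q}$ for every $\kappa\ge0$.

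\emph{Only if.} To avoid infinite likelihood ratios, I work with the normalized statistic $T=\frac{dP}{d(P+Q)}\in[0,1]$ under the base measure $\nu=(P+Q)/2$. Let $\mu$ be the law of $T$ under $\nu$; it has mean $1/2$. Since $dP=2T\,d\nu$ and $dQ=2(1-T)\,d\nu$,
\[
\HockeyStickDiv{\kappa}{P}{Q}=2\,\mathbb{E}_\mu\left[(1+\kappa)T-\kappa\right]_+ .
\]
Define $T'$ and $\mu'$ in the same way for $(P',Q')$. Every function $t\mapsto[t-c]_+$ with $c\in[0,1]$ arises from some $\kappa=c/(1-c)$; the case $c=1$ is a limit. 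These functions together with affine functions span all convex functions on $[0,1]$. Since $\mu$ and $\mu'$ have equal means, domination is therefore equivalent to the convex order $\mu'\le_{cx}\mu$. Strassen's theorem then gives a coupling $(T',T)$ with $T'\sim\mu'$, $T\sim\mu$ and $\mathbb{E}[T\mid T']=T'$. Let $K(t,\cdot)$ be the conditional law of $T'$ given $T=t$.

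Next I verify that $K$ maps the laws of $T$ under $P$ and $Q$ to those of $T'$ under $P'$ and $Q'$. Under $P$, the law of $T$ is $2t\,d\mu(t)$. Pushing it through $K$ gives the law $2\,\mathbb{E}[T\mid T'=t']\,d\mu'(t')=2t'\,d\mu'(t')$, which is the law of $T'$ under $P'$. The same computation with $1-t$ handles $Q$. To finish, I lift $T'$ back to $\outDom'$. Because $T'$ is sufficient for $(P',Q')$, the conditional law of $\omega'$ given $T'$ is the same under $P'$ and $Q'$; with standard Borel spaces this conditional law exists as a kernel. The final kernel $\varphi$ computes $T(\omega)$, draws $T'$ from $K(T(\omega),\cdot)$, and then draws $\omega'$ from this conditional law.

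The main obstacle is the reduction to the convex order. It requires checking that the family $\{[t-c]_+\}$ together with the mean constraint determines the convex order on $[0,1]$, including the endpoint $c=1$, which corresponds to the singular part of $P$. It also requires invoking Strassen's theorem in the orientation that yields a backward kernel from $T$ to $T'$. I expect the measurability and sufficiency details to be routine given the standing assumption that all spaces are standard Borel.
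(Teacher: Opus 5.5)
The paper gives no proof of this lemma. It imports it from \citep[Theorem~2.5]{KOV15}, where it appears as a restatement of Blackwell's comparison theorem for binary experiments, phrased through the hypothesis-testing (privacy) regions of the two pairs.

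Your argument is a correct, self-contained proof by a different route, namely the classical ``standard experiment'' proof of Blackwell's theorem.
\begin{itemize}
\item The data-processing direction is the standard one.
\item For the converse, you pass to the law $\mu$ of $T=dP/d(P+Q)$ under $(P+Q)/2$. You write $\HockeyStickDiv{\kappa}{P}{Q}=2\,\mathbb{E}_\mu[(1+\kappa)T-\kappa]_+$ as a stop-loss transform at $c=\kappa/(1+\kappa)\in[0,1)$. You then take the garbling to be the backward kernel of a Strassen martingale coupling, followed by the sufficient-statistic lift.
\end{itemize}
The steps check out. This includes the push-forward identity $2\,\mathbb{E}[T\mid T'=t']\,d\mu'(t')=2t'\,d\mu'(t')$. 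It also includes the fact that $K$ need only be defined $\mu$-almost everywhere, because the laws of $T$ under $P$ and $Q$ have densities $2t$ and $2(1-t)$ with respect to $\mu$.

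The obstacle you flag is milder than you expect. For $[0,1]$-valued variables with common mean $1/2$, the transforms $\mathbb{E}[T-c]_+$ coincide automatically for $c\le0$ and vanish for $c\ge1$. Hence the usual stop-loss characterization of the convex order on $\mathbb{R}$ needs only $c\in(0,1)$. The atom at $t=1$, which is the singular part of $P$, is then handled by the martingale coupling itself: $T'=1$ forces $T=1$.

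What your route buys over the citation:
\begin{itemize}
\item It proves the statement directly for the paper's one-sided notion of domination in \cref{def:dom}.
\item It covers pairs that are not mutually absolutely continuous at no extra cost.
\item It makes explicit that domination is exactly the convex order of the likelihood-ratio laws.
\end{itemize}
The citation buys brevity. Its cost is a small translation between privacy-region inclusion and the family $\{\HockeyStickDiv{\kappa}{P}{Q}\}_{\kappa\ge0}$, for example via $\HockeyStickDiv{\kappa}{Q}{P}=1-\kappa+\kappa\,\HockeyStickDiv{1/\kappa}{P}{Q}$ for $\kappa>0$.
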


In the \emph{$t$-step adaptive composition} of algorithms $\alg_1,\ldots,\alg_t$, the $i$-th algorithm also receives the outputs $\out_1,\ldots,\out_{i-1}$ of the previous steps, and the composition outputs $(\out_1,\ldots,\out_t)$, where $\out_i$ is drawn from the output distribution of $\alg_i(\cdot\,;\out_1,\ldots,\out_{i-1})$ on the dataset. For $\lambda\in[0,1]$, the \emph{$\lambda$-subsampled} algorithm $\alg_\lambda$ runs $\alg$ on the dataset in which every element is independently replaced by $\bot$ with probability $1-\lambda$ (Poisson subsampling); in a composition of subsampled algorithms, the subsampling is independent at every step. For distributions $P$ and $Q$ we similarly write $P_\lambda\coloneqq(1-\lambda)Q+\lambda P$, and we denote by $Q^t$ the product of $t$ copies of $Q$ (and use the same notation for other distributions). We use the following standard forms of composition and subsampling for dominating pairs.
\begin{lemma}[Composition and subsampling {\citep[Theorems~10 and~11(1)]{ZDW22}}]\label{lem:dom-comp-sub}
Let $t\in\naturals$, and let $\alg_1,\ldots,\alg_t$ be algorithms, where $\alg_i$ also receives the outputs of the previous steps.
\begin{enumerate}
 \item (Composition) If for every $i\in[t]$ and all outputs $\out_1,\ldots,\out_{i-1}$ the pair $(P_i,Q_i)$ dominates $\alg_i(\cdot\,;\out_1,\ldots,\out_{i-1})$, then the pair $(P_1\times\cdots\times P_t,\,Q_1\times\cdots\times Q_t)$ dominates the $t$-step adaptive composition of $\alg_1,\ldots,\alg_t$.
 \item (Subsampling) If $(P,Q)$ dominates an algorithm $\alg$, then for every $\lambda\in[0,1]$ the pair $(P_\lambda,Q)$ dominates $\alg_\lambda$.
\end{enumerate}
\end{lemma}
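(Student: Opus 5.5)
The plan is to prove both parts through the post-processing characterization of \cref{lem:kov}, together with its easy direction: if $\varphi(P)=P'$ and $\varphi(Q)=Q'$, then $\HockeyStickDiv{\kappa}{P'}{Q'}\le\HockeyStickDiv{\kappa}{P}{Q}$ for every $\kappa\ge0$. This is the data-processing inequality for the convex function $u\mapsto[u-\kappa]_+$. In both parts I fix a pair of adjacent datasets $(\dataset,x)$ and $(\dataset,\bot)$, construct a randomized mapping from the dominating pair to the pair of output distributions, and then conclude by data processing.

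\textbf{Composition.} By \cref{lem:kov}, for every $i$ and every prefix $\out_{<i}=(\out_1,\ldots,\out_{i-1})$ there is a randomized mapping $\varphi_i^{\out_{<i}}$ with
\[
\varphi_i^{\out_{<i}}(P_i)=\alg_i(\dataset,x;\out_{<i}),\qquad \varphi_i^{\out_{<i}}(Q_i)=\alg_i(\dataset,\bot;\out_{<i}).
\]
Define $\Phi$ on $(z_1,\ldots,z_t)$ sequentially by setting $\out_i\sim\varphi_i^{\out_{<i}}(z_i)$, and output $(\out_1,\ldots,\out_t)$. I then check by induction on $i$ that, when $(z_1,\ldots,z_t)\sim P_1\times\cdots\times P_t$, the joint law of $(\out_1,\ldots,\out_i)$ equals that of the first $i$ outputs of the adaptive composition on $(\dataset,x)$. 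The inductive step holds because $z_i$ is independent of $z_{<i}$, so conditioned on $\out_{<i}$ we have $\out_i\sim\varphi_i^{\out_{<i}}(P_i)$. The same argument with the $Q_i$ handles $(\dataset,\bot)$, and data processing finishes this part.

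\textbf{Subsampling.} Let $\dataset'$ denote the Poisson subsample of $\dataset$, and let $A_{\dataset'}\coloneqq\alg(\dataset',x)$ and $B_{\dataset'}\coloneqq\alg(\dataset',\bot)$. Since $x$ is kept independently with probability $\lambda$, the two output distributions are
\[
\alg_\lambda(\dataset,x)=\mathbb{E}_{\dataset'}\bigl[\lambda A_{\dataset'}+(1-\lambda)B_{\dataset'}\bigr],\qquad \alg_\lambda(\dataset,\bot)=\mathbb{E}_{\dataset'}\bigl[B_{\dataset'}\bigr].
\]
For each $\dataset'$, \cref{lem:kov} gives $\varphi_{\dataset'}$ with $\varphi_{\dataset'}(P)=A_{\dataset'}$ and $\varphi_{\dataset'}(Q)=B_{\dataset'}$. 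By linearity of a Markov kernel, $\varphi_{\dataset'}(P_\lambda)=\lambda A_{\dataset'}+(1-\lambda)B_{\dataset'}$. Hence $\HockeyStickDiv{\kappa}{\lambda A_{\dataset'}+(1-\lambda)B_{\dataset'}}{B_{\dataset'}}\le\HockeyStickDiv{\kappa}{P_\lambda}{Q}$ for each $\dataset'$.

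To combine over $\dataset'$, I will use joint convexity of the hockey-stick divergence: $\HockeyStickDiv{\kappa}{\sum_j w_jA_j}{\sum_j w_jB_j}\le\sum_j w_j\HockeyStickDiv{\kappa}{A_j}{B_j}$. This is immediate because $[\cdot]_+$ is subadditive and positively homogeneous. Alternatively, I can define a single mapping $\psi$ that samples $\dataset'$ and then applies $\varphi_{\dataset'}$. Either route gives domination of $\alg_\lambda$ by $(P_\lambda,Q)$.

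\textbf{Main obstacle.} The only delicate point is measurability. In the composition the kernels $\varphi_i^{\out_{<i}}$ must depend measurably on the prefix, or $\Phi$ is not a well-defined randomized mapping. I would handle this with the standard Borel assumption. One way is to use an explicit kernel construction (e.g.\ through the likelihood-ratio/PLD representation, which depends measurably on the conditional output distributions), or to invoke a measurable-selection theorem. Another option avoids selections altogether by proving the composition statement directly via the PLD. Specifically, the privacy loss of the composed outputs is a sum whose conditional increments are each dominated; domination of each conditional pair is equivalent to an ordering of the functions $\kappa\mapsto\HockeyStick{\kappa}$, and this ordering is preserved under the needed convolution-type mixing. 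For subsampling, the joint-convexity route already sidesteps the issue, since the set of possible subsamples is finite.
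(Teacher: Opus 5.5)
The paper gives no proof of this lemma; it imports it from \citep{ZDW22}. Your argument is correct. For composition, however, your route differs from the standard direct proof, and the difference is exactly the step you flag. \cref{lem:kov} gives a kernel for each prefix separately. Choosing $\varphi_i^{\out_{<i}}$ measurably in $\out_{<i}$ needs a genuine measurable-selection theorem; the standard Borel assumption alone does not supply it. The kernel route also gains nothing in exchange, since by \cref{lem:kov} domination and post-processing are equivalent anyway.

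I would therefore make your ``PLD'' fallback the main argument, in the following concrete form. Let $A,B$ be the laws of $(\out_1,\ldots,\out_{t-1})$ on $(\dataset,x)$ and $(\dataset,\bot)$, with densities $a,b$. Let $A_{\out},B_{\out}$, with densities $a_{\out},b_{\out}$, be the conditional laws of $\out_t$ given the prefix $\out$. Let $\bar A,\bar B$ be the laws of all $t$ outputs, and let $p_t,q_t$ be the densities of $P_t,Q_t$. Since $[a\,a_{\out}-\kappa\,b\,b_{\out}]_+=0$ where $a=0$,
\begin{align*}
\HockeyStickDiv{\kappa}{\bar A}{\bar B}&=\int_{\{a>0\}}a(\out)\,\HockeyStickDiv{\kappa b(\out)/a(\out)}{A_{\out}}{B_{\out}}\,d\out\\
&\le\int_{\{a>0\}}a(\out)\,\HockeyStickDiv{\kappa b(\out)/a(\out)}{P_t}{Q_t}\,d\out=\HockeyStickDiv{\kappa}{A\times P_t}{B\times Q_t}.
\end{align*}
Conditioning instead on the last coordinate gives
\[
\HockeyStickDiv{\kappa}{A\times P_t}{B\times Q_t}=\int_{\{p_t>0\}}p_t(z)\,\HockeyStickDiv{\kappa q_t(z)/p_t(z)}{A}{B}\,dz .
\]
The induction hypothesis for the $(t-1)$-step composition $(A,B)$ then finishes the step. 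This uses only Fubini and the measurability of $\out\mapsto\HockeyStickDiv{\kappa'}{A_{\out}}{B_{\out}}$, which follows from the measurability of the algorithms. No kernel selection is needed.

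Your subsampling argument is fine as written: there are finitely many subsamples, and it covers every $\kappa\ge0$. It does not even need the hard direction of \cref{lem:kov}. For $\lambda\in(0,1]$, $\HockeyStickDiv{\kappa}{\lambda A_{\dataset'}+(1-\lambda)B_{\dataset'}}{B_{\dataset'}}$ equals $\lambda\HockeyStickDiv{(\kappa-1+\lambda)/\lambda}{A_{\dataset'}}{B_{\dataset'}}$ for $\kappa\ge1-\lambda$, and equals $1-\kappa$ otherwise. The same identity holds for $(P_\lambda,Q)$. So domination transfers pointwise in $\kappa$ before you apply joint convexity over $\dataset'$.
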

In particular, \cref{lem:dom-comp-sub} implies that if $(P,Q)$ dominates $\alg_i(\cdot\,;\out_1,\ldots,\out_{i-1})$ for every $i\in[t]$ and all outputs $\out_1,\ldots,\out_{i-1}$, then $(P_\lambda^t,Q^t)$ dominates the $t$-step adaptive composition of the $\lambda$-subsampled algorithms.
The pair $(P,Q)$ dominates $(P',Q')$ if and only if $(Q,P)$ dominates $(Q',P')$~\citep[Lemma~28]{ZDW22}, and thus $(Q^t,P_\lambda^t)$ dominates the pairs of output distributions in the add direction.

The \emph{Gaussian mechanism} with noise scale $\sigma$ and unit norm bound, for a function $f\colon\domain_\bot\to\reals^d$, is the algorithm
\begin{equation}\label{eq:gauss-mech}
 \alg(\dataset)=\Nrm\left(\sum_{z\in\dataset}f(z),\,\sigma^2I_d\right),\qquad \|f(z)\|_2\le1,\qquad f(\bot)=0,
\end{equation}
where in an adaptive composition the function $f$ may depend on the outputs of the previous steps (for example, $f(z)$ is the clipped gradient of the loss on $z$ at the current model). We refer to $(\Nrm(1,\sigma^2),\Nrm(0,\sigma^2))$ as the \emph{Gaussian pair} and use the following standard fact.
\begin{claim}[{\citep[Lemma~3]{BW18}}]\label{clm:gauss-dom}
The Gaussian pair dominates the Gaussian mechanism.
\end{claim}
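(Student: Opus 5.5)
The plan is to reduce the claim to a statement about a single pair of distributions via \cref{def:dom}, and then to exhibit an explicit post-processing as allowed by \cref{lem:kov}. Fix any function $f$ with $\|f(z)\|_2\le1$ and $f(\bot)=0$, a dataset $\dataset$, and a distinguished element $x$. Write $c=\sum_{z\in\dataset}f(z)$ and $v=f(x)$. The two output distributions are then $\Nrm(c+v,\sigma^2I_d)$ and $\Nrm(c,\sigma^2I_d)$. It suffices to show that $(\Nrm(1,\sigma^2),\Nrm(0,\sigma^2))$ dominates this pair for every $c$ and every $v$ with $\|v\|_2\le1$. This covers the adaptive case as well, because the domination is required separately for each fixed $f$.

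First, I handle the case $v=0$. The two distributions coincide, so every hockey-stick divergence equals $[1-\kappa]_+$. This is at most $\HockeyStickDiv{\kappa}{P}{Q}$ for any pair, since $\int[P-\kappa Q]_+\ge[\int(P-\kappa Q)]_+$.

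For $v\neq0$, I construct a randomized map $\varphi\colon\reals\to\reals^d$. Let $u=v/\|v\|_2$ and $a=\|v\|_2\le1$.
\begin{enumerate}
 \item Given $y\in\reals$, set $y'=ay+\sqrt{1-a^2}\,\sigma g$ with $g\sim\Nrm(0,1)$ independent. Then $\Nrm(1,\sigma^2)$ maps to $\Nrm(a,\sigma^2)$ and $\Nrm(0,\sigma^2)$ maps to $\Nrm(0,\sigma^2)$.
 \item Output $c+y'u+\sigma w$, where $w\sim\Nrm(0,I_d)$ is projected onto the orthogonal complement of $u$.
\end{enumerate}
The image of $\Nrm(a,\sigma^2)$ is $\Nrm(c+au,\sigma^2I_d)=\Nrm(c+v,\sigma^2I_d)$, and the image of $\Nrm(0,\sigma^2)$ is $\Nrm(c,\sigma^2I_d)$. \cref{lem:kov} then gives domination.

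The only mildly delicate step is the norm reduction $a<1$. An alternative is to argue directly that the hockey-stick divergence of $(\Nrm(a,\sigma^2),\Nrm(0,\sigma^2))$ is monotone in $a$, using the closed form $\Phi(a/(2\sigma)-\sigma\eps/a)$-type expressions. The noise-adding map above avoids that computation entirely, so I expect no real obstacle.
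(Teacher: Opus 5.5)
Your proof is correct, and it takes a different route from the paper. The paper gives no argument of its own and defers to \citep[Lemma~3]{BW18}, which goes through the privacy loss distribution. The privacy loss of $(\Nrm(c+v,\sigma^2I_d),\Nrm(c,\sigma^2I_d))$ is distributed as $\Nrm(\eta,2\eta)$ with $\eta=\|v\|_2^2/(2\sigma^2)\le1/(2\sigma^2)$. Since hockey-stick divergences depend only on the privacy loss distribution, this pair has profile $g_{2\eta}$, while the Gaussian pair has profile $g_{1/\sigma^2}$. Domination then follows from the monotonicity of $g_V$ in $V$, which the preliminaries derive by the same noise-addition post-processing as your first step (your map $y\mapsto ay+\sqrt{1-a^2}\,\sigma g$ is a rescaling of it).

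You instead build the randomized map of \cref{lem:kov} explicitly: shrink and add noise, embed along $u$, add independent orthogonal noise, and translate by $c$. This avoids computing the $d$-dimensional privacy loss and the implicit reduction to one dimension. It also covers the add direction automatically, since one map sends both $P\mapsto P'$ and $Q\mapsto Q'$. The privacy-loss route gives more information: the exact privacy loss distribution of every adjacent pair, and hence that the domination is tight when $\|f(x)\|_2=1$.

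A minor simplification: the case $v=0$ need not be treated separately. With $a=0$ and an arbitrary unit vector $u$, your map already sends both inputs to $\Nrm(c,\sigma^2I_d)$.
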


\subsection{Concentration and shifted GDP}\label{sec:prelim-gauss}
We characterize the closeness of the composed mechanisms to a Gaussian in two ways, by comparing their privacy profiles to that of the Gaussian mechanism, and by comparing their PLDs to a Gaussian distribution. To do so, we denote by $\Phi$ the distribution function of the standard normal distribution, and by $\Phibar\coloneqq1-\Phi$ its tail. For every $w\ge0$ we have that $\Phibar(w)\le e^{-w^2/2}/2$. We also use the following one-sided form of Bernstein's inequality.
\begin{lemma}[Bernstein's inequality {\citep[Theorem~2.10]{BLM13}}]\label{lem:bernstein}
Let $X_1,\ldots,X_t$ be independent mean-zero random variables such that $X_i\le \rangeBnd$ almost surely and $\sum_i\expect{}{X_i^2}\le V$. Then for every $\delta \in (0,1]$,
\[
 \prob{}{\sum_{i=1}^tX_i>\sqrt{2V \ln(1/\delta)}+\frac{\rangeBnd \ln(1/\delta)}{3}}\le\delta.
\]
\end{lemma}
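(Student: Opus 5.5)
This is the standard Bernstein inequality for sums of independent random variables that are bounded above, as in \citep[Theorem~2.10]{BLM13}; I would prove it by the Cramér--Chernoff method. Write $S=\sum_i X_i$. For every $\theta>0$, Markov's inequality applied to $e^{\theta S}$ gives
\[
\prob{}{S>u}\le e^{-\theta u}\prod_{i=1}^t\expect{}{e^{\theta X_i}} ,
\]
so it suffices to bound each moment generating function using only the one-sided bound $X_i\le\rangeBnd$ and the second moment.

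\textbf{Per-term bound.} Let $\phi(x)\coloneqq(e^x-1-x)/x^2$, with $\phi(0)=1/2$. This function is nondecreasing on $\reals$. Since $\theta X_i\le\theta\rangeBnd$, we get $e^{\theta X_i}\le 1+\theta X_i+\theta^2X_i^2\phi(\theta\rangeBnd)$. Taking expectations and using that $X_i$ has mean zero, together with $1+y\le e^y$,
\[
\expect{}{e^{\theta X_i}}\le\exp\left(\theta^2\,\expect{}{X_i^2}\,\phi(\theta\rangeBnd)\right).
\]
Multiplying over $i$, the log-MGF of $S$ is at most $V\theta^2\phi(\theta\rangeBnd)$, because $\sum_i\expect{}{X_i^2}\le V$.

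\textbf{Sub-gamma form.} For $0\le\theta<3/\rangeBnd$, compare Taylor coefficients using $k!\ge 2\cdot3^{k-2}$:
\[
\phi(x)=\sum_{k\ge2}\frac{x^{k-2}}{k!}\le\frac{1}{2}\sum_{j\ge0}(x/3)^j=\frac{1}{2(1-x/3)}.
\]
Hence the log-MGF of $S$ is at most $\frac{V\theta^2}{2(1-c\theta)}$ with $c=\rangeBnd/3$, which is the sub-gamma condition with variance factor $V$ and scale $c$. If $\rangeBnd\le0$, then $\phi(\theta\rangeBnd)\le1/2$, the variable is sub-Gaussian, and the same bound holds with $c=0$.

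\textbf{Optimizing $\theta$.} I would invoke the standard sub-gamma tail computation rather than solve the Legendre transform exactly. Set $L=\ln(1/\delta)$, $u=\sqrt{2VL}+cL$, and choose
\[
\theta=\frac{\sqrt{2L/V}}{1+c\sqrt{2L/V}} .
\]
This choice satisfies $c\theta<1$. Substituting it into $\theta u-\frac{V\theta^2}{2(1-c\theta)}$, a direct computation shows the exponent is at least $L$. Therefore $\prob{}{S>u}\le\delta$.

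The only real work is this final algebraic verification. It is routine: with $a=\sqrt{2L/V}$ one has $1-c\theta=1/(1+ca)$, and the expression simplifies to $L$ exactly. Degenerate cases such as $V=0$ or $\rangeBnd\le 0$ follow by the sub-Gaussian bound or by taking limits.
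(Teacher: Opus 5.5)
Your argument is correct and is essentially the standard Cram\'er--Chernoff proof behind the cited \citep[Theorem~2.10]{BLM13}; the paper gives no proof of its own, only that citation. As there, you bound the log-MGF in sub-gamma form $V\theta^2/(2(1-\rangeBnd\theta/3))$, using the monotonicity of $(e^x-1-x)/x^2$ and $k!\ge 2\cdot 3^{k-2}$, and then apply Markov's inequality. Your explicit choice $\theta=a/(1+ca)$, which makes the exponent exactly $\ln(1/\delta)$, simply replaces the reference's inversion of the Cram\'er transform.

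One small point: your side case $\rangeBnd<0$ is vacuous, since a mean-zero variable cannot satisfy $X_i\le\rangeBnd<0$ almost surely. The $c=0$ bound you give there would not actually reach the stated threshold $\sqrt{2V\ln(1/\delta)}+\rangeBnd\ln(1/\delta)/3$ anyway.
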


For $V>0$ we call $\mathcal G_V\coloneqq(\Nrm(1,1/V),\Nrm(0,1/V))$ the \emph{Gaussian reference} with parameter $V$ and denote its privacy profile by $g_V$. Its privacy loss is distributed as $\Nrm(V/2,V)$. By \citep[Theorem~8]{BW18},
\begin{equation}\label{eq:gauss-ref}
 g_V(\eps)=\Phi\left(\frac{\sqrt V}{2}-\frac{\eps}{\sqrt V}\right)-e^{\eps}\,\Phi\left(-\frac{\sqrt V}{2}-\frac{\eps}{\sqrt V}\right),
\end{equation}
and we let $\mathcal G_0$ be a pair of identical distributions, whose privacy profile is $g_0(\eps)=[1-e^{\eps}]_+$. The reflection $\omega\mapsto1-\omega$ swaps the two distributions of $\mathcal G_V$ and therefore $g_V$ is also the privacy profile of the reversed pair. An algorithm is \emph{$\sqrt V$-Gaussian differentially private ($\sqrt V$-GDP)} if and only if it is $(\eps,g_V(\eps))$-DP for every $\eps\ge0$ \citep[Corollary~2.13]{DRS22}. For $0<V'<V$, adding independent noise $G\sim\Nrm(0,1/V'-1/V)$ maps $\mathcal G_{V}$ to $\mathcal G_{V'}$, and a constant map sends $\mathcal G_V$ to a pair of identical distributions. Therefore $\mathcal G_{V}$ dominates $\mathcal G_{V'}$ for $0\le V'<V$ by \cref{lem:kov}, that is, $g_V(\eps)$ is nondecreasing in $V$. We denote by $\eps_{\mathrm G}(V,\delta)\coloneqq\inf\{\eps\ge0:g_V(\eps)\le\delta\}$ the smallest $\eps$ for which $\sqrt V$-GDP implies $(\eps,\delta)$-DP. By the tail bound \cref{eq:tail-bound} and the bound on $\Phibar$, we have that $\eps_{\mathrm G}(V,\delta)\le V/2+\sqrt{2V\ln(1/\delta)}$ for $\delta \in (0,1)$.

For $G\sim\Nrm(\mu,V)$ the random variable $G-\mu+V/2$ is distributed as the privacy loss of $\mathcal G_V$, and therefore
\begin{equation}\label{eq:shift-identity}
 \HockeyStick{e^{\eps}}(G)=g_V\left(\eps-\mu+\frac V2\right)\qquad\text{for every }\eps\in\reals .
\end{equation}
Thus the excess $\mu-V/2$ of the mean of a Gaussian random variable over half its variance acts as a bias of $\eps$ in the privacy profile of $\sqrt V$-GDP.

We compare a privacy loss with a Gaussian random variable through their tails, using approximate stochastic domination \citep[Definition~4.1]{FS26pld}.
\begin{definition}[Gaussian concentration and approximation]\label{def:gauss-conc}
Let $\lossShift,\tailShift\ge0$. A random variable $Y$ \emph{$(\lossShift,\tailShift)$-approximately stochastically dominates} a random variable $X$, denoted by $X\preceq_{(\lossShift,\tailShift)}Y$, if
\[
 \prob{}{X>\loss}\le\prob{}{Y>\loss-\lossShift}+\tailShift\qquad\text{for every }\loss\in[-\infty,\infty] .
\]
For $\mu\in\reals$ and $V\ge0$, a random variable $X$ is \emph{$(\lossShift,\tailShift)$-approximately $(\mu,V)$-Gaussian-concentrated} if $G\preceq_{(\lossShift,\tailShift)}X\preceq_{(\lossShift,\tailShift)}G$ for $G\sim\Nrm(\mu,V)$. We refer to $\lossShift$ as the \emph{loss shift} and to $\tailShift$ as the \emph{tail shift}.
\end{definition}
Note that both relations follow from a coupling of $X$ and $G$ for which $|X-G|\le\lossShift$ except with probability $\tailShift$ (a coupling approximation in the terminology of \citep[Definition~5.1]{GLW21}). Indeed, $\{X>\loss\}\subseteq\{G>\loss-\lossShift\}\cup\{|X-G|>\lossShift\}$. Also, directly from the definition, if $X\preceq_{(\lossShift_1,\tailShift_1)}Y$ and $Y\preceq_{(\lossShift_2,\tailShift_2)}Y'$, then $X\preceq_{(\lossShift_1+\lossShift_2,\tailShift_1+\tailShift_2)}Y'$. We use the following bound on the hockey-stick divergence.

If $X\preceq_{(\lossShift,\tailShift)}Y$, then $\HockeyStick{e^{\eps}}(X)\le\HockeyStick{e^{\eps-\lossShift}}(Y)+\tailShift$ for every $\eps\in\reals$~\citep[Claim~4.2]{FS26pld}.
Applying this fact in both directions to a privacy loss and a Gaussian random variable and combining it with \cref{eq:shift-identity}, we obtain the following two-sided bound.
\begin{lemma}[Privacy profiles of Gaussian-concentrated privacy losses]\label{lem:gc-profile}
Let $(P,Q)$ be a pair of distributions whose privacy loss $\lossRVfunc{P}{Q}$ is $(\lossShift,\tailShift)$-approximately $(\mu,V)$-Gaussian-concentrated. Then for every $\eps\in\reals$,
\[
 g_V\left(\eps-\mu+\frac V2+\lossShift\right)-\tailShift\le\HockeyStickDiv{e^{\eps}}{P}{Q}\le g_V\left(\eps-\mu+\frac V2-\lossShift\right)+\tailShift .
\]
\end{lemma}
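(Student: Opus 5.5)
The plan is to combine the approximate-domination bound on hockey-stick divergences \citep[Claim~4.2]{FS26pld} with the shift identity \cref{eq:shift-identity}, applying it once in each direction of the Gaussian concentration. Write $X\coloneqq\lossRVfunc{P}{Q}$ and $G\sim\Nrm(\mu,V)$. By \cref{def:gauss-conc}, the hypothesis gives both $X\preceq_{(\lossShift,\tailShift)}G$ and $G\preceq_{(\lossShift,\tailShift)}X$. By \cref{def:hs} we also have $\HockeyStickDiv{e^{\eps}}{P}{Q}=\HockeyStick{e^{\eps}}(X)$.

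\emph{Upper bound.} From $X\preceq_{(\lossShift,\tailShift)}G$ the cited claim yields
\[
\HockeyStick{e^{\eps}}(X)\le\HockeyStick{e^{\eps-\lossShift}}(G)+\tailShift .
\]
Applying \cref{eq:shift-identity} with $\eps-\lossShift$ in place of $\eps$ gives $\HockeyStick{e^{\eps-\lossShift}}(G)=g_V(\eps-\lossShift-\mu+V/2)$, which is exactly the right-hand inequality.

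\emph{Lower bound.} From $G\preceq_{(\lossShift,\tailShift)}X$ the same claim, applied at $\eps+\lossShift$, yields
\[
\HockeyStick{e^{\eps+\lossShift}}(G)\le\HockeyStick{e^{\eps}}(X)+\tailShift .
\]
By \cref{eq:shift-identity}, the left side equals $g_V(\eps+\lossShift-\mu+V/2)$. Rearranging gives the left-hand inequality.

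The only point needing care is that both the claim and the identity are used for arbitrary real arguments. The claim is stated for every $\eps\in\reals$, and \cref{eq:shift-identity} holds for every real $\eps$, so that is fine. One should also check that $g_V$ is read as the function defined in \cref{eq:gauss-ref}, extended to negative arguments. Since $g_V$ is the privacy profile of $\mathcal G_V$, i.e.\ $g_V(\eps)=\HockeyStick{e^{\eps}}(\Nrm(V/2,V))$, this extension is automatic.

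The degenerate case $V=0$ also needs a check. There $G$ is the constant $\mu$, and $g_0(\eps)=[1-e^{\eps}]_+=\HockeyStick{e^{\eps}}(0)$, so \cref{eq:shift-identity} still holds with $g_0$. I expect no step to be a genuine obstacle; this boundary case is the only place where a short verification is needed.
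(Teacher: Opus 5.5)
Your proposal is correct and follows the same route as the paper: apply \citep[Claim~4.2]{FS26pld} to $X\preceq_{(\lossShift,\tailShift)}G$ for the upper bound and to $G\preceq_{(\lossShift,\tailShift)}X$ at $\eps+\lossShift$ for the lower bound, then evaluate the Gaussian side with \cref{eq:shift-identity}. Your checks of negative arguments of $g_V$ and of the case $V=0$ are sound, though the paper does not spell them out.
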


The upper bound of \cref{lem:gc-profile} is the privacy profile of $\sqrt V$-GDP shifted in $\eps$ and in $\delta$, forming a ``shifted'' GDP profile.

\begin{definition}[Shifted GDP]\label{def:shifted-gdp}
Let $V,\tailShift\ge0$ and $\lossShift\in\reals$. An algorithm $\alg$ is \emph{$(\lossShift,\tailShift)$-shifted $\sqrt V$-GDP} if for any $\dataset \in \domain^*, x\in\domain$ and every $\eps\in\reals$,
\[
 \max\left\{\HockeyStickDiv{e^{\eps}}{\alg(\dataset,x)}{\alg(\dataset,\bot)}, \HockeyStickDiv{e^{\eps}}{\alg(\dataset,\bot)}{\alg(\dataset,x)} \right\}\le g_V(\eps-\lossShift)+\tailShift.
\]
\end{definition}
For convenience, we also say that a pair $(P,Q)$ is $(\lossShift,\tailShift)$-shifted $\sqrt V$-GDP if the same holds for $\HockeyStickDiv{e^{\eps}}{P}{Q}$ and $\HockeyStickDiv{e^{\eps}}{Q}{P}$. Note that an algorithm is $(0,0)$-shifted $\sqrt V$-GDP if and only if it is $\sqrt V$-GDP, and that the shift $\lossShift$ can be negative when $\tailShift>0$. Since $g_V$ is continuous and nonincreasing, if $(P,Q)$ is $(\lossShift,\tailShift)$-shifted $\sqrt V$-GDP, then for every $\tailShift<\delta\le1$ the distributions $P$ and $Q$ are $(\eps,\delta)$-close for
\begin{equation}\label{eq:approx-inverse}
    \eps=\left[\eps_{\mathrm G}(V,\delta-\tailShift)+\lossShift\right]_+ .
\end{equation}

Note that Gaussian concentration and GDP differ in three aspects. Gaussian concentration is a property of the PLD, requires both upper and lower bounds, and uses an arbitrary Gaussian distribution as reference. On the other hand, GDP characterizes the privacy profile, uses only an upper bound, and uses the PLD of the Gaussian mechanism as reference, for which the mean of the PLD is half of its variance. Nevertheless, the two are closely related. By \cref{lem:gc-profile}, if the privacy loss of $(P,Q)$ is $(\lossShift,\tailShift)$-approximately $(\mu,V)$-Gaussian-concentrated, then $\HockeyStickDiv{e^{\eps}}{P}{Q}\le g_V(\eps-\mu+V/2-\lossShift)+\tailShift$ for every $\eps$. If the same holds for the privacy loss of $(Q,P)$, then $(P,Q)$ is $(\mu-V/2+\lossShift,\tailShift)$-shifted $\sqrt V$-GDP.\footnote{In fact, this holds even if only the upper bound part of the approximation holds.}

Shifted GDP is preserved by post-processing, that is, by domination (\cref{lem:kov}). Namely, if $(P,Q)$ is $(\lossShift,\tailShift)$-shifted $\sqrt V$-GDP and dominates $(P',Q')$, then $(P',Q')$ is also $(\lossShift,\tailShift)$-shifted $\sqrt V$-GDP. In particular, by \cref{lem:dom-comp-sub}, if the pair $(P^t,Q^t)$ is $(\lossShift,\tailShift)$-shifted $\sqrt V$-GDP, then so is the $t$-step adaptive composition of algorithms whose steps are dominated by $(P,Q)$ for all outputs of the previous steps. These upper bounds need not be tight. Gaussian concentration of the privacy loss of the product pair bounds its privacy profiles from both sides, whereas those of the algorithm are bounded only from above.

\section{Gaussian approximation of compositions}\label{sec:gaussapprox}
In this section we show that the privacy loss of a composition of steps with small privacy losses is Gaussian-concentrated, with the mean and the variance of its truncated privacy loss. We first derive an upper bound from Bernstein's inequality and then prove the two-sided concentration for privacy losses that are bounded except with small probability. We defer the proofs to \cref{app:gaussapprox-proofs}.

Throughout this section, $(P,Q)$ is a pair of mutually absolutely continuous distributions.
By \cref{lem:dom-comp-sub}, an upper bound on the privacy profile of $(P^t,Q^t)$ also holds for the $t$-step adaptive composition of algorithms whose steps are dominated by $(P,Q)$ for all outputs of the previous steps. The privacy loss of a product pair is the sum of the privacy losses of its coordinates: for $\omega=(\omega_1,\ldots,\omega_t)$,
\begin{equation}\label{eq:product-pld}
 \lossFunc{\omega}{P^t}{Q^t}=\sum_{i=1}^{t}\lossFunc{\omega_i}{P}{Q}.
\end{equation}
Thus $\lossRVfunc{P^t}{Q^t}$ is a sum of $t$ independent copies of $\lossRVfunc{P}{Q}$, and therefore $\lossMean{P^t}{Q^t}=t\cdot\lossMean{P}{Q}$ and $\lossVar{P^t}{Q^t}=t\cdot\lossVar{P}{Q}$.
The proofs of this section do not use that the steps are identical. By the linearity of the expectation and of the variance of independent random variables, the results extend to the products of different pairs $(P_1,Q_1),\ldots,(P_t,Q_t)$, with the sums of the means and of the variances of their privacy losses in place of $\lossMean{P^t}{Q^t}$ and $\lossVar{P^t}{Q^t}$.
We state the results of this section for the pair $(P^t,Q^t)$. Applied to the reversed pair $(Q,P)$, the results give bounds of the same form in the add direction, with the assumptions and the moments of its privacy loss $-\lossFunc{\omega}{P}{Q}$ for $\omega\sim Q$.

\subsection{Warm-up: Bernstein's inequality}\label{sec:bernstein}
We first apply Bernstein's inequality (\cref{lem:bernstein}) to $t$ independent copies of the centered privacy loss $\lossRVfunc{P}{Q}-\lossMean{P}{Q}$, and the tail bound \cref{eq:tail-bound} then gives the following upper bound.
\begin{theorem}[Composition of bounded privacy losses]\label{thm:bernstein-bounded}
Let $\rangeBnd\ge0$ and assume that $\lossRVfunc{P}{Q}-\lossMean{P}{Q}\le \rangeBnd$ almost surely. Then for every $\delta \in (0,1)$,
\[
 \HockeyStickDiv{e^{\eps}}{P^t}{Q^t}\le\delta
 \qquad\text{for}\qquad
 \eps=\lossMean{P^t}{Q^t}+\sqrt{2\lossVar{P^t}{Q^t}\ln(1/\delta)}+\frac \rangeBnd3\ln(1/\delta).
\]
\end{theorem}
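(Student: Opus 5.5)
The plan is to reduce the statement to the tail bound \cref{eq:tail-bound} and Bernstein's inequality (\cref{lem:bernstein}), as the paper already indicates. By \cref{eq:product-pld}, for $\omega=(\omega_1,\ldots,\omega_t)\sim P^t$ the privacy loss $\lossRVfunc{P^t}{Q^t}$ is distributed as $\sum_{i=1}^t L_i$, where the $L_i$ are independent copies of $\lossRVfunc{P}{Q}$. I will define the centered variables $X_i\coloneqq L_i-\lossMean{P}{Q}$. These are independent and mean zero. By assumption they satisfy $X_i\le\rangeBnd$ almost surely, and $\sum_i\expect{}{X_i^2}=t\,\lossVar{P}{Q}=\lossVar{P^t}{Q^t}$. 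So the hypotheses of \cref{lem:bernstein} hold with $V=\lossVar{P^t}{Q^t}$.

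Applying \cref{lem:bernstein} gives
\[
\prob{}{\lossRVfunc{P^t}{Q^t}-\lossMean{P^t}{Q^t}>\sqrt{2\lossVar{P^t}{Q^t}\ln(1/\delta)}+\frac{\rangeBnd}{3}\ln(1/\delta)}\le\delta ,
\]
using $\sum_i L_i-\sum_i\lossMean{P}{Q}=\lossRVfunc{P^t}{Q^t}-\lossMean{P^t}{Q^t}$ and $\lossMean{P^t}{Q^t}=t\,\lossMean{P}{Q}$. With $\eps$ as in the statement, this reads $\prob{}{\lossRVfunc{P^t}{Q^t}>\eps}\le\delta$. The tail bound \cref{eq:tail-bound} applied to the pair $(P^t,Q^t)$ then gives $\HockeyStickDiv{e^{\eps}}{P^t}{Q^t}\le\delta$.

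There is no real obstacle; the remaining work is bookkeeping. First, the mean and variance must be finite. This holds because the privacy loss is bounded above by $\lossMean{P}{Q}+\rangeBnd$, and the variance may be taken finite; otherwise the statement is vacuous with $\eps=\infty$. Second, mutual absolute continuity ensures the loss is a.s.\ finite, so centering is well defined. Third, \cref{eq:tail-bound} holds for every real $\eps$, so no sign condition on $\eps$ is needed.
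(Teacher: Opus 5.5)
Your proposal is correct and is exactly the paper's argument: apply \cref{lem:bernstein} to the $t$ independent centered copies of $\lossRVfunc{P}{Q}-\lossMean{P}{Q}$, with $V=\lossVar{P^t}{Q^t}$, and conclude with the tail bound \cref{eq:tail-bound}. Your finiteness caveat is not needed. A privacy loss that is bounded above automatically has finite variance under $P$, since $\prob{}{\lossRVfunc{P}{Q}<-a}\le e^{-a}$ for every $a>0$.
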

To compare \cref{thm:bernstein-bounded} with the Gaussian approximation, we write its bound as
\[
 \eps=\left(\frac{\lossVar{P^t}{Q^t}}2+\sqrt{2\lossVar{P^t}{Q^t}\ln(1/\delta)}\right)+\left(\lossMean{P^t}{Q^t}-\frac{\lossVar{P^t}{Q^t}}2\right)+\frac\rangeBnd3\ln(1/\delta) .
\]
The first term is the upper bound on $\eps_{\mathrm G}(\lossVar{P^t}{Q^t},\delta)$ given by \cref{eq:gauss-ref} and the tail bound of the Gaussian.
The second term is the excess of the mean over half the variance, which is a bias of the privacy profile itself rather than an error of the bound.
By \cref{eq:shift-identity}, it is the bias in $\eps$ between the privacy profile of $\sqrt{\lossVar{P^t}{Q^t}}$-GDP and that of a Gaussian random variable with the same mean and variance as the privacy loss.
The third term is a loss shift error and depends neither on the variance nor on the number of steps.
It is of lower order than the first term when $\rangeBnd$ is well below $\sqrt{t \lossVar{P}{Q}/\ln(1/\delta)}$ and of the same order when $\rangeBnd$ is of this order. We call $\sqrt{t \lossVar{P}{Q}/\ln(1/\delta)}$ the \emph{balanced cutoff}.
Put simply, up to the unavoidable bias in the expected loss, the bound exceeds $\eps_{\mathrm G}(\lossVar{P^t}{Q^t},\delta)$ only by two errors (the third term and the error of the tail bound).
Thus Bernstein's inequality essentially suffices for the upper bounds at the central limit rate in our applications.
If the privacy losses are bounded only with high probability, we truncate them from above at the balanced cutoff, which increases neither their means nor their second moments, before applying Bernstein's inequality.
This truncation keeps the third term of the same order as the first. It changes the sum of the privacy losses only if the privacy loss of some step exceeds the cutoff, and we add the probability of this event to $\delta$.
In \cref{sec:chi2-subsampling} we do so for subsampled mechanisms, whose privacy losses may be unbounded (\cref{thm:subsampled}).

\subsection{Gaussian concentration}\label{sec:approx-composition}
To bound the privacy profiles also from below, we couple the privacy loss with a Gaussian random variable.
As noted above, such a coupling gives Gaussian concentration, with the bound on its error as the loss shift and the probability that this bound fails as the tail shift.

The key concentration result we use is the following coupling of the sum of bounded random variables with a Gaussian random variable of the same mean and variance. We derive it in \cref{app:gaussapprox-proofs} from the bounds on the moments of the error of a Gaussian coupling in \citep[Corollary~4 and inequality~(1.6)]{Sakhanenko96} (\cref{thm:sakhanenko}), and it is also a special case of a more general coupling \citep[Theorems~1 and~2]{Zaitsev25}.
\begin{lemma}[Gaussian coupling of bounded summands]\label{lem:coupling}
There is a universal constant $C>0$ such that the following holds. Let $X_1,\ldots,X_t$ be independent mean-zero random variables with $|X_i|\le \rangeBnd$ almost surely, and let $S\coloneqq\sum_{i=1}^tX_i$. Then for every $0<\tailShift\le1/2$, $S$ can be coupled with $G\sim\Nrm(0,\Var(S))$ so that
\[
 \prob{}{|S-G|>C\rangeBnd\ln(2/\tailShift)}\le\tailShift .
\]
\end{lemma}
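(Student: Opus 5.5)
We will derive \cref{lem:coupling} from Sakhanenko's moment bounds for the error of a Gaussian coupling (\cref{thm:sakhanenko}), as the paper indicates. Sakhanenko's result says roughly the following. Let $X_1,\ldots,X_t$ be independent and mean-zero with finite moments. Then one can construct, on one probability space, independent Gaussians $Y_i\sim\Nrm(0,\expect{}{X_i^2})$ such that, for every $p\ge2$ (or in exponential form with a parameter $\lambda>0$),
\[
 \expect{}{\max_{k\le t}\Bigl|\sum_{i\le k}X_i-\sum_{i\le k}Y_i\Bigr|^p}\le (Cp)^p\sum_i\expect{}{|X_i|^p}.
\]
Equivalently, in the exponential-moment version (inequality~(1.6)), $\expect{}{\exp(c\lambda|S-G|)}\le 1+\lambda\sum_i\expect{}{|X_i|^3}\exp(\lambda|X_i|)$, valid whenever $\lambda\sum_i\expect{}{|X_i|^3\exp(\lambda|X_i|)}\le\sum_i\expect{}{X_i^2}$ or under a similar smallness condition. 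Here $G\coloneqq\sum_iY_i\sim\Nrm(0,\Var(S))$. Our plan is to specialize the exponential form with $\lambda$ of order $1/\rangeBnd$ and then apply Markov's inequality.

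\textbf{Step 1: bound the relevant moments using boundedness.} Since $|X_i|\le\rangeBnd$, we have $\expect{}{|X_i|^3}\le\rangeBnd\expect{}{X_i^2}$. Choose $\lambda=c_0/\rangeBnd$ for a small constant $c_0$. Then $\lambda\sum_i\expect{}{|X_i|^3e^{\lambda|X_i|}}\le c_0e^{c_0}\sum_i\expect{}{X_i^2}$, so Sakhanenko's admissibility condition holds for small $c_0$. The exponential moment of the coupling error is then at most $1+c_0e^{c_0}\Var(S)/\Var(S)\le 2$, a constant independent of $t$ and of $\Var(S)$. We expect this to be the main obstacle: transcribing the precise hypotheses and normalization of \cite{Sakhanenko96}, Corollary~4, correctly so that the resulting bound is free of $t$. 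If the exponential form proves awkward, we will fall back on the $p$-th moment form. There, $\sum_i\expect{}{|X_i|^p}\le\rangeBnd^{p-2}\Var(S)$ carries a $\Var(S)$ factor, so we would take $p\approx\ln(2/\tailShift)$ and handle the case of large $\Var(S)$ separately. Since that case needs extra care, we prefer the exponential version.

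\textbf{Step 2: Markov's inequality.} From $\expect{}{e^{c\lambda|S-G|}}\le2$ with $\lambda=c_0/\rangeBnd$, we get
\[
 \prob{}{|S-G|>u}\le 2e^{-cc_0u/\rangeBnd}.
\]
Taking $u=(cc_0)^{-1}\rangeBnd\ln(2/\tailShift)$ makes the right-hand side equal to $\tailShift$. This gives the claim with $C=1/(cc_0)$.

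\textbf{Step 3: degenerate cases.} If $\rangeBnd=0$ or $\Var(S)=0$, then $S=0$ almost surely and we take $G=0$, which makes the claim trivial. The restriction $\tailShift\le1/2$ only ensures that $\ln(2/\tailShift)\ge\ln4>0$, which lets us absorb constants into $C$. Finally, any coupling lives on a standard Borel space, so it can be realized as a joint distribution of $(S,G)$ with the prescribed marginals. This is all that the later uses in \cref{def:gauss-conc} require.
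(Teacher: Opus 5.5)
Your strategy (a tail bound for $|S-G|$ at scale $\rangeBnd$ that does not depend on $t$ or $\Var(S)$, followed by Markov's inequality) has the right shape. The gap is Step~1, which is supposed to supply that bound. It does not follow from the results you quote, and with those results no such bound is available.

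\begin{itemize}
\item \emph{The inequality as you wrote it.} The right-hand side $1+\lambda\sum_i\mathbb{E}|X_i|^3e^{\lambda|X_i|}$ is not scale invariant. Your own admissibility computation bounds it only by $1+c_0e^{c_0}\Var(S)$. The division by $\Var(S)$ that produces ``$\le 2$'' is not justified by anything you stated.
\item \emph{The inequality you are recalling.} Under $\lambda\mathbb{E}|X_i|^3e^{\lambda|X_i|}\le\mathbb{E}X_i^2$, Sakhanenko's exponential inequality bounds $\mathbb{E}\exp(c\lambda\max_{k\le t}|S_k-T_k|)$ by $1+\lambda\sqrt{\Var(S)}$. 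With $\lambda=c_0/\rangeBnd$ and $\Var(S)$ as large as $t\rangeBnd^2$, Markov then gives an error of order $\rangeBnd\ln(\sqrt t/\tailShift)$.
\item \emph{Your fallback.} It has the same defect. The bound $(Cp)^p\rangeBnd^{p-2}\Var(S)$ forces $p\asymp\ln(t/\tailShift)$, and there is no separate treatment of large $\Var(S)$ that avoids this.
\end{itemize}

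This is not a transcription problem. Both forms you quote control the deviation of the whole partial-sum path, $\max_{k\le t}|S_k-T_k|$. For that quantity a loss of order $\ln t$ is unavoidable in general; this is the optimality of the $\log n$ rate in the Koml\'os--Major--Tusn\'ady/Sakhanenko strong approximation. So the point of the lemma, an error independent of the number of summands, is lost.

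The missing ingredient is a coupling of the endpoint $S$ alone, with a moment bound in terms of $(p+2)$-th moments normalized by the variance, as in \cref{thm:sakhanenko}: $\mathbb{E}|S/\sqrt{V_S}-G_0|^p\le(C_0p)^pV_S^{-(p+2)/2}\sum_i\mathbb{E}|X_i|^{p+2}$.
\begin{itemize}
\item Use $|X_i|^{p+2}\le\rangeBnd^pX_i^2$ and $\sum_i\mathbb{E}X_i^2=V_S$. The two extra powers cancel the variance exactly, so $G\coloneqq\sqrt{V_S}\,G_0$ satisfies $\mathbb{E}|S-G|^p\le(C_0p\rangeBnd)^p$.
\item Take $u\coloneqq\ln(2/\tailShift)$, which exceeds $1$ because $\tailShift\le1/2$. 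Set $p\coloneqq\max\{2,\lceil u\rceil\}$, so that $u\le p\le2u$.
\item Markov's inequality then gives $\prob{}{|S-G|>2eC_0\rangeBnd u}\le e^{-p}\le\tailShift/2$.
\end{itemize}
With this input in place of Step~1, your Steps~2 and~3 go through.
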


We apply \cref{lem:coupling} to $\lossRVfunc{P}{Q}$ by first truncating the $\tailShift/(2t)$ tail and then coupling the sum of the truncated privacy losses with a Gaussian random variable.

\begin{theorem}[Gaussian concentration for compositions]\label{thm:gauss-approx}
There is a universal constant $C>0$ such that for any $\rangeBnd>0$, $0<\tailShift\le1$ and a set $\truncSet$ such that $P(\truncSet^c)\le\tailShift/(2t)$ and the truncated privacy loss $\lossFunc{\omega}{P}{Q}\,\ind\{\omega\in \truncSet\}$ for $\omega\sim P$ deviates from its mean $\truncMean{P}{Q}$ by at most $\rangeBnd$ almost surely, the privacy loss $\lossRVfunc{P^t}{Q^t}$ is $(C\rangeBnd\ln(2/\tailShift),\tailShift)$-approximately $(t\cdot\truncMean{P}{Q},t\cdot\truncVar{P}{Q})$-Gaussian-concentrated, where $\truncVar{P}{Q}$ is the variance of this truncated privacy loss.
\end{theorem}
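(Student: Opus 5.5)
We couple $\lossRVfunc{P^t}{Q^t}$ with a Gaussian random variable and conclude via the remark after \cref{def:gauss-conc}: a coupling with $|X-G|\le\lossShift$ except with probability $\tailShift$ gives both approximate dominations, and hence Gaussian concentration. By \cref{eq:product-pld}, $\lossRVfunc{P^t}{Q^t}=\sum_{i=1}^tL_i$ with $L_i=\lossFunc{\omega_i}{P}{Q}$ for $\omega_1,\ldots,\omega_t$ i.i.d.\ from $P$. Define the truncated losses $L_i'\coloneqq L_i\ind\{\omega_i\in\truncSet\}$, which are i.i.d.\ with mean $\truncMean{P}{Q}$ and variance $\truncVar{P}{Q}$, and set $S'\coloneqq\sum_i L_i'$.

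\textbf{Step 1 (truncation).} The event $\{\sum_iL_i\ne S'\}$ is contained in $\bigcup_i\{\omega_i\notin\truncSet\}$. By a union bound it has probability at most $t\cdot\tailShift/(2t)=\tailShift/2$. This holds in any coupling in which the $\omega_i$ are realized jointly with $S'$, which is automatic because $S'$ is a function of $\omega$.

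\textbf{Step 2 (coupling the bounded part).} Let $X_i\coloneqq L_i'-\truncMean{P}{Q}$. These are independent, mean-zero, and satisfy $|X_i|\le\rangeBnd$ by assumption. Apply \cref{lem:coupling} with tail parameter $\tailShift/2\le1/2$. This couples $S'-t\truncMean{P}{Q}$ with $G_0\sim\Nrm(0,t\truncVar{P}{Q})$ so that
\[
\left|S'-t\truncMean{P}{Q}-G_0\right|\le C_0\rangeBnd\ln(4/\tailShift)
\]
except with probability $\tailShift/2$. To keep the $\omega_i$ in the picture, we extend the coupling by drawing $\omega$ from its conditional law given $S'$ (gluing lemma; all spaces are standard Borel). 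Setting $G\coloneqq G_0+t\truncMean{P}{Q}\sim\Nrm(t\truncMean{P}{Q},t\truncVar{P}{Q})$, we get
\[
\left|\lossRVfunc{P^t}{Q^t}-G\right|\le C_0\rangeBnd\ln(4/\tailShift)
\]
except with probability $\tailShift$.

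\textbf{Step 3 (constants).} Since $\tailShift\le1$, we have $\ln(4/\tailShift)\le\ln 2+\ln(2/\tailShift)\le2\ln(2/\tailShift)$, because $\ln(2/\tailShift)\ge\ln2$. So $C=2C_0$ works.

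Degenerate cases need no separate treatment. If $\truncVar{P}{Q}=0$, then $G$ is a point mass, and the lemma still applies since $S'$ is then constant. Infinite losses are excluded by mutual absolute continuity, and in any case they lie outside $\truncSet$ with probability absorbed in Step 1.

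The main point requiring care is Step 2's joint realization. The statement of \cref{lem:coupling} couples only $S'$ and $G$, not the underlying $\omega$, so we must invoke the gluing argument to place the untruncated sum on the same probability space. With that done, the remaining steps are routine union bounds.
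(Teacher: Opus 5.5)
Your proposal is correct and follows the paper's proof essentially step for step. You use a union bound to show that the truncated sum equals $\lossRVfunc{P^t}{Q^t}$ except with probability $\tailShift/2$, then \cref{lem:coupling} with $\tailShift/2$ to couple the centered truncated sum with $\Nrm(0,t\cdot\truncVar{P}{Q})$, and finally $\ln(4/\tailShift)\le2\ln(2/\tailShift)$ to get $C$ equal to twice the lemma's constant. The paper glues the coupling by sampling $G_0$ given $\omega$ through the truncated sum, which is equivalent to your drawing $\omega$ given $S'$.
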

Note that the loss shift $C\rangeBnd\ln(2/\tailShift)$ depends neither on the variance nor on the number of steps. If $|\lossRVfunc{P}{Q}-\lossMean{P}{Q}|\le\rangeBnd$ almost surely, then \cref{thm:gauss-approx} applies for every $0<\tailShift\le1$ with $\truncSet$ the whole space, and thus with $\truncMean{P}{Q}=\lossMean{P}{Q}$ and $\truncVar{P}{Q}=\lossVar{P}{Q}$.

By \cref{thm:gauss-approx,lem:gc-profile}, the privacy profile of $(P^t,Q^t)$ is bounded from both sides by that of $\sqrt{t\cdot\truncVar{P}{Q}}$-GDP biased in $\eps$ by the excess $t(\truncMean{P}{Q}-\truncVar{P}{Q}/2)$, up to the loss shift $C\rangeBnd\ln(2/\tailShift)$ in $\eps$ and the tail shift $\tailShift$ in $\delta$.
As in the bound of \cref{thm:bernstein-bounded}, the excess is a bias of the privacy profile itself and moves both bounds by the same amount, whereas the loss shift is the error of the approximation and moves them apart.
In the upper bound the loss shift adds to the excess: $\HockeyStickDiv{e^{\eps}}{P^t}{Q^t}\le g_{t\cdot\truncVar{P}{Q}}(\eps-\lossShift)+\tailShift$ for every $\eps$, where $\lossShift\coloneqq t(\truncMean{P}{Q}-\truncVar{P}{Q}/2)+C\rangeBnd\ln(2/\tailShift)$.
At the central limit rate, $\eps$ is of order $\sqrt{t\cdot\truncVar{P}{Q}\ln(1/\delta)}$. The loss shift $C\rangeBnd\ln(2/\tailShift)$ is of lower order than $\eps$ when $\tailShift$ is of order $\delta$ and $\rangeBnd$ is well below the balanced cutoff $\sqrt{t\cdot\truncVar{P}{Q}/\ln(1/\delta)}$ (equivalently, $t\gg\rangeBnd^2\ln(1/\delta)/\truncVar{P}{Q}$).
\ifarxivonly In \cref{app:cutoff} we show that $\truncMean{P}{Q}$ and $\truncVar{P}{Q}$ can be replaced by $\lossMean{P}{Q}$ and $\lossVar{P}{Q}$ if the deviations of the privacy losses from their means have exponentially decaying tails. \fi

\section{Subsampling via the \texorpdfstring{$\chi^2$}{chi-square} divergence}\label{sec:chi2-subsampling}
We now apply the Gaussian concentration for compositions to compositions of subsampled mechanisms. Unfortunately, for many subsampled mechanisms, most notably the subsampled Gaussian, the variance of the privacy loss has no explicit analytic expression. In this section we show that the
$\chi^2$ divergence $\chiVar$ of the original pair $(P,Q)$ can be used to tightly approximate the variances of both directions of the subsampled loss. The value $\chiVar$ is known for many distributions and is equal to $e^{D_2(P\|Q)}-1$, where $D_2(P\|Q)$ is the order-$2$ R\'enyi divergence of $(P,Q)$.

The first property of $\chiVar$ that we use is that it upper bounds the mean and the variance parameters of the privacy loss. We prove this in \cref{app:chi2-tightness}, where we also show (\Cref{lem:chi2-tight}) that the bound on the variance is tight up to lower-order terms when a fourth moment of the likelihood ratio is small.
\begin{lemma}[Mean and variance are upper bounded by $\chi^2$]\label{lem:chi2-moments}
Let $P\ll Q$ with $\chisq{P}{Q}<\infty$, let $\truncSet$ be a set, and let $\truncVar{P}{Q}$ be the variance of the truncated privacy loss $\lossFunc{\omega}{P}{Q}\,\ind\{\omega\in\truncSet\}$ for $\omega\sim P$. Then
\[
 0\le\lossMean{P}{Q}\le \chisq{P}{Q}
 \qquad\text{and}\qquad
 \truncVar{P}{Q}\le \chisq{P}{Q} .
\]
\end{lemma}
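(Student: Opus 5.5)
We prove the three inequalities separately, writing $r=\ratioFunc{\omega}{P}{Q}$ for $\omega\sim Q$, so that $\expect{Q}{r}=1$ and $\expect{Q}{r^2}=1+\chisq{P}{Q}$. Expectations under $P$ become expectations under $Q$ weighted by $r$: $\expect{P}{f(\omega)}=\expect{Q}{r f(\omega)}$.

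\emph{Mean.} Nonnegativity is just $\lossMean{P}{Q}=D_{\mathrm{KL}}(P\|Q)\ge0$, which follows from Jensen applied to $-\ln$. For the upper bound we use $\ln x\le x-1$ with $x=r$ under $P$:
\[
\lossMean{P}{Q}=\expect{P}{\ln r}\le \expect{P}{r}-1=\expect{Q}{r^2}-1=\chisq{P}{Q}.
\]
(Equivalently, $D_{\mathrm{KL}}\le\ln(1+\chi^2)\le\chi^2$.)

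\emph{Variance.} The truncated loss is $Z=\ln r\cdot\ind\{\omega\in\truncSet\}$ under $P$. Its variance satisfies $\truncVar{P}{Q}\le\expect{P}{(Z-c)^2}$ for every constant $c$, and we take $c=0$, so
\[
\truncVar{P}{Q}\le\expect{P}{\ind\{\omega\in\truncSet\}\ln^2 r}\le\expect{P}{\ln^2 r}=\expect{Q}{r\ln^2 r}.
\]
This step is immediate: truncation only zeroes out terms of a nonnegative integrand, so the set $\truncSet$ plays no further role. It then remains to show $\expect{Q}{r\ln^2 r}\le\expect{Q}{(r-1)^2}$, which follows from the pointwise inequality
\[
x\ln^2x\le(x-1)^2\qquad\text{for all }x>0,
\]
equivalently $|\ln x|\le|x-1|/\sqrt x$. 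Substituting $x=e^{2u}$, this reads $2|u|\le|e^{u}-e^{-u}|=2|\sinh u|$, which holds since $|\sinh u|\ge|u|$. At $x=0$ both sides are handled by continuity, with the left side equal to $0$.

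\emph{Main obstacle.} The only real obstacle is finding this elementary inequality. The natural first attempts are to bound $\ln^2 x$ by $(x-1)^2$ alone, which fails under the $P$-weighting, or to center at the mean, which gives no clean comparison. Taking $c=0$ in the variance bound and then using the $\sinh$ substitution resolves it. Integrability needs no extra argument: all quantities are bounded by $\chisq{P}{Q}<\infty$, and the absolute continuity $P\ll Q$ justifies the change of measure.
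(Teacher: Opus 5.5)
Your proof is correct and follows essentially the same route as the paper. The mean is handled by Jensen (your $\ln x\le x-1$ under $P$ is equivalent to the paper's $\ln(1+\chi^2)\le\chi^2$ step), and the variance bound goes through the untruncated second moment $\mathbb{E}_Q[r\ln^2 r]$ and the pointwise inequality $s\ln^2 s\le(s-1)^2$, proved by the same $\sinh$ substitution.
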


More importantly, the $\chi^2$ divergence is easy to bound under subsampling. Specifically, subsampling at rate $\lambda$ scales it by $\lambda^2$ (up to a factor $1/(1-\lambda)$ in the add direction).
\begin{lemma}[Subsampling and the $\chi^2$ divergence]\label{lem:subsampling-chi2}
Let $P$ and $Q$ be mutually absolutely continuous and let $\lambda \in (0,1/2)$. Then
\[
 \chisq{P_\lambda}{Q}=\lambda^2\chisq{P}{Q}
 \qquad\text{and}\qquad
 \chisq{Q}{P_\lambda}\le\frac{\lambda^2}{1-\lambda} \chisq{P}{Q}.
\]
\end{lemma}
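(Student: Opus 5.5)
Both claims follow from writing the likelihood ratios explicitly and computing the $\chi^2$ divergences as second moments. Throughout, write $r(\omega)\coloneqq\ratioFunc{\omega}{P}{Q}$, so that $\expect{\omega\sim Q}{r(\omega)}=1$ and $\expect{\omega\sim Q}{(r(\omega)-1)^2}=\chisq{P}{Q}$.

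\emph{Remove direction.} Since $P_\lambda=(1-\lambda)Q+\lambda P$, the likelihood ratio with respect to $Q$ is
\[
 \ratioFunc{\omega}{P_\lambda}{Q}=(1-\lambda)+\lambda r(\omega),
 \qquad\text{so}\qquad
 \ratioFunc{\omega}{P_\lambda}{Q}-1=\lambda\left(r(\omega)-1\right).
\]
Squaring and taking the expectation over $\omega\sim Q$ gives exactly $\chisq{P_\lambda}{Q}=\lambda^2\chisq{P}{Q}$. Mutual absolute continuity is what lets us write $r$ as a density ratio and integrate against $Q$.

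\emph{Add direction.} Here
\[
 \chisq{Q}{P_\lambda}=\expect{\omega\sim P_\lambda}{\left(\frac{1}{(1-\lambda)+\lambda r}-1\right)^2}
 =\int\frac{\lambda^2(r-1)^2}{(1-\lambda)+\lambda r}\,dQ,
\]
because $dP_\lambda=((1-\lambda)+\lambda r)\,dQ$, and one power of the denominator cancels. Since $r\ge0$, the denominator is at least $1-\lambda$, so the integral is at most $\frac{\lambda^2}{1-\lambda}\int(r-1)^2\,dQ=\frac{\lambda^2}{1-\lambda}\chisq{P}{Q}$.

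The only step needing care is this cancellation of the density $(1-\lambda)+\lambda r$ against one factor of the squared denominator. The restriction $\lambda<1/2$ is not needed for the argument; $\lambda<1$ suffices.
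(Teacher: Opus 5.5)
Your proof is correct and follows the paper's argument essentially step for step. In the remove direction you use $\ratioFunc{\omega}{P_\lambda}{Q}-1=\lambda(\ratioFunc{\omega}{P}{Q}-1)$. In the add direction you change measure from $P_\lambda$ to $Q$, cancel one factor of $(1-\lambda)+\lambda r$, and bound the remaining denominator below by $1-\lambda$. Your remark that $\lambda<1$ suffices is also right, since the paper's proof likewise uses only $1-\lambda>0$.
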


Tail conditions on the subsampled loss are also easy to translate into tail conditions on the original pair.
\begin{lemma}[Tails of subsampled privacy losses]\label{lem:subsampling-tails}
Let $P$ and $Q$ be mutually absolutely continuous and $\lambda \in (0,1/2)$.
Then for every $l>0$,
\[
 \prob{}{\lossRVfunc{P_\lambda}{Q}>l}\le\prob{}{\lossRVfunc{P}{Q}>\ln\left(1+\frac{e^l-1}{\lambda}\right)}
 \qquad\text{and}\qquad
 \prob{}{\lossRVfunc{Q}{P_\lambda}>l}\le\prob{}{\lossRVfunc{Q}{P}>\frac l\lambda}.
\]
Moreover, for every $\rangeBnd>0$ and every $\omega$ with $|\ratioFunc{\omega}{P}{Q}-1|\le\rangeBnd$, we have $|\lossFunc{\omega}{P_\lambda}{Q}|\le2\lambda\rangeBnd$.
\end{lemma}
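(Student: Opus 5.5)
The whole argument runs through the likelihood ratio. Write $r(\omega)\coloneqq\ratioFunc{\omega}{P}{Q}$. Since $P_\lambda=(1-\lambda)Q+\lambda P$, we have
\[
\ratioFunc{\omega}{P_\lambda}{Q}=1+\lambda\left(r(\omega)-1\right)
\qquad\text{and}\qquad
\ratioFunc{\omega}{Q}{P_\lambda}=\frac{1}{1+\lambda(r(\omega)-1)} .
\]
All three claims then reduce to pointwise inequalities between these expressions and $r$, followed by a comparison of the underlying measures.

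For the remove direction, the event $\lossFunc{\omega}{P_\lambda}{Q}>l$ is exactly the event $r(\omega)>1+(e^l-1)/\lambda$, that is, $\lossFunc{\omega}{P}{Q}>\ln(1+(e^l-1)/\lambda)$. Call this event $A$. The left-hand side of the claimed inequality is $P_\lambda(A)$, where $P_\lambda=(1-\lambda)Q+\lambda P$. On $A$ we have $r>1$, so $Q(A)\le P(A)$, and therefore $P_\lambda(A)\le P(A)$, which is the right-hand side.

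For the add direction, the event $\lossFunc{\omega}{Q}{P_\lambda}>l$ means $1+\lambda(r-1)<e^{-l}$, equivalently $r<1-(1-e^{-l})/\lambda$. I need to show this implies $-\ln r>l/\lambda$, i.e. $r<e^{-l/\lambda}$. This follows if $1-(1-e^{-l})/\lambda\le e^{-l/\lambda}$. Put $x=l/\lambda$. The inequality becomes $\lambda(1-e^{-x})\le 1-e^{-\lambda x}$, which holds by concavity of $u\mapsto1-e^{-u}$: since $\lambda\in(0,1)$, the chord gives $1-e^{-\lambda x}\ge\lambda(1-e^{-x})+(1-\lambda)\cdot0$. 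Both sides of the claimed probability inequality are taken under $\omega\sim Q$, so the event inclusion gives the bound directly.

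For the \emph{moreover} part, if $|r-1|\le\rangeBnd$ then $|\lossFunc{\omega}{P_\lambda}{Q}|=|\ln(1+\lambda(r-1))|$ with $|\lambda(r-1)|\le\lambda\rangeBnd$. The upper side is immediate from $\ln(1+u)\le u$. The lower side needs $-\ln(1-u)\le 2u$, which holds for $u\le1/2$. The case $\lambda\rangeBnd>1/2$ requires separate handling: the bound $2\lambda\rangeBnd$ then exceeds $1$, while $r\ge0$ gives only $1+\lambda(r-1)\ge1-\lambda>1/2$. So $|\ln(1+\lambda(r-1))|\le\ln 2<1<2\lambda\rangeBnd$, which closes the case.

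The main thing to get right is this lower-side estimate, where both $\lambda<1/2$ and $r\ge 0$ are needed, together with the concavity step in the add direction. Everything else is monotone event inclusion.
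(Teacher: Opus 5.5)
Your proof is correct and essentially the paper's: the remove-direction argument is identical, and your inequality $\lambda(1-e^{-x})\le1-e^{-\lambda x}$ is the paper's pointwise concavity bound $\ln(1-\lambda+\lambda r)\ge\lambda\ln r$ (i.e.\ $\lossFunc{\omega}{Q}{P_\lambda}\le\lambda\lossFunc{\omega}{Q}{P}$) evaluated at the threshold $r=e^{-x}$. For the last claim the paper uses the mean value theorem with $1-\lambda+\lambda r\ge1-\lambda$ to get $\lambda\rangeBnd/(1-\lambda)\le2\lambda\rangeBnd$ in one step; your separate case $\lambda\rangeBnd>1/2$ is valid but unnecessary, since on the lower side $|\lambda(r-1)|\le\lambda<1/2$ already holds because $r\ge0$.
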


Next we bound the parameters of \cref{thm:bernstein-bounded,thm:gauss-approx} for $t$ identical $\lambda$-subsampled steps by the $\chi^2$ divergence $\chiVar$ of the original pair $(P,Q)$, under tail conditions on $\lossRVfunc{P}{Q}$ and $\lossRVfunc{Q}{P}$, and then apply these results to the subsampled Gaussian mechanism. We defer all proofs to \cref{app:bernstein-proofs}.

\subsection{Subsampled compositions}\label{sec:subsampling}
We truncate the privacy losses of $(P_\lambda,Q)$ and $(Q,P_\lambda)$ at the balanced cutoff $\sqrt{t \lossVar{P_{\lambda}}{Q}/\ln(1/\delta)}$ (\cref{sec:bernstein}). For the subsampled pair this cutoff is proportional to $\lambda$ (it is of order $\lambda\rangeBnd$ for the $\rangeBnd$ of \cref{thm:subsampled}). Thus, by \cref{lem:subsampling-tails}, the probability that the cutoff is exceeded for $(P_\lambda,Q)$ (resp.\ $(Q,P_\lambda)$) is at most the probability that the privacy loss of $(P,Q)$ exceeds $\ln(1+\rangeBnd)$ (resp.\ that of $(Q,P)$ exceeds $\rangeBnd$). These thresholds do not depend on $\lambda$. Applying Bernstein's inequality to the truncated privacy losses, we obtain the following bound.
\begin{theorem}[Subsampled composition]\label{thm:subsampled}
Let $P$ and $Q$ be mutually absolutely continuous with $\chiVar\coloneqq\chisq{P}{Q}$, let $\lambda \in (0,1/2)$, $t\in\naturals$ and $0<\delta\le1/2$ with $\ln(2/\delta)\le t$, and let $\rangeBnd\coloneqq\sqrt{2t\chiVar/\ln(2/\delta)}$. If
\[
 \prob{}{\lossRVfunc{P}{Q}>\ln(1+\rangeBnd)}\le\frac{\delta}{2t}
 \qquad\text{and}\qquad
 \prob{}{\lossRVfunc{Q}{P}>\rangeBnd}\le\frac{\delta}{2t},
\]
then $P_\lambda^t$ and $Q^t$ are $(\eps,\delta)$-close for
\[
 \eps\coloneqq\frac{\subsampVar}2+\sqrt{2\subsampVar\ln(2/\delta)}+\frac23\lambda\rangeBnd\ln(2/\delta),
 \qquad\text{where}\qquad
 \subsampVar\coloneqq\frac{t\lambda^2\chiVar}{1-\lambda} .
\]
\end{theorem}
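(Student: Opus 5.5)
We prove the bound for both directions, $\HockeyStickDiv{e^{\eps}}{P_\lambda^t}{Q^t}\le\delta$ and $\HockeyStickDiv{e^{\eps}}{Q^t}{P_\lambda^t}\le\delta$, by truncation followed by Bernstein's inequality, as sketched in \cref{sec:bernstein}. Fix one direction and let $L_1,\ldots,L_t$ be the i.i.d.\ copies of the one-step privacy loss ($\lossRVfunc{P_\lambda}{Q}$ in the remove direction, $\lossRVfunc{Q}{P_\lambda}$ in the add direction). Set the cutoff $c\coloneqq2\lambda\rangeBnd$ in the remove direction and $c\coloneqq\lambda\rangeBnd$ in the add direction, and let $\tilde L_i\coloneqq\min\{L_i,c\}$. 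By the tail bound \cref{eq:tail-bound} and a union bound,
\[
 \HockeyStick{e^{\eps}}\Big(\sum_i L_i\Big)\le\prob{}{\sum_i\tilde L_i>\eps}+\sum_i\prob{}{L_i>c}.
\]

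\textbf{Step 1: the truncation cost is at most $\delta/2$.} In the remove direction, \cref{lem:subsampling-tails} gives $\prob{}{\lossRVfunc{P_\lambda}{Q}>c}\le\prob{}{\lossRVfunc{P}{Q}>\ln(1+(e^{c}-1)/\lambda)}$. Since $e^c-1\ge c=2\lambda\rangeBnd\ge\lambda\rangeBnd$, this is at most $\prob{}{\lossRVfunc{P}{Q}>\ln(1+\rangeBnd)}\le\delta/(2t)$. In the add direction, the same lemma gives $\prob{}{\lossRVfunc{Q}{P_\lambda}>\lambda\rangeBnd}\le\prob{}{\lossRVfunc{Q}{P}>\rangeBnd}\le\delta/(2t)$. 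Summing over $t$ steps gives $\delta/2$. If the factor $2$ in the remove cutoff turns out to be unnecessary, the lemma also works with cutoff $\lambda\rangeBnd$.

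\textbf{Step 2: moments of the truncated losses.} Truncating from above does not increase the mean, so $\expect{}{\tilde L_i}\le\lossMean{}{}$ of the untruncated loss. By \cref{lem:chi2-moments,lem:subsampling-chi2}, this mean is at most $\lambda^2\chiVar/(1-\lambda)$ in both directions. Summed over $t$ steps this gives $\subsampVar$.

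For the variance, \cref{lem:chi2-moments} bounds the variance of the truncated loss by the $\chi^2$ divergence when the truncation is of the form $\loss\cdot\ind\{\omega\in\truncSet\}$. Our truncation is $\min\{L,c\}$ instead, so I would bound $\Var(\tilde L)\le\expect{}{\tilde L^2}$ and then argue that $\expect{}{\tilde L^2}\le\expect{}{L^2\ind\{L\le c\}}+c^2\prob{}{L>c}$. If that route is awkward, I would switch to the indicator truncation $\truncSet=\{L\le c\}$: it zeroes the loss outside $\truncSet$, which is still $\le c$, and \cref{lem:chi2-moments} then gives directly $\Var\le\chisq{}{}\le\lambda^2\chiVar/(1-\lambda)$. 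Its mean is at most the full mean when the truncated part is positive, which holds because $c>0$. Either way the total variance is at most $\subsampVar$.

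For the centered summands we need $\tilde L_i-\expect{}{\tilde L_i}\le c+|\expect{}{\tilde L_i}|$. The mean of the truncated loss could be slightly negative, so I would bound its magnitude by $\sqrt{\Var}$ plus the mean. The cleaner option is to absorb it into the slack: the range parameter in the theorem is $2\lambda\rangeBnd$, giving the term $\frac{2}{3}\lambda\rangeBnd\ln(2/\delta)$, and for the add direction $c=\lambda\rangeBnd$ leaves room for a deviation bound of $2\lambda\rangeBnd$.

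\textbf{Step 3: Bernstein.} Apply \cref{lem:bernstein} with failure probability $\delta/2$, variance $\subsampVar$ and range $2\lambda\rangeBnd$. This gives
\[
 \sum_i\tilde L_i\le\subsampVar+\sqrt{2\subsampVar\ln(2/\delta)}+\tfrac23\lambda\rangeBnd\ln(2/\delta)
\]
except with probability $\delta/2$.

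The remaining gap is that the mean term is $\subsampVar$, while the theorem has $\subsampVar/2$. This is the main obstacle. I would first try to use a sharper mean bound, such as $\mu\le\chi^2/2$ in the remove direction, via $\mathrm{KL}\le\ln(1+\chi^2)$ together with a second-order expansion. Alternatively, I would trade half of it against the other terms: the hypothesis $\ln(2/\delta)\le t$ and $\rangeBnd^2=2t\chiVar/\ln(2/\delta)$ give $\lambda\rangeBnd\ln(2/\delta)\cdot\tfrac{1}{3}\ge$ a multiple of $\sqrt{\subsampVar\ln(2/\delta)}$, which lets us show $\subsampVar/2\le$ the slack. Concretely, $\subsampVar\le\sqrt{\subsampVar}\cdot\lambda\sqrt{2t\chiVar}$, and $\lambda\sqrt{2t\chiVar}=\lambda\rangeBnd\sqrt{\ln(2/\delta)}$ up to the $(1-\lambda)$ factor, so $\subsampVar$ is comparable to the cross terms. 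Making the constants close exactly is where I expect to spend the effort.
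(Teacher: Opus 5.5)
The gap is the one you flagged yourself, and neither proposed fix closes it. If you use only $\mu\le\chi^2$ from \cref{lem:chi2-moments,lem:subsampling-chi2} for the per-step mean, your Step~3 gives $\sum_i\tilde L_i\le\eps+\subsampVar/2$. That is closeness at $\eps+\subsampVar/2$, not at $\eps$. There is no slack from which to pay the extra $\subsampVar/2$. The other two terms of $\eps$ are exactly what \cref{lem:bernstein} outputs with variance proxy $\subsampVar$ and range $2\lambda\rangeBnd$. Rewriting $\subsampVar$ via $\lambda\sqrt{2t\chiVar}=\lambda\rangeBnd\sqrt{\ln(2/\delta)}$ therefore cannot remove a strictly positive surplus. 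Your other route, $\mathrm{KL}\le\ln(1+\chi^2)$, again only gives constant $1$. Moreover, the target $\mu\le\chi^2/2$ is false even for subsampled pairs. Suppose $\ratio-1=\pm a$ with probability $1/2$ each under $Q$, where $0<a<1$. Then $x\coloneqq\lambda(\ratio-1)=\pm\lambda a$, and the remove-direction mean is $\frac12[(1+\lambda a)\ln(1+\lambda a)+(1-\lambda a)\ln(1-\lambda a)]=\frac{(\lambda a)^2}{2}+\frac{(\lambda a)^4}{12}+\cdots$, which exceeds $\chisq{P_\lambda}{Q}/2$.

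The missing ingredient is a pointwise second-order bound whose remainder is controlled by the subsampling floor $1+x\ge1-\lambda$. For every $x>-1$,
\[
 (1+x)\ln(1+x)-x=x^2\int_0^1\frac{1-s}{1+sx}\,ds\le\frac{x^2}{2(1+\min\{x,0\})},\qquad x-\ln(1+x)=x^2\int_0^1\frac{s}{1+sx}\,ds\le\frac{x^2}{2(1+\min\{x,0\})}.
\]
Since $\expect{Q}{x}=0$, the linear terms vanish. The one-step means $\expect{Q}{(1+x)\ln(1+x)}$ and $\expect{Q}{-\ln(1+x)}$ (remove and add directions) are therefore both at most $\expect{Q}{x^2}/(2(1-\lambda))=\lambda^2\chiVar/(2(1-\lambda))=\subsampVar/(2t)$. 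Truncating from above at a positive cutoff increases neither the mean nor the second moment, so this gives exactly the term $\subsampVar/2$.

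Two smaller corrections are needed to get the constant $\frac23$. First, take the cutoff $\lambda\rangeBnd$ in both directions; your Step~1 already shows it suffices, since $e^{\lambda\rangeBnd}-1\ge\lambda\rangeBnd$. With $2\lambda\rangeBnd$, the centered summands can reach $3\lambda\rangeBnd$ when the truncated mean is negative. Second, justify the range $2\lambda\rangeBnd$ via $|\mathbb{E}[\tilde L_i]|\le\sqrt{\mathbb{E}[\tilde L_i^2]}\le\sqrt{\subsampVar/t}\le\lambda\sqrt{2\chiVar}\le\lambda\rangeBnd$. This chain is where $\lambda<1/2$ and $\ln(2/\delta)\le t$ are actually used.
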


For an unbounded PLD we instead set the privacy loss to zero wherever the privacy loss of $(P,Q)$ exceeds $\ln(1+\rangeBnd)$ or that of $(Q,P)$ exceeds $\rangeBnd$. Elsewhere the likelihood ratio of the original pair is within $\rangeBnd$ of $1$. Therefore the privacy loss of the subsampled pair is at most $2\lambda\rangeBnd$ in absolute value (\cref{lem:subsampling-tails}), and the truncated privacy loss deviates from its mean by at most $4\lambda\rangeBnd$. We bound both the means and the variances of the truncated privacy losses in both directions by the $\chi^2$ divergence. From \cref{thm:gauss-approx} we then obtain the following concentration result and, as its corollary, a bound by a Gaussian privacy profile with the same parameters in both directions.
\begin{corollary}\label{cor:subsampled-approx}
There is a universal constant $C>0$ such that the following holds. Let $P$ and $Q$ be mutually absolutely continuous with $\chiVar\coloneqq\chisq{P}{Q}<\infty$, and let $t\in\naturals$, $\lambda \in (0,1/2)$, $\rangeBnd>0$ and $0<\tailShift\le1$ be such that
\[
 \prob{}{\lossRVfunc{P}{Q}>\ln(1+\rangeBnd)}\le\frac{\tailShift}{4t}
 \qquad\text{and}\qquad
 \prob{}{\lossRVfunc{Q}{P}>\rangeBnd}\le\frac{\tailShift}{4t},
\]
and let
\[
 \subsampVar\coloneqq\frac{t\lambda^2\chiVar}{1-\lambda} .
\]
Then the privacy losses of $(P_\lambda^t,Q^t)$ and $(Q^t,P_\lambda^t)$ are $(4C\lambda\rangeBnd\ln(2/\tailShift),\tailShift)$-approximately $(\mu',V')$-Gaussian-concentrated for some $\mu'\le\subsampVar/2+\lambda\tailShift/2$ and $0\le V'\le\subsampVar$ (which may differ between the two directions). In particular, every algorithm dominated by $(P_\lambda^t,Q^t)$ is $(\lossShift,\tailShift)$-shifted $\sqrt{\subsampVar}$-GDP for
\[
 \lossShift\coloneqq4C\lambda\rangeBnd\ln(2/\tailShift)+\frac{\subsampVar}2+\frac{\lambda\tailShift}2 .
\]
If $\lossRVfunc{P}{Q}$ and $\lossRVfunc{Q}{P}$ have the same distribution, then this holds with $\mu'\le\subsampVar/2$ and without the term $\lambda\tailShift/2$ of $\lossShift$.
\end{corollary}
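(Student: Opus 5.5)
The plan is to apply \cref{thm:gauss-approx} separately to the single-step pairs $(P_\lambda,Q)$ and $(Q,P_\lambda)$. In both directions we use the same truncation set
\[
\truncSet\coloneqq\{\omega:\ \lossFunc{\omega}{P}{Q}\le\ln(1+\rangeBnd)\ \text{and}\ \lossFunc{\omega}{Q}{P}\le\rangeBnd\}.
\]
On $\truncSet$ we have $\ratioFunc{\omega}{P}{Q}\le1+\rangeBnd$. We also have $\ratioFunc{\omega}{P}{Q}\ge e^{-\rangeBnd}\ge1-\rangeBnd$. Hence $|\ratioFunc{\omega}{P}{Q}-1|\le\rangeBnd$ there, and the last part of \cref{lem:subsampling-tails} gives $|\lossFunc{\omega}{P_\lambda}{Q}|\le2\lambda\rangeBnd$. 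The truncated loss takes values in $[-2\lambda\rangeBnd,2\lambda\rangeBnd]$, so it deviates from its mean by at most $4\lambda\rangeBnd$. This supplies the deviation parameter $4\lambda\rangeBnd$ and the loss shift $4C\lambda\rangeBnd\ln(2/\tailShift)$.

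\textbf{Mass of $\truncSet^c$.} We need $P_\lambda(\truncSet^c)\le\tailShift/(2t)$ and $Q(\truncSet^c)\le\tailShift/(2t)$. By the hypotheses, $P(\truncSet^c)\le P(\lossFunc{}{P}{Q}>\ln(1+\rangeBnd))+P(\lossFunc{}{Q}{P}>\rangeBnd)$. The second term is at most $Q(\cdot)$ times $e^{\rangeBnd}$ on that event, which is awkward. Instead, on $\{\lossFunc{\omega}{Q}{P}>\rangeBnd\}$ we have $P(\omega)<Q(\omega)$, so $P$ of that event is at most its $Q$-mass, which is at most $\tailShift/(4t)$. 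Similarly, $Q$ of $\{\lossFunc{}{P}{Q}>\ln(1+\rangeBnd)\}$ is at most the $P$-mass of that event. Each of $P(\truncSet^c)$ and $Q(\truncSet^c)$ is therefore at most $\tailShift/(2t)$, and so is $P_\lambda(\truncSet^c)$, being a mixture. \Cref{thm:gauss-approx} then applies to $(P_\lambda,Q)$ and $(Q,P_\lambda)$, with mean $\mu'=t\,\truncMean{}{}$ and variance $V'=t\,\truncVar{}{}$ of the respective truncated losses.

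\textbf{Bounding the moments.} \Cref{lem:chi2-moments} and \cref{lem:subsampling-chi2} give $\truncVar{}{}\le\lambda^2\chiVar/(1-\lambda)$ in both directions, so $V'\le\subsampVar$. For the mean, the untruncated mean is a KL divergence: $\lossMean{P_\lambda}{Q}=D(P_\lambda\|Q)\le\ln(1+\chisq{P_\lambda}{Q})$. Halving this bound, as the statement requires, needs more care. I would bound the truncated mean directly using $\ln x\le (x-1)-(x-1)^2/2+(x-1)^3/3$ applied to $x=\ratio(\omega;P_\lambda,Q)$ under $P_\lambda$, or equivalently use $\expect{P_\lambda}{\ln x}=\expect{Q}{x\ln x}$ with $x\ln x\le(x-1)+(x-1)^2/2$ for $x\in[1-\lambda\rangeBnd,1+\lambda\rangeBnd]$ up to higher-order terms.

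I expect this mean bound to be the main obstacle. A cleaner route is the inequality $x\ln x\le (x-1)+\tfrac12(x-1)^2\cdot\frac{2}{1+x}\cdots$. I would try $x\ln x\le(x-1)+(x-1)^2/2$, which holds for $x\ge1$ but fails slightly below $1$. The correction must be absorbed into the $1/(1-\lambda)$ slack of $\subsampVar$. The linear term $\expect{Q}{(x-1)\ind_\truncSet}=-\expect{Q}{(x-1)\ind_{\truncSet^c}}$ is at most $\lambda Q(\truncSet^c)\le\lambda\tailShift/(2t)$, since $x-1=\lambda(r-1)\ge-\lambda$. This is exactly the source of the $t\cdot\lambda\tailShift/(2t)=\lambda\tailShift/2$ term. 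The remove direction is handled the same way, with $\expect{P_\lambda}{\ln(1/x)\,\ind_\truncSet}$ and the bound $\chisq{Q}{P_\lambda}\le\lambda^2\chiVar/(1-\lambda)$.

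\textbf{Conclusion.} With $\mu'\le\subsampVar/2+\lambda\tailShift/2$ and $V'\le\subsampVar$, \cref{lem:gc-profile} gives an upper bound $g_{V'}(\eps-\mu'+V'/2-4C\lambda\rangeBnd\ln(2/\tailShift))+\tailShift$. Since $g$ is monotone in $V$ and nonincreasing in $\eps$, and $V'/2\ge0$, this is at most $g_\subsampVar(\eps-\lossShift)+\tailShift$. Domination then transfers the bound to algorithms, by the post-processing remark. In the symmetric case, the linear terms in the two directions cancel against each other by symmetry, which gives $\mu'\le\subsampVar/2$.
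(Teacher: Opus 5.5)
Your skeleton matches the paper's proof. You use the same set $\truncSet=\{\omega:-\rangeBnd\le\lossFunc{\omega}{P}{Q}\le\ln(1+\rangeBnd)\}$, the same mass bounds, the deviation $4\lambda\rangeBnd$ via \cref{lem:subsampling-tails}, the variance bounds from \cref{lem:chi2-moments,lem:subsampling-chi2}, the linear term $\lambda Q(\truncSet^c)$, and the same closing monotonicity step. The gap is the mean bound, which you leave open, and none of your candidates closes it:
\begin{itemize}
\item $x\ln x\le(x-1)+(x-1)^2/2$ fails for $x<1$.
\item The variant with $(x-1)^2/(1+x)$ fails just above $x=1$, since its cubic coefficient is $-1/4$ against $-1/6$ for $x\ln x$.
\item The cubic Taylor bound on $\ln x$ under $P_\lambda$ leaves, after changing measure to $Q$, a quartic term $\lambda^4\expect{Q}{(\ratioFunc{\omega}{P}{Q}-1)^4\ind_\truncSet}/3$. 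The $1/(1-\lambda)$ slack absorbs this only under an extra assumption such as $\lambda\rangeBnd^2\lesssim1$.
\end{itemize}
What is needed is the exact remainder: for $u>-1$,
\[
 (1+u)\ln(1+u)-u=u^2\int_0^1\frac{1-s}{1+su}\,ds
 \qquad\text{and}\qquad
 u-\ln(1+u)=u^2\int_0^1\frac{s}{1+su}\,ds ,
\]
so both are at most $u^2/(2(1+\min\{u,0\}))$. Take $u\coloneqq\lambda(\ratioFunc{\omega}{P}{Q}-1)\ge-\lambda$ and compute both directions under $Q$. In particular, the mean of the $(Q,P_\lambda)$ loss is $\expect{Q}{-\ln(1+u)\,\ind_\truncSet}$, not an expectation under $P_\lambda$ as you wrote. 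The per-step truncated means are then at most $\pm\lambda(Q(\truncSet^c)-P(\truncSet^c))+\expect{Q}{u^2}/(2(1-\lambda))$. The remainder is nonnegative, so dropping $\ind_\truncSet$ is allowed, and $\expect{Q}{u^2}/(2(1-\lambda))=\subsampVar/(2t)$. Note that the quadratic term in the $(Q,P_\lambda)$ direction is $\chisq{P_\lambda}{Q}$, not $\chisq{Q}{P_\lambda}$. Bounding that mean by $\chisq{Q}{P_\lambda}/2$ plus a linear term fails in general, because $u-\ln(1+u)>u^2/(2(1+u))$ for $u>0$.

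Your symmetric-case argument is also wrong. The bound $\mu'\le\subsampVar/2$ is needed in each direction separately, since shifted GDP bounds the larger of the two hockey-stick divergences. So it does not help that the linear terms $\lambda(Q(\truncSet^c)-P(\truncSet^c))$ and $\lambda(P(\truncSet^c)-Q(\truncSet^c))$ of the two directions sum to zero. With your asymmetric $\truncSet$ the second is typically positive: if $\lossRVfunc{P}{Q}$ and $\lossRVfunc{Q}{P}$ are equidistributed, then $P(\truncSet^c)-Q(\truncSet^c)=\prob{}{\ln(1+\rangeBnd)<\lossRVfunc{P}{Q}\le\rangeBnd}-\prob{}{-\rangeBnd\le\lossRVfunc{P}{Q}<-\ln(1+\rangeBnd)}\ge0$. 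The fix, as in the paper, is to change the truncation set. In the symmetric case the second hypothesis also holds at threshold $\ln(1+\rangeBnd)$, so you can use $\truncSet_2\coloneqq\{\omega:|\lossFunc{\omega}{P}{Q}|\le\ln(1+\rangeBnd)\}$. On this set $|\ratioFunc{\omega}{P}{Q}-1|\le\rangeBnd$ still holds. Moreover $P(\truncSet_2^c)=Q(\truncSet_2^c)$: $|\lossFunc{\omega}{P}{Q}|$ is distributed as $|\lossRVfunc{P}{Q}|$ for $\omega\sim P$ and as $|\lossRVfunc{Q}{P}|$ for $\omega\sim Q$. Both linear terms then vanish.
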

By \cref{lem:gc-profile}, the shift $\lossShift$ is the loss shift of the concentration plus an upper bound on $\mu'-V'/2$, for which we use the upper bound on $\mu'$ and the nonnegativity of $V'$.
The resulting term $\subsampVar/2+\lambda\tailShift/2$ of $\lossShift$ is of lower order than $\sqrt{\subsampVar\ln(1/\tailShift)}$ when $\tailShift\ll\subsampVar\ll\ln(1/\tailShift)$.

It is natural to ask how tight the bounds based on $\subsampVar$ are, compared with the true variances of the subsampled privacy losses and of their truncations used in \cref{thm:subsampled,cor:subsampled-approx}. We show that the relative error of $\subsampVar$ is of order $\lambda$, up to a factor that depends on the fourth moment of the likelihood ratio. Truncating the privacy losses can further reduce the variance making the bound looser. We show that under the same fourth moment bound that reduction is proportional to $\sqrt{\beta/t}$.
\begin{lemma}[Lower bound on variance of subsampled privacy losses]\label{lem:chi2-tight-sub}
Let $P$ and $Q$ be mutually absolutely continuous with $\chiVar\coloneqq\chisq{P}{Q}>0$ and $m_4\coloneqq\expect{\omega\sim Q}{(\ratioFunc{\omega}{P}{Q}-1)^4}<\infty$, and let $\lambda\in(0,1/2)$. Then both $\lossVar{P_\lambda}{Q}$ and $\lossVar{Q}{P_\lambda}$ are at least
\[
 (1-x-5x^2)\,\lambda^2\chiVar,
 \qquad\text{where}\qquad
 x\coloneqq\lambda\sqrt{m_4/\chiVar} .
\]
In particular, if $|\ratioFunc{\omega}{P}{Q}-1|\le\rangeBnd$ for $Q$-almost every $\omega$ and $\lambda\rangeBnd\le1$, then both are at least $(1-6\lambda\rangeBnd)\,\lambda^2\chiVar$. Furthermore, for every set $\truncSet$, the variances $\truncVar{P_\lambda}{Q}$ and $\truncVar{Q}{P_\lambda}$ of the truncated privacy losses $\lossFunc{\omega}{P_\lambda}{Q}\,\ind\{\omega\in\truncSet\}$ for $\omega\sim P_\lambda$ and $\lossFunc{\omega}{Q}{P_\lambda}\,\ind\{\omega\in\truncSet\}$ for $\omega\sim Q$ are both at least
\[
 \left(1-x-5x^2-\frac{3x}{\lambda}\sqrt{\frac{Q(\truncSet^c)}{\chiVar}}\right)\lambda^2\chiVar .
\]
\end{lemma}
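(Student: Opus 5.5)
Write $Z\coloneqq\ratioFunc{\omega}{P}{Q}-1$ for $\omega\sim Q$. Then $\mathbb E_Q Z=0$, $\mathbb E_Q Z^2=\chiVar$ and $\mathbb E_Q Z^4=m_4$, and $P_\lambda/Q=1+\lambda Z$. Let $u\coloneqq\lambda Z$. In the remove direction the privacy loss of $(P_\lambda,Q)$ is $\ln(1+u)$ under $P_\lambda$, whose density relative to $Q$ is $1+u$. In the add direction the loss is $-\ln(1+u)$ under $Q$. Both variances are therefore expectations over $Q$ of functions of $u$ alone. The plan is to lower bound each variance by a second-order Taylor comparison with $u$ and to control the remainder with the fourth moment. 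By Cauchy--Schwarz, $\mathbb E_Q|u|^3\le\sqrt{\mathbb E_Q u^2\,\mathbb E_Q u^4}=\lambda^3\sqrt{\chiVar m_4}=x\,\lambda^2\chiVar$, and $\mathbb E_Q u^4=\lambda^4m_4=x^2\lambda^2\chiVar$. These two identities explain why the target has the form $(1-x-5x^2)\lambda^2\chiVar$.

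\emph{Add direction.} Write $\Var_Q(\ln(1+u))=\mathbb E_Q[\ln(1+u)^2]-(\mathbb E_Q\ln(1+u))^2$. Since $u\ge-\lambda>-1/2$, we have $\ln(1+u)^2\ge u^2-u^3$; I will verify this elementary inequality on $(-1/2,\infty)$, possibly with an extra $cu^4$ term if needed. For the mean, $-\mathbb E_Q\ln(1+u)\in[0,\mathbb E_Q u^2]$, using $\mathbb E_Q u=0$ and $u-\ln(1+u)\le u^2$ for $u\ge-1/2$. Hence $(\mathbb E_Q\ln(1+u))^2\le(\lambda^2\chiVar)^2\le\lambda^4 m_4$ by Jensen. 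Combining, $\Var\ge\lambda^2\chiVar-x\lambda^2\chiVar-c'x^2\lambda^2\chiVar$.

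\emph{Remove direction.} Write $\Var_{P_\lambda}=\mathbb E_Q[(1+u)\ln(1+u)^2]-(\mathbb E_Q[(1+u)\ln(1+u)])^2$. Use $(1+u)\ln(1+u)^2\ge u^2-c u^4$ on $u>-1/2$; the cubic term cancels to leading order, since the expansion is $u^2+0\cdot u^3+\dots$. For the mean, $0\le(1+u)\ln(1+u)-u\le u^2$, so the mean is at most $\lambda^2\chiVar$ and its square is at most $\lambda^4m_4=x^2\lambda^2\chiVar$. This direction is even better than the claim. The main obstacle is fixing the constants in the two scalar inequalities so that the $x^2$ coefficient is at most $5$. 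I would check the inequalities on $(-1/2,\infty)$ by calculus, treating large $u$ separately, where $\ln^2$ grows slowly and the $u^4$ term dominates anyway.

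\emph{Bounded case.} If $|Z|\le\rangeBnd$, then $m_4\le\rangeBnd^2\chiVar$, so $x\le\lambda\rangeBnd\le1$ and $x^2\le x$. This gives $1-x-5x^2\ge1-6\lambda\rangeBnd$.

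\emph{Truncation.} Let $f(\omega)$ denote the relevant loss and let $\nu$ be its distribution, either $P_\lambda$ or $Q$. Write $f\ind_\truncSet=f-f\ind_{\truncSet^c}$. By the triangle inequality for standard deviations, $\sqrt{\Var(f\ind_\truncSet)}\ge\sqrt{\Var f}-\sqrt{\mathbb E_\nu[f^2\ind_{\truncSet^c}]}$, so $\Var(f\ind_\truncSet)\ge\Var f-2\sqrt{\Var f}\sqrt{\mathbb E_\nu[f^2\ind_{\truncSet^c}]}$. Bound $\mathbb E_\nu[f^2\ind_{\truncSet^c}]$ by Cauchy--Schwarz against $\sqrt{\nu(\truncSet^c)}$ and a fourth-moment quantity. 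We have $f^2\le c\,u^2$ on $u>-1/2$, since $\ln(1+u)^2\le 4u^2$ there. The density $1+u$ is handled by $\mathbb E_Q[(1+u)u^4]\lesssim \mathbb E_Q u^4$, plus a fifth-moment term that I expect to avoid by bounding $(1+u)\ln^2(1+u)\le C u^2$. This gives $\mathbb E_\nu[f^2\ind_{\truncSet^c}]\le C\sqrt{Q(\truncSet^c)\,\lambda^4 m_4}$. Since $\Var f\le\lambda^2\chiVar$ by \cref{lem:chi2-moments}, the subtracted term is at most $C''\lambda^2\chiVar\cdot\frac{x}{\lambda}\sqrt{Q(\truncSet^c)/\chiVar}$, modulo bookkeeping of exponents. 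I would match the stated coefficient $3$ by choosing the constants carefully, possibly by bounding the second moment directly rather than via standard deviations. This bookkeeping, together with the relation between $P_\lambda(\truncSet^c)$ and $Q(\truncSet^c)$, is the part I expect to need the most care.
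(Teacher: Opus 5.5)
\textbf{Where the proposal fails: the truncation step.} Your handling of the untruncated variances and of the bounded case is sound. It is essentially the paper's argument written directly in $u$; the paper instead routes it through \cref{lem:chi2-tight} and the identity $\chisq{Q}{P_\lambda}=\lambda^2\mathbb E_Q[Z^2/(1+\lambda Z)]$. The scalar inequalities you defer do hold:
\begin{itemize}
\item $\ln^2(1+u)\ge u^2-u^3$ on all of $(-1,\infty)$, by comparing $\ln(1+u)$ with $u-u^2/2$ on either side of $0$;
\item $(1+u)\ln^2(1+u)\ge u^2-\tfrac{4}{9}u^4$ for $u\ge-1/2$, from $|\ln(1+u)|\ge 2|u|/(2+u)$.
\end{itemize}

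The gap in the truncation step is not bookkeeping. The triangle inequality for standard deviations gives $\Var(f\ind_\truncSet)\ge\Var f-2\sqrt{\Var f}\,\|f\ind_{\truncSet^c}\|_{L^2(\nu)}$. Here $\Var f\approx\lambda^2\chiVar$ and $\|f\ind_{\truncSet^c}\|^2_{L^2(\nu)}\approx\lambda^2\tau\chiVar$, where $\tau\coloneqq\mathbb E_Q[Z^2\ind_{\truncSet^c}]/\chiVar$. So you subtract a term of order $\lambda^2\chiVar\sqrt\tau$. After your Cauchy--Schwarz step it is of order $\lambda^2\chiVar\sqrt\rho$, with $\rho\coloneqq\sqrt{m_4Q(\truncSet^c)}/\chiVar=(x/\lambda)\sqrt{Q(\truncSet^c)/\chiVar}$.

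The statement allows only $3\rho\lambda^2\chiVar$. It is nontrivial only when $\rho<1/3$, and there $2\sqrt\rho>3\rho$, with the ratio unbounded as $Q(\truncSet^c)\to0$. So no choice of constants rescues this route. Using the exact $\tau\le\rho$ instead does not help either, since $\tau$ can be comparable to $\rho$ (e.g.\ when $\truncSet^c$ carries a rare, large likelihood ratio). The square root is lost because the triangle inequality ignores that $f\ind_\truncSet$ and $f\ind_{\truncSet^c}$ have disjoint supports. Separately, in the add direction \cref{lem:chi2-moments,lem:subsampling-chi2} give only $\Var f\le\lambda^2\chiVar/(1-\lambda)$, not $\lambda^2\chiVar$.

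\textbf{The fix.} Your fallback works: compute $\Var(f\ind_\truncSet)=\mathbb E_\nu[f^2\ind_\truncSet]-(\mathbb E_\nu[f\ind_\truncSet])^2$ directly.
\begin{itemize}
\item \emph{Second moments.} Multiply your pointwise inequalities by $\ind_\truncSet$ before taking $Q$-expectations. After the change of measure only $Q(\truncSet^c)$ appears, so $P_\lambda(\truncSet^c)$ never matters. The two second moments are then at least $(1-\tau-x)\lambda^2\chiVar$ and $(1-\tau-\tfrac{4}{9}x^2)\lambda^2\chiVar$, losing only $\tau\le\rho$.
\item \emph{Means.} This is the ingredient missing from your plan. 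After truncation the mean is no longer of second order, because $\mathbb E_Q[u\ind_\truncSet]=-\lambda\mathbb E_Q[Z\ind_{\truncSet^c}]$ is first order, of size up to $\lambda\sqrt{\chiVar Q(\truncSet^c)}$. Using $0\le u-\ln(1+u)\le u^2$ and $0\le(1+u)\ln(1+u)-u\le u^2$ for $u\ge-1/2$, both truncated means are at most $\lambda^2\chiVar+\lambda\sqrt{\chiVar Q(\truncSet^c)}$ in absolute value. Their squares are therefore at most $(2x^2+2Q(\truncSet^c))\lambda^2\chiVar$.
\item \emph{Combining.} Since $\chiVar^2\le m_4$, we get $2Q(\truncSet^c)\le2\sqrt{Q(\truncSet^c)}\le2\rho$. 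Hence $\tau+2Q(\truncSet^c)\le3\rho$, which is exactly the coefficient $3$ in the statement. The remaining $x$-terms, $x+2x^2$ and $\tfrac{22}{9}x^2$, fit within $x+5x^2$.
\end{itemize}
This is precisely the paper's argument.
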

Together with the upper bounds $\lossVar{P_\lambda}{Q}\le\lambda^2\chiVar$ and $\lossVar{Q}{P_\lambda}\le\lambda^2\chiVar/(1-\lambda)$ of \cref{lem:chi2-moments,lem:subsampling-chi2}, this shows that $\subsampVar$ exceeds $t$ times either variance by a factor of at most $1+O\left(\lambda\left(1+\sqrt{m_4/\chiVar}\right)\right)$. In a typical application, $\lambda=O(1/t)$ and the composition has a constant privacy parameter, and thus $\sqrt{\subsampVar}=O(1)$. In this setting the factor is $1+O\left(\left(1+\sqrt{m_4/\chiVar}\right)/t\right)$, which is $1+O(1/t)$ for a fixed pair $(P,Q)$. The ratio $m_4/\chiVar$ can be seen as a measure of how well the privacy loss of $(P,Q)$ is concentrated. For the Gaussian pair it is $(3+o(1))/\sigma^2$ as $\sigma\to\infty$ and $(1+o(1))\,e^{5/\sigma^2}$ as $\sigma\to0$ (since the likelihood ratio has a log-normal distribution under $Q$).

For the truncated privacy losses used in \cref{thm:subsampled,cor:subsampled-approx}, the truncation removes outcomes of probability at most $\tailShift/(2t)$. By the last part of \cref{lem:chi2-tight-sub}, it therefore adds $O\left(\sqrt{m_4\tailShift/t}/\chiVar\right)$ to the relative error. For well-concentrated losses the truncation effect is usually negligible. For example, for the Gaussian pair $m_4/\chiVar^2$ is $3+o(1)$ as $\sigma\to\infty$ and $(1+o(1))\,e^{4/\sigma^2}$ as $\sigma\to0$. Thus this term is $O\left(\sqrt{\tailShift/t}\right)$ for a fixed $\sigma$. With the cutoff \cref{eq:gauss-cutoff}, the truncation in fact adds only $(\tailShift/t)^{1-o(1)}$ to the relative error.

Combining the bounds in \cref{lem:chi2-tight-sub} with \cref{cor:subsampled-approx}, we get the following lower bounds on the parameters of the Gaussian concentration.
\begin{corollary}[Lower bounds on concentration parameters]\label{cor:subsampled-lower}
Under the conditions of \cref{cor:subsampled-approx}, let $\chiVar>0$, and let $m_4<\infty$ and $x$ be as in \cref{lem:chi2-tight-sub}. Then \cref{cor:subsampled-approx} holds with $\mu'$ and $V'$ that also satisfy
\begin{align*}
 \mu'&\ge(1-\lambda)\left(1-x-\frac1{\chiVar}\sqrt{\frac{m_4\tailShift}{2t}}\right)\frac{\subsampVar}2-\frac{\lambda\tailShift}2,\\
 V'&\ge(1-\lambda)\left(1-x-5x^2-\frac3{\chiVar}\sqrt{\frac{m_4\tailShift}{2t}}\right)\subsampVar .
\end{align*}
If $\lossRVfunc{P}{Q}$ and $\lossRVfunc{Q}{P}$ have the same distribution, then the bound on $\mu'$ holds without the term $\lambda\tailShift/2$.
\end{corollary}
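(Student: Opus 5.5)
The argument reuses the construction in the proof of \cref{cor:subsampled-approx} and adds lower bounds on the truncated moments. There the truncation set $\truncSet$ is the set of $\omega$ on which $|\ratioFunc{\omega}{P}{Q}-1|\le\rangeBnd$ (equivalently, neither tail event occurs). By the hypotheses, both $P_\lambda(\truncSet^c)$ and $Q(\truncSet^c)$ are at most $\tailShift/(2t)$. The Gaussian-concentration parameters are $\mu'=t\,\truncMean{\cdot}{\cdot}$ and $V'=t\,\truncVar{\cdot}{\cdot}$ in each direction. So it suffices to bound each truncated moment from below by the claimed fraction of $\subsampVar/t=\lambda^2\chiVar/(1-\lambda)$. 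Writing $\lambda^2\chiVar=(1-\lambda)\subsampVar/t$ accounts for the factor $(1-\lambda)$ in front.

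For the variance I will apply the last part of \cref{lem:chi2-tight-sub} with this $\truncSet$. It gives $\truncVar{}{}\ge\bigl(1-x-5x^2-\tfrac{3x}{\lambda}\sqrt{Q(\truncSet^c)/\chiVar}\bigr)\lambda^2\chiVar$. Substituting $x=\lambda\sqrt{m_4/\chiVar}$ and $Q(\truncSet^c)\le\tailShift/(2t)$ turns the last term into $\tfrac{3}{\chiVar}\sqrt{m_4\tailShift/(2t)}$. Multiplying by $t$ gives the bound on $V'$ in both directions.

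For the mean I need a lower bound on the untruncated mean of the form $\lossMean{P_\lambda}{Q}\ge(1-x)\lambda^2\chiVar/2$, and similarly for $(Q,P_\lambda)$. The plan is to write the likelihood ratio as $1+\lambda Z$ with $Z=\ratioFunc{\omega}{P}{Q}-1$, which has mean zero and variance $\chiVar$ under $Q$. In the remove direction the mean is $\mathbb E_Q[(1+\lambda Z)\ln(1+\lambda Z)]$. I will use the inequality $(1+u)\ln(1+u)\ge u+u^2/2-u^3/6$, valid for $u>-1$. The cubic term is then controlled by Cauchy--Schwarz, $\mathbb E|Z|^3\le\sqrt{\chiVar m_4}$, which yields $\lambda^2\chiVar/2-\lambda^3\sqrt{\chiVar m_4}/6\ge(1-x)\lambda^2\chiVar/2$. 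In the add direction the mean is $\mathbb E_Q[-\ln(1+\lambda Z)]$, and $-\ln(1+u)\ge -u+u^2/2-u^3/3$ gives the analogous bound. This is presumably the same computation that underlies the first part of \cref{lem:chi2-tight-sub}, so I would extract it from that proof.

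Passing from the full mean to the truncated one changes it by $-\mathbb E[\loss\,\ind\{\truncSet^c\}]$. I will bound this by Cauchy--Schwarz: $|\mathbb E[\loss\ind\{\truncSet^c\}]|\le\sqrt{\mathbb E[\loss^2]}\sqrt{\Pr(\truncSet^c)}$. The second moment $\mathbb E[\loss^2]$ can be bounded via $\ln^2(1+u)\lesssim u^2$ plus a fourth-moment correction, which should give a contribution of order $\lambda^2\sqrt{m_4\tailShift/(2t)}$, i.e.\ the relative term $\tfrac1{\chiVar}\sqrt{m_4\tailShift/(2t)}$ times $\lambda^2\chiVar/2$. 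An alternative is to reuse the truncation estimate from the proof of \cref{lem:chi2-tight-sub}. The extra $-\lambda\tailShift/2$ is the same correction that appears in the upper bound of \cref{cor:subsampled-approx}: there $\mu'$ was bounded after accounting for the change of measure between $P_\lambda$ and $Q$ on $\truncSet^c$, which costs at most $\lambda\cdot\tailShift/(2t)$ per step, and that term vanishes in the symmetric case as before.

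The main obstacle is this truncated-mean step. Getting exactly the constant $1$ in front of $\sqrt{m_4\tailShift/(2t)}/\chiVar$ requires care in bounding $\mathbb E[|\loss|\ind\{\truncSet^c\}]$ under the right measure, $P_\lambda$ versus $Q$. I would first try writing everything under $Q$ with density $1+\lambda Z$, and using $|\loss|(1+\lambda Z)\le\lambda|Z|\cdot(\text{const})$ together with Cauchy--Schwarz and $\mathbb E_Q Z^2\le\sqrt{m_4}$.
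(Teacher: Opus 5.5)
Your variance bound is correct and is the paper's argument: the last part of \cref{lem:chi2-tight-sub} with $Q(\truncSet^c)\le\tailShift/(2t)$, followed by $t\lambda^2\chiVar=(1-\lambda)\subsampVar$. Your lower bound on the untruncated means, via the two cubic inequalities and $\mathbb{E}_Q|Z|^3\le\sqrt{\chiVar m_4}$, is also fine. (A minor point: the set $\truncSet$ in the proof of \cref{cor:subsampled-approx} is $\{-\rangeBnd\le\lossFunc{\omega}{P}{Q}\le\ln(1+\rangeBnd)\}$. This implies $|r-1|\le\rangeBnd$ but is not equivalent to it.)

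The gap is the step from the full mean to the truncated mean. Bounding $|\mathbb{E}[\ell\,\ind\{\truncSet^c\}]|$ by $\sqrt{\mathbb{E}[\ell^2]}\sqrt{\mathbb{P}(\truncSet^c)}$ cannot give the claimed term. By \cref{lem:chi2-tight-sub}, $\mathbb{E}[\ell^2]$ is at least $(1-x-5x^2)\lambda^2\chiVar$. So this route gives an error of order $\lambda\sqrt{\chiVar\tailShift/(2t)}$, not $\lambda^2\sqrt{m_4\tailShift/(2t)}$ as you expect. That exceeds the target $\frac{\lambda^2}{2}\sqrt{m_4\tailShift/(2t)}$ by a factor $2/x$, which is of order $1/\lambda$. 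The extra $\lambda\tailShift/(2t)$ cannot absorb it unless $\tailShift/(2t)\gtrsim\chiVar$.

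Your fallback under $Q$ fails for the same reason: Cauchy--Schwarz on $\lambda\,\mathbb{E}_Q[|Z|\ind\{\truncSet^c\}]$ again gives $\lambda\sqrt{\chiVar\,Q(\truncSet^c)}$. Moreover, $|\ell|(1+\lambda Z)\le c\lambda|Z|$ is false for large $Z$, which is exactly where $\truncSet^c$ lies.

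The missing idea is that the first-order part of the loss must be computed exactly, not estimated. We have $\mathbb{E}_Q[Z\,\ind\{\truncSet\}]=P(\truncSet)-Q(\truncSet)=Q(\truncSet^c)-P(\truncSet^c)$. This is at most $\tailShift/(2t)$ in absolute value, linear rather than square-root in the tail mass, and it vanishes for $\truncSet_2$ in the symmetric case.

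The paper uses this by multiplying your pointwise inequalities by $\ind\{\truncSet\}$ before integrating. The one-step truncated means in the remove and add directions are then at least
\[
 \pm\lambda\,\mathbb{E}_Q[Z\,\ind\{\truncSet\}]+\frac{\lambda^2}{2}\mathbb{E}_Q[Z^2\ind\{\truncSet\}]-\frac{\lambda^3}{3}\mathbb{E}_Q|Z|^3 .
\]
The first term contributes $-\lambda\tailShift/(2t)$. Only the quadratic term loses something: $\frac{\lambda^2}{2}\mathbb{E}_Q[Z^2\ind\{\truncSet^c\}]\le\frac{\lambda^2}{2}\sqrt{m_4Q(\truncSet^c)}$ by Cauchy--Schwarz against the fourth moment. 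That is the source of the $\frac1{\chiVar}\sqrt{m_4\tailShift/(2t)}$ term. The cubic term is at most $\frac{x}{3}\lambda^2\chiVar$. Summing over the $t$ steps gives the stated bound on $\mu'$.
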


\subsection{The subsampled Gaussian mechanism}\label{sec:gauss-two-sided}
In the rest of this section $(P,Q)$ is the Gaussian pair, which dominates the Gaussian mechanism \cref{eq:gauss-mech} (\cref{clm:gauss-dom}), and $\chiVar=e^{1/\sigma^2}-1$. Both privacy losses $\lossRVfunc{P}{Q}$ and $\lossRVfunc{Q}{P}$ are distributed as $\Nrm(1/(2\sigma^2),1/\sigma^2)$ (\cref{clm:gauss-pair}).

For the bound by a Gaussian privacy profile we use
\begin{equation}\label{eq:gauss-cutoff}
 \rangeBnd\coloneqq\exp\left(\frac1{2\sigma^2}+\frac{\sqrt{2\ln(2t/\tailShift)}}{\sigma}\right)-1,
\end{equation}
where $0<\tailShift\le1$. As the privacy losses are Gaussian, the tail conditions of \cref{cor:subsampled-approx} hold for this $\rangeBnd$ (see the proof of \cref{cor:gauss-poisson}). Thus \cref{cor:subsampled-approx} applies (without the term $\lambda\tailShift/2$ of $\lossShift$, as the two privacy losses have the same distribution). Using \cref{clm:gauss-dom,lem:dom-comp-sub}, we then obtain the following bound for DP-SGD, that is, for the $t$-step adaptive composition of the $\lambda$-subsampled Gaussian mechanism.

\begin{corollary}[Shifted GDP of the subsampled Gaussian mechanism]\label{cor:gauss-poisson}
There is a universal constant $C>0$ such that the following holds. Let $\sigma>0$, $t\in\naturals$, $\lambda \in (0,1/2)$ and $0<\tailShift\le1$, and let $\rangeBnd$ be given by \cref{eq:gauss-cutoff}. Then the $t$-step adaptive composition of the $\lambda$-subsampled Gaussian mechanism \cref{eq:gauss-mech} is $(\lossShift,\tailShift)$-shifted $\sqrt{\subsampVar}$-GDP, where
\[
 \subsampVar\coloneqq\frac{t\lambda^2\left(e^{1/\sigma^2}-1\right)}{1-\lambda}
 \qquad\text{and}\qquad
 \lossShift\coloneqq4C\lambda\rangeBnd\ln(2/\tailShift)+\frac{\subsampVar}2 .
\]
\end{corollary}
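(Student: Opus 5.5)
The plan is to apply \cref{cor:subsampled-approx} to the Gaussian pair $(P,Q)=(\Nrm(1,\sigma^2),\Nrm(0,\sigma^2))$ and then transfer the resulting shifted GDP guarantee to the adaptive composition by domination. By \cref{clm:gauss-dom} the Gaussian pair dominates the Gaussian mechanism \cref{eq:gauss-mech} for every choice of $f$, and hence at every step for all outputs of the previous steps. By \cref{lem:dom-comp-sub}, the pair $(P_\lambda^t,Q^t)$ then dominates the $t$-step adaptive composition of the $\lambda$-subsampled mechanisms in the remove direction, and $(Q^t,P_\lambda^t)$ dominates it in the add direction. Since shifted GDP is preserved by domination and \cref{def:shifted-gdp} is symmetric in the two directions, it suffices to show that the pair $(P_\lambda^t,Q^t)$ is $(\lossShift,\tailShift)$-shifted $\sqrt{\subsampVar}$-GDP.

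For the Gaussian pair we have $\chiVar=\chisq{P}{Q}=e^{1/\sigma^2}-1$: the second moment of the likelihood ratio $e^{(2\omega-1)/(2\sigma^2)}$ under $Q$ is $e^{1/\sigma^2}$. This matches the $\subsampVar$ in the statement. Both $\lossRVfunc{P}{Q}$ and $\lossRVfunc{Q}{P}$ are distributed as $\Nrm(1/(2\sigma^2),1/\sigma^2)$; the second follows from the reflection $\omega\mapsto 1-\omega$. Therefore they have the same distribution, and the refined form of \cref{cor:subsampled-approx} applies, without the $\lambda\tailShift/2$ term. This yields exactly $\lossShift=4C\lambda\rangeBnd\ln(2/\tailShift)+\subsampVar/2$ with the same universal constant $C$.

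The remaining step is to verify the tail conditions of \cref{cor:subsampled-approx} for $\rangeBnd$ as in \cref{eq:gauss-cutoff}. Write $L\sim\Nrm(1/(2\sigma^2),1/\sigma^2)$ and $a\coloneqq\sqrt{2\ln(2t/\tailShift)}$. By construction $\ln(1+\rangeBnd)=1/(2\sigma^2)+a/\sigma$, so
\[
\prob{}{\lossRVfunc{P}{Q}>\ln(1+\rangeBnd)}=\Phibar(a)\le \tfrac12e^{-a^2/2}=\frac{\tailShift}{4t}.
\]
For the other direction, $\rangeBnd\ge\ln(1+\rangeBnd)$ because $e^x-1\ge x$. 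Hence
\[
\prob{}{\lossRVfunc{Q}{P}>\rangeBnd}\le\prob{}{L>\ln(1+\rangeBnd)}\le\frac{\tailShift}{4t}.
\]
Both conditions therefore hold.

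The only point needing care is this tail check: the cutoff must be matched to the constant $1/2$ in $\Phibar(w)\le e^{-w^2/2}/2$ so that the bound comes out to exactly $\tailShift/(4t)$. The choice $a=\sqrt{2\ln(2t/\tailShift)}$ does this, since $e^{-a^2/2}=\tailShift/(2t)$. Everything else is a direct invocation of earlier results.
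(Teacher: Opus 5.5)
Your proposal is correct and follows the paper's proof essentially step for step. You verify the two tail conditions of \cref{cor:subsampled-approx} via $\Phibar(w)\le e^{-w^2/2}/2$ and $\ln(1+\rangeBnd)\le\rangeBnd$, use the equality in distribution of the two Gaussian privacy losses to drop the $\lambda\tailShift/2$ term, and transfer the bound to the adaptive composition through \cref{clm:gauss-dom} and \cref{lem:dom-comp-sub}; the only difference is that you derive $\chiVar=e^{1/\sigma^2}-1$ and the privacy loss distributions directly instead of citing \cref{clm:gauss-pair}.
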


The $\eps$ that \cref{cor:gauss-poisson} gives exceeds that of $\sqrt{\subsampVar}$-GDP by the shift $\lossShift$, which has two sources.
The first is the term $\subsampVar/2$, which bounds the excess of the mean of the privacy loss over half its variance, a bias of the privacy profile itself.
For $\sigma\ge1$ it is of order $t\lambda^2/\sigma^2$. Thus it is small compared with the central limit rate $\lambda\sqrt{t\ln(1/\delta)}/\sigma$ when $\lambda\sqrt t\ll\sigma\sqrt{\ln(1/\delta)}$ (for example, for $\lambda=1/t$).
The second is the loss shift of the Gaussian concentration, which is proportional to the cutoff $\rangeBnd$.
The ratio between this term and the central limit rate does not depend on $\lambda$. It is small when $\rangeBnd$ is well below $\sqrt t/\left(\sigma\sqrt{\ln(1/\delta)}\right)$.

Converting this result into an $(\eps,\delta)$-DP guarantee, we get the following for some universal constant $C>0$ and every $\lambda\in(0,1/2)$, $t\ge2$ and $0<\delta\le1/2$. If $\sigma\ge1$ and
\begin{equation}\label{eq:gauss-conditions}
 \sigma\ln\left(1+\frac{\sqrt t}{\sigma\sqrt{C\ln(1/\delta)}}\right)\ge C\sqrt{\ln(t/\delta)},
\end{equation}
then the $t$-step adaptive composition of the $\lambda$-subsampled Gaussian mechanism is $(\eps,\delta)$-DP for $\eps=O\left(t\lambda^2/\sigma^2+\lambda\sqrt{t\ln(1/\delta)}/\sigma\right)$ (\cref{clm:gauss-closed-form}).
For $\lambda=1/t$ and sufficiently large $t$, the bound above is $\eps=O\left(\sqrt{\ln(1/\delta)}/\left(\sigma\sqrt t\right)\right)$, the central limit rate.
The following corollary gives its leading constant for a fixed $\sigma$ as $t\to\infty$, by using the sharper bound of \cref{lem:gdp-small-var} on $\eps_{\mathrm G}$ instead of the tail bound.
Its first claim supports a polynomial decay $\delta=t^{-\theta}$ of $\delta$ in $t$, and its second claim provides guarantees for $\delta$ that decays faster than any polynomial in $t$.
\begin{corollary}[Asymptotic upper bounds for the subsampled Gaussian mechanism]\label{cor:gauss-leading}
Fix $\sigma>0$, let $\chiVar\coloneqq e^{1/\sigma^2}-1$, and let $\alg$ be the $t$-step adaptive composition of the $1/t$-subsampled Gaussian mechanism \cref{eq:gauss-mech}. If $\delta=t^{-\theta}$ for a fixed $\theta>1/2$, then, as $t\to\infty$,
\[
 \privParam{\alg}(\delta)\le(1+o(1))\sqrt{\frac{(2\theta-1)\,\chiVar\ln(t)}{t}} .
\]
If $\delta=\delta(t)$ satisfies $\ln(1/\delta)/\ln(t)\to\infty$ and $\limsup_{t\to\infty}8\ln(1/\delta)/(\sigma\ln(t))^2<1$, then
\[
 \privParam{\alg}(\delta)\le(1+o(1))\sqrt{\frac{2\chiVar\ln(1/\delta)}{t}} .
\]
\end{corollary}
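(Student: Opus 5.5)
The plan is to apply \cref{cor:gauss-poisson} with $\lambda=1/t$, convert its shifted GDP guarantee to $(\eps,\delta)$-DP through \cref{eq:approx-inverse}, and show that every term other than the leading one is $o\big(\sqrt{\chiVar\ln(1/\delta)/t}\big)$ (respectively $o\big(\sqrt{\chiVar\ln t/t}\big)$). With $\lambda=1/t$ we have $\subsampVar=\chiVar/(t-1)\to0$. We choose the tail shift $\tailShift=\delta/\ln t$, or any $\tailShift=o(\delta)$ with $\ln(1/\tailShift)=(1+o(1))\ln(1/\delta)$. Then \cref{eq:approx-inverse} gives
\[
\privParam{\alg}(\delta)\le\eps_{\mathrm G}(\subsampVar,\delta-\tailShift)+4C\rangeBnd\ln(2/\tailShift)/t+\subsampVar/2 .
\]
The last term is $O(1/t)$ and is negligible.

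\textbf{Leading term.} For fixed $V\to0$ we need $\eps_{\mathrm G}(V,\delta)$. The crude tail bound $V/2+\sqrt{2V\ln(1/\delta)}$ already gives the second claim, since $\ln(1/(\delta-\tailShift))=(1+o(1))\ln(1/\delta)$. For the first claim, with $\delta=t^{-\theta}$ and $V\approx\chiVar/t$, the tail bound alone would give $\sqrt{2\theta}$ rather than $\sqrt{2\theta-1}$. So I would use the sharper estimate of \cref{lem:gdp-small-var}:
\[
g_V(\eps)\approx\Phibar\!\left(\tfrac{\eps}{\sqrt V}-\tfrac{\sqrt V}{2}\right)-e^{\eps}\Phibar\!\left(\tfrac{\eps}{\sqrt V}+\tfrac{\sqrt V}{2}\right)\approx \frac{V^{3/2}}{\eps^2}\phi(\eps/\sqrt V)\cdot\Theta(1).
\]
The first-order Gaussian tail terms cancel, leaving a factor $V/\eps$ times the density ratio. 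Setting this equal to $\delta$ with $\eps^2/V=u$ gives $e^{-u/2}\,V/u\approx\delta$, so $u\approx 2\ln(1/\delta)-2\ln(1/V)=(2\theta-2+o(1))\ln t$.

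This looks off against the stated $2\theta-1$, so the precise form of \cref{lem:gdp-small-var} matters. Most likely the polynomial prefactor is $\sqrt V/\eps\cdot\phi$, which yields $u=(2\theta-1+o(1))\ln t$ and matches the claim. I would take the lemma's statement as given and just check that $\ln(1/\tailShift)$ perturbs $u$ by only $o(\ln t)$, which $\tailShift=\delta/\ln t$ ensures.

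\textbf{Loss shift.} This is the main obstacle. We need $\rangeBnd\ln(2/\tailShift)/t=o\big(\sqrt{\chiVar L/t}\big)$, where $L=\ln(1/\delta)$ or $L=\ln t$; that is, $\rangeBnd=o\big(\sqrt{t/L}\big)$ up to constants. From \cref{eq:gauss-cutoff}, $\ln(1+\rangeBnd)=1/(2\sigma^2)+\sqrt{2\ln(2t/\tailShift)}/\sigma$.

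In the first case $\ln(2t/\tailShift)=O(\ln t)$, so $\rangeBnd=e^{O(\sqrt{\ln t})}=t^{o(1)}$, and the condition holds. In the second case $\ln(2t/\tailShift)=(1+o(1))\ln(1/\delta)$. We need
\[
\sqrt{2\ln(1/\delta)}/\sigma<\tfrac12\ln t-\tfrac12\ln\ln(1/\delta)-\omega(1),
\]
that is, $8\ln(1/\delta)/(\sigma\ln t)^2<1$ asymptotically. This is exactly the limsup hypothesis, provided the $\ln\ln(1/\delta)$ term is absorbed: if the limsup is $c<1$, there is slack of order $(1-\sqrt c)\ln t$, which dominates $\ln\ln(1/\delta)=O(\ln\ln t)$. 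Here $\ln(1/\delta)=O(\ln^2 t)$ by the hypothesis. With this slack the ratio of the shift to the leading term is $t^{-\Omega(1)}\to0$, which completes the argument.
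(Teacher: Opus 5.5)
Your route is the paper's. You apply \cref{cor:gauss-poisson} with $\lambda=1/t$ (so $\subsampVar=\chiVar/(t-1)$), convert through \cref{eq:approx-inverse}, and use \cref{lem:gdp-small-var} for the leading constant. You then show the loss shift is lower order: $\rangeBnd=t^{o(1)}$ in the first regime, and in the second the $\limsup$ hypothesis leaves slack of order $\ln t$ in $\ln(1+\rangeBnd)$. The paper phrases the latter as $\ln(1+\rangeBnd)\le(1+o(1))\ln(t)/(2(1+\gamma))$. Your loss-shift analysis is correct. Your choice $\tailShift=\delta/\ln t$ instead of $\delta/2$ is harmless, and so is using the plain tail bound in the second regime (the paper uses \cref{lem:gdp-small-var} there too, but it is not needed).

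What needs fixing is your justification of the constant $2\theta-1$, which is wrong in two places even though the conclusion holds.

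\begin{itemize}
\item Your approximation $g_V(\eps)\approx V^{3/2}\eps^{-2}\phi(\eps/\sqrt V)$ is actually correct. But with $u=\eps^2/V$ you get $V^{3/2}/\eps^2=\sqrt V/u$, not $V/u$. The equation is therefore $e^{-u/2}\sqrt V/u\approx\delta$, which gives $u=2\ln(1/\delta)-\ln(1/V)-O(\ln u)=(2\theta-1+o(1))\ln t$ directly.
\item Your proposed repair, a prefactor $\sqrt V/\eps=1/\sqrt u$, has no factor polynomial in $t$ and would give back $2\theta$. The $-1$ comes precisely from the factor $\sqrt V\asymp t^{-1/2}$ in the prefactor.
\end{itemize}

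What \cref{lem:gdp-small-var} actually states is $\eps_{\mathrm G}(V,\delta)\le V/2+\sqrt{kV}$ whenever $k\coloneqq2\ln(1/\delta)-\ln(1/V)\ge1$. It follows from the zCDP bound $g_V(\eps)\le e^{-(\eps-V/2)^2/(2V)}/(1+(\eps-V/2)/V)$, whose prefactor is about $V/\eps=\sqrt{V/u}$. Apply it with $V=\subsampVar$ and $\delta-\tailShift$ in place of $\delta$. This gives $k=(2\theta-1+o(1))\ln t\to\infty$ and $\subsampVar/2=o(\sqrt{k\subsampVar})$, hence the first claim. With the lemma invoked in this form, your proof is complete.
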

For $\lambda=1/t$ both sources of the shift are of lower order than these bounds. The bias is of order $1/t$. In both regimes the cutoff $\rangeBnd$ grows polynomially slower than $\sqrt t$. Therefore the loss shift is of lower order as well.
The leading constant therefore comes from $\eps_{\mathrm G}(\subsampVar,\delta/2)$ with $\subsampVar=\chiVar/(t-1)$.
The privacy profile of $\sqrt{\subsampVar}$-GDP is at most a fraction of order $\sqrt{\subsampVar}$ of the tail of its privacy loss. Accounting for this factor through \cref{lem:gdp-small-var}, instead of using the tail bound, effectively replaces $\ln(1/\delta)$ by $\ln(1/\delta)-\frac12\ln(t)$. For $\delta=t^{-\theta}$ this gives the factor $2\theta-1$ in place of $2\theta$.

By \cref{lem:gauss-conditions}, the condition \cref{eq:gauss-conditions} of \cref{clm:gauss-closed-form} holds in each of the following two cases:
\begin{enumerate}
 \item $\eta \in (0,1/2)$, $t\ge t_\eta$, $\ln(1/\delta)\le t^{1/2-\eta}$ and $\sigma\ge C_\eta\max\left\{1,\sqrt{\ln(1/\delta)}/\ln(t)\right\}$, where $C_\eta$ and $t_\eta$ depend only on $\eta$;
 \item $\sigma\ge1$ and $t\ge\max\left\{t_\sigma,\exp\left(3\sqrt2\,C\sqrt{\ln(1/\delta)}/\sigma\right)\right\}$, where $t_\sigma$ depends only on $\sigma$.
\end{enumerate}
For a fixed $\sigma\ge1$ and all sufficiently large $t$, the second case allows $\delta=t^{-\theta}$ for every constant $\theta>0$.

\section{Random allocation}\label{sec:alloc}
An alternative sampling scheme to Poisson subsampling is \emph{$1$-out-of-$t$ random allocation}. This scheme consists of a $t$-step adaptive composition of algorithms $\alg_1,\ldots,\alg_t$, where every element of the dataset is assigned to one of the $t$ steps, chosen independently and uniformly at random, and the $i$-th step runs $\alg_i$ on the dataset in which the elements not assigned to it are replaced by $\bot$.
We show that the privacy losses of the dominating pair of this scheme are approximately $(\chiVar/(2t),\chiVar/t)$-Gaussian-concentrated, where $\chiVar\coloneqq\chisq{P}{Q}$ for a pair $(P,Q)$ that dominates every step. We then apply this result to the Gaussian mechanism to show that it satisfies the same bounds as the $t$-step composition of Poisson subsampling at rate $1/t$. We defer the proofs to \cref{app:alloc-proofs}.

\citet[Theorem~3.1]{FS25ra} show that if $(P,Q)$ dominates $\alg_i(\cdot\,;\out_1,\ldots,\out_{i-1})$ for every $i\in[t]$ and all outputs $\out_1,\ldots,\out_{i-1}$, then random allocation is dominated by the pair $(\Pbar{t}, Q^{t})$ in the remove direction and $(Q^{t}, \Pbar{t})$ in the add direction, where $\Pbar{t}\coloneqq\frac1t\sum_{j=1}^{t}Q^{j-1}\times P\times Q^{t-j}$.
Since $\ratioFunc{\omega_{1:t}}{Q^{j-1}\times P\times Q^{t-j}}{Q^t}=\ratioFunc{\omega_j}{P}{Q}$ for every $j\in[t]$, the likelihood ratio of $(\Pbar{t},Q^t)$ is the average $\frac1t\sum_{j=1}^t\ratioFunc{\omega_j}{P}{Q}$. We denote by $\ratioRVfunc{\Pbar{t}}{Q^t}$ and by $1/\ratioRVfunc{Q^t}{\Pbar{t}}$ its values for $\omega_{1:t}\sim\Pbar{t}$ and for $\omega_{1:t}\sim Q^t$, respectively. We characterize the concentration of these averages. We then use it to bound the PLDs dominating the random allocation scheme via the identities $\lossRVfunc{\Pbar{t}}{Q^t}=\ln(\ratioRVfunc{\Pbar{t}}{Q^t})$ (in the remove direction) and $\lossRVfunc{Q^t}{\Pbar{t}}=-\ln(1/\ratioRVfunc{Q^t}{\Pbar{t}})$ (in the add direction).
The two averages differ only in the distribution of one of their $t$ draws.

\begin{lemma}[Likelihood ratio of random allocation]\label{lem:alloc-ratio}
Let $t\in\naturals$ and $P\ll Q$, let $\ratio\coloneqq\ratioFunc{\cdot}{P}{Q}$, and let $\omega_0\sim P$ and $\omega_1,\ldots,\omega_t\sim Q$ be mutually independent. Then $1/\ratioRVfunc{Q^t}{\Pbar{t}}$ and $\ratioRVfunc{\Pbar{t}}{Q^t}$ are distributed as
\[
 \frac1t\sum_{i=1}^t\ratio(\omega_i)
 \qquad\text{and}\qquad
 \frac1t\left(\ratio(\omega_0)+\sum_{i=2}^t\ratio(\omega_i)\right),
\]
respectively.
\end{lemma}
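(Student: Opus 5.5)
The plan is to compute the likelihood ratio of $(\Pbar{t},Q^t)$ as a function of $\omega_{1:t}$ and then identify its law under each of the two input distributions.

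First, as noted before the statement, the likelihood ratio of the $j$-th mixture component against $Q^t$ is $\ratio(\omega_j)$. By linearity of densities, $\ratioFunc{\omega_{1:t}}{\Pbar{t}}{Q^t}=\frac1t\sum_{j=1}^t\ratio(\omega_j)$ for $Q^t$-almost every $\omega_{1:t}$. Since $P\ll Q$, we have $\Pbar{t}\ll Q^t$, so this identity also holds $\Pbar{t}$-almost everywhere.

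For the add direction, note that $1/\ratioRVfunc{Q^t}{\Pbar{t}}$ is by definition this ratio evaluated at $\omega_{1:t}\sim Q^t$. The coordinates are then i.i.d.\ $Q$, so the ratio is distributed as $\frac1t\sum_{i=1}^t\ratio(\omega_i)$, which is the first claim.

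For the remove direction, the ratio is evaluated at $\omega_{1:t}\sim\Pbar{t}$. Sampling from $\Pbar{t}$ means drawing $J$ uniform on $[t]$, then $\omega_J\sim P$ and all other coordinates i.i.d.\ $Q$. The function $\frac1t\sum_j\ratio(\omega_j)$ is symmetric in the coordinates. Hence, conditioned on $J=j$, its law equals the law of $\frac1t\bigl(\ratio(\omega_0)+\sum_{i=2}^t\ratio(\omega_i)\bigr)$ with $\omega_0\sim P$ and $\omega_2,\ldots,\omega_t\sim Q$ independent. This conditional law does not depend on $j$, so it is also the unconditional law, which gives the second claim.

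The only delicate point is measure-theoretic: the ratio must be defined consistently on null sets. This is handled by taking densities with respect to $Q^t$, which dominates $\Pbar{t}$ because $P\ll Q$.
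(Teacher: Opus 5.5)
Your proof is correct and follows the paper's argument. The add direction holds by definition. The remove direction comes from viewing a draw from $\Pbar{t}$ as a draw from $P\times Q^{t-1}$ whose $P$-coordinate is moved to a uniformly random position, together with the permutation invariance of the average $\frac1t\sum_j\ratio(\omega_j)$.
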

The claim for $1/\ratioRVfunc{Q^t}{\Pbar{t}}$ holds by definition. The claim for $\ratioRVfunc{\Pbar{t}}{Q^t}$ follows from the fact that a sample from $\Pbar{t}$ is a sample from $P\times Q^{t-1}$ with its first coordinate moved to a uniformly random position. Since the average is invariant under permutations of the coordinates, it has the same distribution under $\Pbar{t}$ as under $P\times Q^{t-1}$.

If the likelihood ratio $\ratio\coloneqq\ratioFunc{\cdot}{P}{Q}$ satisfies $|\ratio-1|\le \rangeBnd$, then \cref{lem:coupling} couples the average $1/\ratioRVfunc{Q^t}{\Pbar{t}}$ with a Gaussian random variable up to an error of order $(\rangeBnd/t)\ln(1/\tailShift)$, except with probability $\tailShift$.
Replacing one of its draws from $Q$ by a draw from $P$ changes the average by at most $2\rangeBnd/t$. Therefore the same Gaussian random variable also couples $\ratioRVfunc{\Pbar{t}}{Q^t}$.
Taking logarithms where the averages are close to $1$, we couple both privacy losses with a $\Nrm(\chiVar/(2t),\chiVar/t)$ random variable, which is distributed as the privacy loss of $\mathcal G_{\chiVar/t}$.
\begin{theorem}[Gaussian concentration for random allocation]\label{thm:ra-gauss}
There are universal constants $c,C>0$ such that the following holds. Let $P\ll Q$, $t\in\naturals$ and $\rangeBnd>0$, assume that $|\ratioFunc{\omega}{P}{Q}-1|\le \rangeBnd$ for $Q$-almost every $\omega$, and let $\chiVar\coloneqq\chisq{P}{Q}$. Then for every $0<\tailShift\le1/2$ with $(\rangeBnd/t)\ln(C/\tailShift)\le c$, the privacy losses of $(\Pbar{t},Q^t)$ and of $(Q^t,\Pbar{t})$ are $((C\rangeBnd/t)\ln(C/\tailShift),\tailShift)$-approximately $(\chiVar/(2t),\chiVar/t)$-Gaussian-concentrated. In particular, every algorithm dominated by $(\Pbar{t},Q^t)$ is $((C\rangeBnd/t)\ln(C/\tailShift),\tailShift)$-shifted $\sqrt{\chiVar/t}$-GDP.
\end{theorem}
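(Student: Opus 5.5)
We couple both averages of \cref{lem:alloc-ratio} with a single Gaussian random variable and then pass to logarithms on the event where the averages are close to $1$. Let $\omega_0\sim P$ and $\omega_1,\ldots,\omega_t\sim Q$ be independent, and set $X_i\coloneqq\ratio(\omega_i)-1$ for $i\in[t]$. These are independent, have mean zero, satisfy $|X_i|\le\rangeBnd$, and have variance $\chiVar$.

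\textbf{Step 1: couple the add-direction average.} Apply \cref{lem:coupling} to $S\coloneqq\frac1t\sum_{i=1}^tX_i$, whose summands $X_i/t$ are bounded by $\rangeBnd/t$. This gives $Z\sim\Nrm(0,\chiVar/t)$ with $|S-Z|\le C_1(\rangeBnd/t)\ln(4/\tailShift)$, except with probability $\tailShift/2$. By \cref{lem:alloc-ratio}, $1/\ratioRVfunc{Q^t}{\Pbar{t}}$ is distributed as $1+S$.

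\textbf{Step 2: couple the remove-direction average.} Replacing $\omega_1$ by $\omega_0$ gives the variable $1+S'$ with $S'\coloneqq S+(\ratio(\omega_0)-\ratio(\omega_1))/t$. This variable has the law of $\ratioRVfunc{\Pbar{t}}{Q^t}$, and $|S'-S|\le2\rangeBnd/t$ deterministically. So the same $Z$ couples $S'$, at the cost of an additive $2\rangeBnd/t$ in the error.

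\textbf{Step 3: pass to logarithms.} The privacy losses are $\ln(1+S')$ in the remove direction and $-\ln(1+S)$ in the add direction. The natural Gaussian targets are $Z+\chiVar/(2t)$ and $-Z+\chiVar/(2t)$. Both are $\Nrm(\chiVar/(2t),\chiVar/t)$ because $Z$ is symmetric, so the add direction works with $-Z$. On the good event we use the expansion $\ln(1+u)=u-u^2/2+O(|u|^3)$, valid for $|u|\le1/2$, and must control two discrepancies:
\[
 \bigl|\ln(1+S)+S-\tfrac{\chiVar}{2t}\bigr|\le\tfrac12\bigl|S^2-\tfrac{\chiVar}{t}\bigr|+O(|S|^3),
\]
and the analogous bound for $\ln(1+S')-S'+\chiVar/(2t)$.

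\textbf{Main obstacle: the quadratic term.} We must show that $S^2-\chiVar/t$ and $|S|^3$ are $O((\rangeBnd/t)\ln(C/\tailShift))$ except with probability $O(\tailShift)$. The first inputs are the following bounds.
\begin{itemize}
 \item Since $\chiVar\le\rangeBnd^2$ and $|X_i|\le\rangeBnd$, Bernstein (\cref{lem:bernstein}, applied in both directions) gives $|S|\le\sqrt{2\chiVar\ln(8/\tailShift)/t}+\rangeBnd\ln(8/\tailShift)/(3t)$ except with probability $\tailShift/4$.
 \item Then $S^2\lesssim(\chiVar/t)\ln(1/\tailShift)+(\rangeBnd/t)^2\ln^2(1/\tailShift)$.
 \item The hypothesis $(\rangeBnd/t)\ln(C/\tailShift)\le c$ guarantees $|S|,|S'|\le1/2$ on the good event, making the expansion valid. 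It also makes $(\rangeBnd/t)^2\ln^2$ and the cubic terms at most a constant times $(\rangeBnd/t)\ln$.
\end{itemize}
The delicate term is $(\chiVar/t)\ln(1/\tailShift)$. The plan is to bound it by $\rangeBnd\sqrt{\chiVar}\ln(1/\tailShift)/t$, using $\chiVar\le\rangeBnd\sqrt{\chiVar}$. This is small relative to the allowed loss shift only when $\sqrt{\chiVar}\lesssim1$. Note that $\chiVar\le\rangeBnd^2$ and $\rangeBnd\le ct/\ln(C/\tailShift)$, but this alone does not force $\sqrt{\chiVar}\lesssim1$.

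If $\chiVar$ is large, I would instead absorb the mismatch by coupling $\ln(1+S)$ with $-Z+\chiVar/(2t)$ directly. The idea is that the deterministic centering $-\frac12\,\mathbb E[S^2]=-\chiVar/(2t)$ is exactly the Gaussian mean, while the fluctuation $S^2-\chiVar/t$ has variance at most of order $\chiVar^2/t^2+\rangeBnd^2\chiVar/t^3$. A second Bernstein bound on this fluctuation (a degenerate U-statistic plus diagonal terms) then makes it $O((\rangeBnd/t)\ln(1/\tailShift))$ using $\chiVar/t\le\rangeBnd^2/t\le c\rangeBnd/\ln$. I expect this quadratic-fluctuation bound, uniformly in the size of $\chiVar$, to be the hardest step.

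\textbf{Step 4: assemble.} Collect the three failure events (coupling, Bernstein on $S$, quadratic fluctuation) into total probability $\tailShift$, and collect the errors into a single constant $C$ inside $(C\rangeBnd/t)\ln(C/\tailShift)$. This yields couplings in both directions with error bounded except with probability $\tailShift$. As noted after \cref{def:gauss-conc}, such couplings give the two-sided approximate stochastic domination.

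For the GDP claim: the mean equals half the variance, so \cref{lem:gc-profile} applied in both directions gives the shifted $\sqrt{\chiVar/t}$-GDP bound for the pair. Post-processing (\cref{lem:kov}) then transfers it to every algorithm dominated by $(\Pbar{t},Q^t)$, with the add direction dominated by the reversed pair $(Q^t,\Pbar{t})$.
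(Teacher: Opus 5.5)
There is a genuine gap, and it sits exactly at the step you flag as the main obstacle. You only use $\chiVar\le\rangeBnd^2$, but the hypothesis gives the much stronger bound $\chiVar\le\rangeBnd$. Indeed, since $0\le\ratio\le1+\rangeBnd$ and $\mathbb{E}_Q[\ratio]=1$, we have $\chiVar=\mathbb{E}_Q[\ratio(\ratio-1)]\le\rangeBnd\,\mathbb{E}_Q[\ratio]=\rangeBnd$. You could even read this off your Step~2: $S'-S$ has mean $\chiVar/t$ (because $\mathbb{E}_P[\ratio]=1+\chiVar$) and is bounded by $2\rangeBnd/t$.

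Without this fact your argument does not go through when $\chiVar\gg1$, which the hypothesis allows. For example, take $\ratio\in\{0,1+\rangeBnd\}$ with $\rangeBnd=\sqrt t$, so that $\chiVar=\rangeBnd$. Your stated bounds give only $\chiVar\ln(1/\tailShift)/t\lesssim c\rangeBnd$. So none of the following is established:
\begin{itemize}
 \item that $|S|,|S'|\le1/2$ on the good event;
 \item that the cubic terms are $O((\rangeBnd/t)\ln(C/\tailShift))$;
 \item that the quadratic term $(\chiVar/t)\ln(1/\tailShift)$ is $O((\rangeBnd/t)\ln(C/\tailShift))$.
\end{itemize}

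The proposed fallback does not repair this. The fluctuation $S^2-\chiVar/t$ behaves like $(\chiVar/t)(G^2-1)$ with $G$ standard normal. Its $\tailShift$-quantile is therefore still of order $(\chiVar/t)\ln(1/\tailShift)$, and centering gains nothing. Your arithmetic via $\chiVar/t\le c\rangeBnd/\ln(C/\tailShift)$ yields an $O(c\rangeBnd)$ error, not $O((\rangeBnd/t)\ln(C/\tailShift))$.

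Moreover, in the remove direction there is no cancellation to exploit. Against the target $Z+\chiVar/(2t)$, the discrepancy you must control is $\ln(1+S')-S'-\chiVar/(2t)\approx-\tfrac12(S'^2+\chiVar/t)$. The quadratic term and the Gaussian mean add up here, because Step~2 treats $S'-S$ as a bounded error and discards its mean $\chiVar/t$.

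With $\chiVar\le\rangeBnd$ the obstacle disappears, and no centering or U-statistic bound is needed; this is how the paper argues. Set $\rho\coloneqq(\rangeBnd/t)\ln(4/\tailShift)$, so that $\chiVar/t\le\rho$. Take the good event to be that the coupling succeeds and $|G|\le\sqrt{2\ln(4/\tailShift)}$. On it, both averages are within $\sqrt{2\rho}+O(\rho)\le1/2$ of $1$, with squared deviations $O(\rho)$. The inequality $|\ln(1+z)-z|\le z^2$ for $|z|\le1/2$ then bounds both discrepancies by $O(\rho)+\chiVar/(2t)=O(\rho)$. The Gaussian mean $\chiVar/(2t)$ is simply absorbed into the loss shift rather than matched.

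The rest of your plan is the same as the paper's: a single coupling from \cref{lem:coupling} for the add-direction average, reused for the remove-direction average at cost $2\rangeBnd/t$, and the GDP conclusion via \cref{lem:gc-profile} and domination.
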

Here the excess of the mean over half the variance is zero. Therefore, by \cref{lem:gc-profile}, the privacy profiles of both pairs are bounded from both sides by $g_{\chiVar/t}$, up to the loss shift $(C\rangeBnd/t)\ln(C/\tailShift)$ in $\eps$ and the tail shift $\tailShift$ in $\delta$.

For an unbounded likelihood ratio we condition on an event $\truncSet$ with $P(\truncSet)=Q(\truncSet)\ge1-\tailShift/(2t)$ on which $|\ratio-1|\le \rangeBnd$. The conditional pair $(P_\truncSet,Q_\truncSet)\coloneqq(P(\cdot\mid \truncSet),Q(\cdot\mid \truncSet))$ has likelihood ratio $\ratio$ on $\truncSet$, and conditioning $\Pbar{t}$ and $Q^t$ on $\truncSet^t$ simply uses $(P_\truncSet,Q_\truncSet)$ in place of $(P,Q)$. As a result, if $0<\tailShift\le1$ and $(\rangeBnd/t)\ln(2C/\tailShift)\le c$ for the constants $c$ and $C$ of \cref{thm:ra-gauss}, then the privacy losses of $(\Pbar{t},Q^t)$ and of $(Q^t,\Pbar{t})$ are $((C\rangeBnd/t)\ln(2C/\tailShift),\tailShift)$-approximately $(\chiVar_\truncSet/(2t),\chiVar_\truncSet/t)$-Gaussian-concentrated for $\chiVar_\truncSet\coloneqq\chisq{P_\truncSet}{Q_\truncSet}$ (\cref{lem:ra-conditioning}).

\para{The Gaussian mechanism.}
The likelihood ratio of the Gaussian pair $(P,Q)$ is unbounded. For $\rangeBnd$ of \cref{eq:gauss-cutoff}, the event $\truncSet\coloneqq\{\omega:|\lossFunc{\omega}{P}{Q}|\le\ln(1+\rangeBnd)\}$ has the same probability under $P$ and $Q$, at least $1-\tailShift/(2t)$, and $|\ratio-1|\le\rangeBnd$ on $\truncSet$. Conditioning on this event, we obtain the following analogue of \cref{cor:gauss-poisson}.
\begin{corollary}[Gaussian random allocation]\label{cor:gauss-ra}
There are universal constants $c,C>0$ such that the following holds. Let $(P,Q)$ be the Gaussian pair with $\sigma>0$ and $\chiVar\coloneqq e^{1/\sigma^2}-1$, and let $t\ge2$. If $0<\tailShift\le1$ and $(\rangeBnd/t)\ln(C/\tailShift)\le c$ for $\rangeBnd$ given by \cref{eq:gauss-cutoff}, then $1$-out-of-$t$ random allocation of the Gaussian mechanism \cref{eq:gauss-mech} is $(\lossShift,\tailShift)$-shifted $\sqrt{\chiVar/t}$-GDP, where $\lossShift\coloneqq\frac{C\rangeBnd}{t}\ln(C/\tailShift)$.
\end{corollary}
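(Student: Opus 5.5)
The plan is to reduce \cref{cor:gauss-ra} to the conditional version of \cref{thm:ra-gauss} (\cref{lem:ra-conditioning}), applied to the truncation set $\truncSet\coloneqq\{\omega:|\lossFunc{\omega}{P}{Q}|\le\ln(1+\rangeBnd)\}$. The result then transfers to the mechanism through the domination of random allocation by $(\Pbar{t},Q^t)$ and $(Q^t,\Pbar{t})$ \citep[Theorem~3.1]{FS25ra}, together with \cref{clm:gauss-dom} and the post-processing invariance of shifted GDP. There are three steps.

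\emph{Step one: verify the hypotheses of \cref{lem:ra-conditioning}.} Under $P$ the privacy loss is $\Nrm(1/(2\sigma^2),1/\sigma^2)$. Under $Q$, the loss $\lossFunc{\omega}{P}{Q}$ is the negative of $\lossRVfunc{Q}{P}$, so it is $\Nrm(-1/(2\sigma^2),1/\sigma^2)$. The set $\truncSet$ is symmetric in the loss, and the two distributions are mirror images, so $P(\truncSet)=Q(\truncSet)$. By the choice \cref{eq:gauss-cutoff}, $\ln(1+\rangeBnd)=1/(2\sigma^2)+\sqrt{2\ln(2t/\tailShift)}/\sigma$. The complement therefore requires a deviation of at least $\sqrt{2\ln(2t/\tailShift)}/\sigma$ from the mean on one side; the other side is farther away. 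Using $\Phibar(w)\le e^{-w^2/2}/2$ on each side gives $P(\truncSet^c)\le\tailShift/(2t)$. On $\truncSet$ we have $e^{-\ln(1+\rangeBnd)}\le\ratio\le 1+\rangeBnd$, so $|\ratio-1|\le\rangeBnd$.

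\emph{Step two: replace $\chiVar_\truncSet$ by $\chiVar$.} \cref{lem:ra-conditioning} gives Gaussian concentration with parameters $(\chiVar_\truncSet/(2t),\chiVar_\truncSet/t)$. By the remark after \cref{lem:gc-profile}, this yields $(\lossShift,\tailShift)$-shifted $\sqrt{\chiVar_\truncSet/t}$-GDP in both directions, since the excess of the mean over half the variance is zero. I need $\chiVar_\truncSet\le\chiVar$, after which monotonicity of $g_V$ in $V$ upgrades the guarantee to $\sqrt{\chiVar/t}$-GDP. This is the main obstacle. Write $\chiVar_\truncSet=\expect{Q}{\ratio^2\mid\truncSet}-1=\expect{Q}{\ratio^2\ind_\truncSet}/Q(\truncSet)-1$. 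The ratio $\ratio=e^{L}$ with $L\sim\Nrm(-1/(2\sigma^2),1/\sigma^2)$ under $Q$, and $\truncSet$ is a symmetric window $|L|\le a$. The plan is to show that $\ratio^2$ is larger on the removed tails than its average, so conditioning lowers the second moment: $\expect{}{\ratio^2\mid\truncSet}\le\expect{}{\ratio^2}$. Restricting to $|L|\le a$ removes the extreme values of $e^{2L}$.

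The delicate part is the lower tail, where $e^{2L}$ is small. I would argue by comparing the two tails. By the symmetry of the distributions, the mass $Q(L<-a)$ equals $P(L>a)$. Working under $P$, the lower tail $\{L<-a\}$ has mass $e^{-a}\cdot$(stuff) relative to the upper one. I would show directly that
\[
\expect{Q}{e^{2L}\ind_{\truncSet^c}}\ge Q(\truncSet^c)\,\expect{Q}{e^{2L}},
\]
using that the upper tail alone contributes $\expect{Q}{e^{2L}\ind_{L>a}}=\expect{P}{e^{L}\ind_{L>a}}\ge e^{a}P(L>a)$. Since $P(L>a)=Q(L<-a)$ and $Q(L>a)\le P(L>a)$, this yields $\expect{Q}{e^{2L}\ind_{\truncSet^c}}\ge e^aQ(\truncSet^c)/2$. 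This exceeds $Q(\truncSet^c)(1+\chiVar)=Q(\truncSet^c)e^{1/\sigma^2}$ because $a\ge1/(2\sigma^2)+\sqrt{2\ln 4}/\sigma$ is large enough. If this constant-chasing fails for small $\sigma$, I would instead absorb the discrepancy by noting that any slack only affects constants in $\lossShift$.

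\emph{Step three: match the constants.} The condition of \cref{lem:ra-conditioning} is $(\rangeBnd/t)\ln(2C'/\tailShift)\le c'$ for its constants $c',C'$. Setting $C\coloneqq2C'$ and $c\coloneqq c'$ makes this the hypothesis of the corollary, and the resulting loss shift $(C'\rangeBnd/t)\ln(2C'/\tailShift)$ is at most $\lossShift$.
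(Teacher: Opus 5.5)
Your Steps one and three are correct and match the paper's proof: the same set $\truncSet$, the same tail bound $P(\truncSet^c)=Q(\truncSet^c)=\Phibar(w)+\Phibar(w+1/\sigma)\le\tailShift/(2t)$, and the same choice $C=2C'$.

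\textbf{The gap is in Step two, the bound $\chiVar_\truncSet\le\chiVar$.} The inequality you derive, $\expect{Q}{e^{2L}\ind_{\truncSet^c}}\ge e^{a}Q(\truncSet^c)/2$, is correct. It gives what you need, however, only if $e^{a}/2\ge e^{1/\sigma^2}$, that is, $\sqrt{2\ln(2t/\tailShift)}/\sigma\ge 1/(2\sigma^2)+\ln 2$. This fails for small $\sigma$ unless $t/\tailShift$ is huge. More seriously, it fails for every large $\sigma$: with $t$ and $\tailShift$ fixed, $a\to0$ as $\sigma\to\infty$. For example, $\sigma=10$, $t=2$, $\tailShift=1$ gives $e^{a}/2\approx0.59<e^{0.01}$.

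This is not a matter of constants. For large $\sigma$ the two tails of $\truncSet^c$ have comparable mass, namely $\Phibar(w+1/\sigma)$ and $\Phibar(w)$. The exact upper-tail contribution is $\expect{Q}{\ratio^2\ind\{L>a\}}=e^{1/\sigma^2}\Phibar(w-1/\sigma)$, which tends to half of the required $Q(\truncSet^c)e^{1/\sigma^2}$. So any argument that discards the lower tail must fail there: $\ratio^2<1$ on the lower tail, but that tail supplies the other half.

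Your fallback does not rescue this. The GDP parameter in the statement is exactly $\sqrt{\chiVar/t}$, so an excess $\chiVar_\truncSet>\chiVar$ changes the parameter, not the shift. Trading it for a shift via \cref{eq:param-change} costs an extra tail shift. It also costs a loss shift of order $|\sqrt{\chiVar_\truncSet}-\sqrt{\chiVar}|\sqrt{\ln(1/\tailShift)/t}$, which you have not bounded and which is not controlled by $(C\rangeBnd/t)\ln(C/\tailShift)$ uniformly in $\sigma$.

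\textbf{The missing idea is to compute $\chiVar_\truncSet$ exactly rather than bounding the tails separately.} The paper does this as follows.
\begin{enumerate}
\item Since $\ratio(\omega)=e^{(\omega-1/2)/\sigma^2}$, completing the square gives $\ratio(\omega)^2q(\omega)=e^{1/\sigma^2}n_2(\omega)$, where $q$ and $n_2$ are the densities of $\Nrm(0,\sigma^2)$ and $\Nrm(2,\sigma^2)$.
\item Together with $P(\truncSet)=Q(\truncSet)$, this gives $1+\chiVar_\truncSet=e^{1/\sigma^2}\,\Nrm(2,\sigma^2)(\truncSet)/\Nrm(0,\sigma^2)(\truncSet)$.
\item The set $\truncSet=\{\omega:|\omega-1/2|\le\sigma^2\ln(1+\rangeBnd)\}$ is an interval centered at $1/2$. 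A normal law with fixed variance gives such an interval less mass the farther its mean is from $1/2$, and $|2-1/2|>|0-1/2|$.
\item Hence the ratio is at most $1$, and $\chiVar_\truncSet\le\chiVar$ for every $\sigma>0$ and every cutoff.
\end{enumerate}
In your notation, this is the inequality $\Phibar(w-1/\sigma)+\Phibar(w+2/\sigma)\ge\Phibar(w+1/\sigma)+\Phibar(w)$. It compares the standard normal masses of the complements of two intervals of equal length, centered at $-3/(2\sigma)$ and $1/(2\sigma)$, and it uses both tails. With this replacement, the rest of your argument goes through.
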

As for Poisson subsampling, this gives a closed-form condition for the central limit rate.
\begin{corollary}[Closed-form bound for Gaussian random allocation]\label{cor:gauss-ra-closed-form}
There is a universal constant $C>0$ such that the following holds. Let $\sigma\ge1$, $t\ge2$ and $0<\delta\le1/2$. If
\[
 \sigma\ln\left(1+\frac{\sqrt t}{\sigma\sqrt{C\ln(1/\delta)}}\right)\ge C\sqrt{\ln(t/\delta)},
\]
then $1$-out-of-$t$ random allocation of the Gaussian mechanism \cref{eq:gauss-mech} is $(\eps,\delta)$-DP for
\[
 \eps=\frac{C\sqrt{\ln(1/\delta)}}{\sigma\sqrt t}.
\]
\end{corollary}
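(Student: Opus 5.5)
We plan to derive \cref{cor:gauss-ra-closed-form} from \cref{cor:gauss-ra} with the tail shift $\tailShift\coloneqq\delta/2$. The conversion \cref{eq:approx-inverse} then gives $(\eps,\delta)$-DP for
\[
 \eps=\left[\eps_{\mathrm G}(\chiVar/t,\delta/2)+\lossShift\right]_+ ,\qquad \lossShift=\frac{C'\rangeBnd}{t}\ln(2C'/\delta),
\]
where $c',C'$ denote the constants of \cref{cor:gauss-ra}. The Gaussian term is routine. Since $\sigma\ge1$ we have $\chiVar=e^{1/\sigma^2}-1\le(e-1)/\sigma^2\le2/\sigma^2$. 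The tail bound $\eps_{\mathrm G}(V,\delta')\le V/2+\sqrt{2V\ln(1/\delta')}$ then gives
\[
 \eps_{\mathrm G}(\chiVar/t,\delta/2)\le\frac{1}{\sigma^2t}+\frac{2\sqrt{\ln(2/\delta)}}{\sigma\sqrt t}.
\]
Because $\delta\le1/2$, we have $\ln(2/\delta)\le2\ln(1/\delta)$. Also $1/(\sigma^2 t)\le\sqrt{\ln(1/\delta)}/(\sigma\sqrt t)$, since $\sigma\sqrt t\ge1\ge1/\sqrt{\ln 2}$. So this part is $O(\sqrt{\ln(1/\delta)}/(\sigma\sqrt t))$.

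The substance is to show that the hypothesis forces the loss shift to be of the same order. This requires $\rangeBnd\ln(1/\delta)\lesssim\sqrt{t\ln(1/\delta)}/\sigma$, i.e.\ $\rangeBnd\le\sqrt t/(\sigma\sqrt{C\ln(1/\delta)})$ for a suitable $C$. It must also give the side condition $(\rangeBnd/t)\ln(C'/\tailShift)\le c'$ of \cref{cor:gauss-ra}.

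By \cref{eq:gauss-cutoff} with $\tailShift=\delta/2$, we have $\ln(1+\rangeBnd)=1/(2\sigma^2)+\sqrt{2\ln(4t/\delta)}/\sigma$. Since $t\ge2$ and $\delta\le1/2$, we have $\ln(4t/\delta)\le3\ln(t/\delta)$. Using $\sigma\ge1$, the constant $1/(2\sigma^2)$ is also at most a constant times $\sqrt{\ln(t/\delta)}/\sigma$. Hence $\ln(1+\rangeBnd)\le C_1\sqrt{\ln(t/\delta)}/\sigma$ for a universal $C_1$. Choosing $C\ge C_1$, the hypothesis $\sigma\ln(1+\sqrt t/(\sigma\sqrt{C\ln(1/\delta)}))\ge C\sqrt{\ln(t/\delta)}$ gives
\[
 \ln(1+\rangeBnd)\le\ln\left(1+\frac{\sqrt t}{\sigma\sqrt{C\ln(1/\delta)}}\right).
\]
By monotonicity this yields $\rangeBnd\le\sqrt t/(\sigma\sqrt{C\ln(1/\delta)})$.

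It follows that $\lossShift\le C'\ln(2C'/\delta)\,(\sigma\sqrt{Ct\ln(1/\delta)})^{-1}$, and $\ln(2C'/\delta)=O(\ln(1/\delta))$ since $\delta\le1/2$. So $\lossShift=O(\sqrt{\ln(1/\delta)}/(\sigma\sqrt t))$, with the constant only improving as $C$ grows.

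For the side condition we need $(\rangeBnd/t)\ln(2C'/\delta)\lesssim\sqrt{\ln(1/\delta)}/(\sigma\sqrt{Ct})\le c'$. The hypothesis forces $t$ to be large relative to $\ln(1/\delta)$. Indeed, $\ln(1+x)\le x$ turns it into $\sqrt t/\sqrt{C\ln(1/\delta)}\ge C\sqrt{\ln(t/\delta)}$, hence $t\ge C^3\ln(1/\delta)\ln(t/\delta)\ge C^3\ln^2(1/\delta)$. This makes $\sqrt{\ln(1/\delta)/t}\le C^{-3/2}$, which is below $c'$ for $C$ large.

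I expect this bookkeeping of constants to be the main obstacle. A single $C$ has to serve three roles at once: dominating $C_1$ in the cutoff comparison, making the side condition hold, and absorbing $C'$ and the Gaussian constants in the final $\eps$. Taking $C$ larger helps in every place, so it should suffice to take $C$ as a large multiple of $\max\{C_1,C',1/c'\}$.
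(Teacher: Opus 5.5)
Your proposal is correct and follows the paper's proof essentially step for step. You apply \cref{cor:gauss-ra} with $\tailShift=\delta/2$, use the hypothesis and the monotonicity of $\ln(1+\cdot)$ to get $\rangeBnd\le\sqrt t/(\sigma\sqrt{C\ln(1/\delta)})$, use $\ln(1+y)\le y$ to get $t\ge C^3\ln(1/\delta)\ln(t/\delta)$ for the side condition, and bound $\eps_{\mathrm G}(\chiVar/t,\delta/2)$ by the Gaussian tail bound. The only slip is cosmetic: $1\ge1/\sqrt{\ln 2}$ is false, but the inequality you need, $1/(\sigma^2t)\le\sqrt{\ln(1/\delta)}/(\sigma\sqrt t)$, still holds because $\sigma\sqrt{t\ln(1/\delta)}\ge\sqrt{2\ln2}>1$ for $t\ge2$ and $\delta\le1/2$.
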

Up to the constants, this is the bound of \cref{clm:gauss-closed-form} for Poisson subsampling at rate $\lambda=1/t$, under the same condition \cref{eq:gauss-conditions}.
In the two cases after \cref{cor:gauss-leading}, we prove \cref{eq:gauss-conditions} for every value of its constant (\cref{lem:gauss-conditions}). Therefore these cases also apply to random allocation.
In contrast, the reduction of random allocation to Poisson subsampling \citep[Corollary~4.5]{FS25ra} needs the assumption on $\sigma$ of the analysis of Poisson subsampling and, in addition, $\sigma\gtrsim\sqrt{\ln(t/\delta)}$.

Since the parameter $\chiVar/t$ is slightly smaller than the parameter $\chiVar/(t-1)$ of \cref{cor:gauss-poisson} for $\lambda=1/t$, inverting the Gaussian profile as in \cref{cor:gauss-leading} gives the same bounds.
\begin{corollary}[Asymptotic upper bounds for random allocation]\label{cor:gauss-leading-ra}
Fix $\sigma>0$, let $(P,Q)$ be the Gaussian pair and $\chiVar\coloneqq e^{1/\sigma^2}-1$. If $\delta=t^{-\theta}$ for a fixed $\theta>1/2$, then, as $t\to\infty$,
\[
 \privParam{\Pbar{t},Q^t}(\delta)\le(1+o(1))\sqrt{\frac{(2\theta-1)\,\chiVar\ln(t)}{t}} .
\]
If $\delta=\delta(t)$ satisfies $\ln(1/\delta)/\ln(t)\to\infty$ and $\limsup_{t\to\infty}8\ln(1/\delta)/(\sigma\ln(t))^2<1$, then
\[
 \privParam{\Pbar{t},Q^t}(\delta)\le(1+o(1))\sqrt{\frac{2\chiVar\ln(1/\delta)}{t}} .
\]
\end{corollary}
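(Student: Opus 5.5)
The plan is to derive \cref{cor:gauss-leading-ra} from \cref{cor:gauss-ra}, following the same inversion argument as for \cref{cor:gauss-leading}. We fix $\sigma$, set $\chiVar=e^{1/\sigma^2}-1$ and $V\coloneqq\chiVar/t$, and choose the tail shift as a vanishing fraction of $\delta$, namely $\tailShift\coloneqq\delta/\ln(t)$. With $\rangeBnd$ given by \cref{eq:gauss-cutoff}, we have $\ln(1+\rangeBnd)=\frac1{2\sigma^2}+\sqrt{2\ln(2t/\tailShift)}/\sigma$.

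\textbf{Verifying the hypothesis of \cref{cor:gauss-ra}.} We need $(\rangeBnd/t)\ln(C/\tailShift)\le c$ for large $t$.
\begin{itemize}
\item In the first regime, $\ln(2t/\tailShift)=O(\ln t)$. Hence $\rangeBnd=\exp(O(\sqrt{\ln t}))=t^{o(1)}$, and the condition holds easily.
\item In the second regime, write $L\coloneqq\ln(1/\delta)$. Then $\ln(2t/\tailShift)=(1+o(1))L$, so $\ln(1+\rangeBnd)\le O(1)+(1+o(1))\sqrt{2L}/\sigma$. The assumption $\limsup 8L/(\sigma\ln t)^2<1$ gives $\sqrt{2L}/\sigma\le(1-\eta)\ln(t)/2$ for some fixed $\eta>0$. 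Therefore $\rangeBnd\le t^{1/2-\eta'}$ for large $t$.
\end{itemize}
In both regimes $\ln(C/\tailShift)=O(L)$ and $L=O((\sigma\ln t)^2)$. So the loss shift $\lossShift=(C\rangeBnd/t)\ln(C/\tailShift)$ is at most $t^{-1/2-\eta'}\,\mathrm{polylog}(t)$, and the condition holds. Since the pair $(\Pbar{t},Q^t)$ trivially dominates itself, \cref{cor:gauss-ra} (through its proof via \cref{thm:ra-gauss} and \cref{lem:ra-conditioning}) shows that the pair is $(\lossShift,\tailShift)$-shifted $\sqrt V$-GDP. By \cref{eq:approx-inverse}, this gives $\privParam{\Pbar{t},Q^t}(\delta)\le\eps_{\mathrm G}(V,\delta-\tailShift)+\lossShift$.

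\textbf{Comparing the shift with the target rate.} The target in both regimes is at least of order $\sqrt{\ln(t)/t}$. The shift $\lossShift$ is $o(\sqrt{\ln(t)/t})$ in both regimes, since it is $t^{-1/2-\eta'}$ times polylogarithmic factors. So it only contributes to the $1+o(1)$ factor.

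\textbf{Asymptotics of $\eps_{\mathrm G}$.} This is the main step. It remains to estimate $\eps_{\mathrm G}(V,\delta')$ with $\delta'=\delta(1-1/\ln t)$, so that $\ln(1/\delta')=L+o(1)$. I would invoke the sharper bound \cref{lem:gdp-small-var} used for \cref{cor:gauss-leading}. The statement assumed here is that for $V\to0$ and $\ln(1/\delta')\to\infty$,
\[
\eps_{\mathrm G}(V,\delta')\le(1+o(1))\sqrt{2V\left(\ln(1/\delta')-\tfrac12\ln(1/V)\right)_+}+o\left(\sqrt{V\ln(1/\delta')}\right).
\]
This reflects the extra factor of order $\sqrt V$ in $g_V$ relative to the Gaussian tail.
\begin{itemize}
\item In the first regime, $\ln(1/\delta')-\frac12\ln(t/\chiVar)=(\theta-\frac12+o(1))\ln t$. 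This gives $\sqrt{(2\theta-1)\chiVar\ln(t)/t}$.
\item In the second regime, $\frac12\ln t=o(L)$, which gives $\sqrt{2\chiVar L/t}$.
\end{itemize}

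If \cref{lem:gdp-small-var} is not available in exactly this form, I would verify it directly from \cref{eq:gauss-ref}. Substitute $\eps=\sqrt{2Va}$ and use the Mills-ratio asymptotics $\Phibar(x)\sim\varphi(x)/x$. The difference of the two $\Phi$ terms equals $\Phibar(x_-)-e^{\eps}\Phibar(x_+)$, where $x_\pm=\eps/\sqrt V\pm\sqrt V/2$. This behaves like $\varphi(x_-)\left(1/x_--1/x_+\right)\approx\varphi(x_-)\sqrt V/x_-^2$. Hence it is at most $\delta'$ once $a\ge\ln(1/\delta')-\frac12\ln(1/V)+O(\ln a)$, and the $O(\ln a)$ term is lower order in both regimes. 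This calculation, uniform in the two regimes, is the most delicate part of the argument. The rest is bookkeeping of the shifts.

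Finally, the bound is stated for the pair $\privParam{\Pbar{t},Q^t}$. Shifted GDP covers both directions, so the add direction $(Q^t,\Pbar{t})$ is covered as well.
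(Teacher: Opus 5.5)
Your proposal is correct and follows essentially the same route as the paper. You get shifted $\sqrt{\chiVar/t}$-GDP of both pairs from the proof of \cref{cor:gauss-ra} (via \cref{lem:ra-conditioning}), check that $(\rangeBnd/t)\ln(C/\tailShift)$ is small for large $t$ using the same bounds on $\rangeBnd$ as in \cref{cor:gauss-leading}, and invert with \cref{eq:approx-inverse} and \cref{lem:gdp-small-var}. The differences are cosmetic: the paper takes $\tailShift=\delta/2$, whose $\ln 2$ is absorbed into the $O(1)$ term of $k$ in \cref{lem:gdp-small-var}, so your choice $\tailShift=\delta/\ln t$ and your Mills-ratio rederivation are unnecessary but harmless.
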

By \cref{clm:gauss-dom} and \citep[Theorem~3.1]{FS25ra}, the same upper bounds hold for $\privParam{\alg}(\delta)$, where $\alg$ is $1$-out-of-$t$ random allocation of the Gaussian mechanism \cref{eq:gauss-mech}.
\section{Application: connect-the-dots discretization}\label{sec:ctd}
In this section we use our approximation analysis for composition to explain the properties of connect-the-dots (CtD) discretization. Specifically, we show that while discretizing a single step can increase $\eps$ by almost the grid spacing, after composition the increase per step is only of second order in the spacing (up to the accuracy of the Gaussian concentration).

Numerical PLD accountants discretize the PLD of every step on a grid and convolve the resulting PLDs. The simplest conservative approach is to round the loss up to the nearest grid point, which is the approach used in the privacy buckets method \citep{MM18}. The CtD discretization \citep{DGKKM22} instead splits the probability mass of every privacy loss between the two adjacent grid points. Among the dominating pairs whose PLD is supported on the grid, it is the least under domination \citep[Lemma~4.1]{DGKKM22}. Experimentally, it keeps a similar error bound with a discretization interval $66.66$ times larger than that of privacy buckets \citep[Section~6]{DGKKM22}. \Cref{fig:ctd-gauss} shows the error of both discretizations for the Gaussian mechanism, for a single step and for $100$ steps. This plot shows that CtD discretization is better even for a single step but its advantage becomes close to quadratic in $\gridSpacing$ after composition. We demonstrate this effect in \cref{fig:ctd-ratio} by examining the ratio of the two errors for $t=1,5,25,100$ and $1000$ steps. We defer the proofs to \cref{app:ctd-proofs}.
\begin{figure}[t]
\centering
\includegraphics[width=\linewidth]{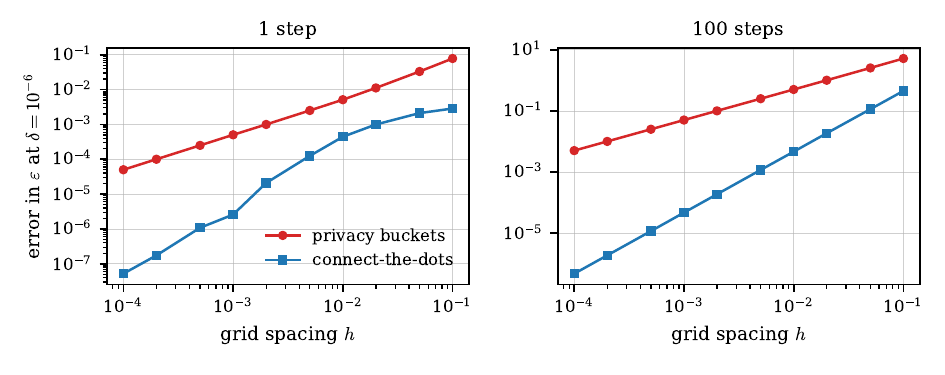}
\caption{The error of the upper bound on $\eps$ at $\delta=10^{-6}$ as a function of the grid spacing $\gridSpacing$, for a single step (left) and for the composition of $100$ steps (right). The privacy loss of the Gaussian mechanism with $\sigma=10$ (distributed as $\Nrm(1/(2\sigma^2),1/\sigma^2)$) is discretized on the grid $\gridSpacing\mathbb{Z}$ by privacy buckets and by CtD. The upper bounds are the pessimistic estimates of \texttt{dp\_accounting} \citep{GoogleDPAccounting} with its default truncation of the tails, and the error is computed relative to the exact value. For small $\gridSpacing$, the error of privacy buckets is close to $t\gridSpacing/2$, whereas that of CtD is of order $\gridSpacing^2$.}
\label{fig:ctd-gauss}
\end{figure}
\begin{figure}[t]
\centering
\includegraphics[width=0.55\linewidth]{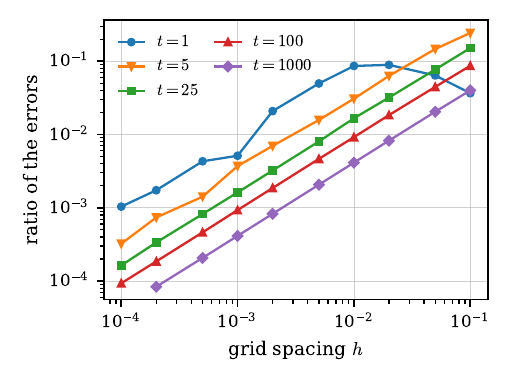}
\caption{The ratio of the error of CtD to that of privacy buckets, in the setting of \cref{fig:ctd-gauss}, as a function of the grid spacing $\gridSpacing$, for the composition of $t$ steps.}
\label{fig:ctd-ratio}
\end{figure}

\para{The discretization.}
Let $P$ and $Q$ be mutually absolutely continuous and $\lossRV\coloneqq\lossRVfunc{P}{Q}$. A \emph{grid} is a set $\Lambda\subset\reals$ of \emph{knots} with finitely many knots in every bounded interval and $\inf\Lambda\le \lossRV\le\sup\Lambda$ almost surely. We split the probability mass of $\lossRV$ at every value between the two knots of the grid adjacent to it. The resulting distribution is that of the random post-processing of $\lossRV$ that moves it to one of the two adjacent knots, and we define the discretization through this post-processing. The grid has \emph{spacing} at most $\gridSpacing$ if adjacent knots differ by at most $\gridSpacing$, as for $\gridSpacing\mathbb{Z}$. We analyze the effect of discretization separately from the truncation of the tails, which in standard implementations is done together with discretization.

\begin{definition}[Connect-the-dots discretization]\label{def:ctd}
The \emph{CtD discretization} of $\lossRV$ on a grid $\Lambda$ is the random variable $\tilde \lossRV$ that is equal to $\lossRV$ if $\lossRV\in\Lambda$, and otherwise is equal to one of the knots $\loss^-<\loss^+$ adjacent to $\lossRV$, with the conditional probabilities given, for $\loss^-<\loss<\loss^+$, by
\begin{equation}\label{eq:ctd-kernel}
 \prob{}{\tilde \lossRV=\loss^-\mid \lossRV=\loss}=\frac{e^{-\loss}-e^{-\loss^+}}{e^{-\loss^-}-e^{-\loss^+}}
 \qquad\text{and}\qquad
 \prob{}{\tilde \lossRV=\loss^+\mid \lossRV=\loss}=\frac{e^{-\loss^-}-e^{-\loss}}{e^{-\loss^-}-e^{-\loss^+}}.
\end{equation}
The \emph{CtD pair} of $(P,Q)$ on $\Lambda$ consists of the distributions $\tilde P(\loss)\coloneqq\prob{}{\tilde \lossRV=\loss}$ and $\tilde Q(\loss)\coloneqq e^{-\loss}\tilde P(\loss)$ over $\Lambda$.
\end{definition}
Note that this post-processing preserves the conditional mean of $e^{-\lossRV}$, rather than that of $\lossRV$:
\begin{equation}\label{eq:ctd-mean}
 \expect{}{e^{-\tilde \lossRV}\mid \lossRV}=e^{-\lossRV}.
\end{equation}
This identity and $\expect{}{e^{-\lossRV}}=1$ imply that $\tilde Q$ is a probability distribution, that $\tilde P$ and $\tilde Q$ are mutually absolutely continuous, and that the privacy loss of $(\tilde P,\tilde Q)$ is distributed as $\tilde \lossRV$. Thus $\tilde \lossRV$ couples the privacy losses of the two pairs.

\para{The resulting error.}
In the worst case, the discretization of a single step increases $\eps$ by almost the grid spacing. For example, for randomized response with privacy loss $\pm\eps_0$ (where $0<\eps_0<\gridSpacing$), the two distributions are $(\eps_0,0)$-close, whereas their CtD discretizations on the grid $\gridSpacing\mathbb{Z}$ are $(\eps,\delta)$-close only for $\eps$ above a threshold that tends to $\gridSpacing$ as $\delta\to0$ (\cref{clm:ctd-rr}). In contrast, the mean and the variance of the privacy loss change only by $O(\gridSpacing^2)$ (\cref{lem:ctd-facts}). Rounding up to the grid (as in privacy buckets) instead increases every privacy loss, and thus also the mean, by up to $\gridSpacing$. The privacy profile of a composition is determined by the mean and the variance of its privacy loss up to the shifts of the Gaussian concentration (\cref{thm:gauss-approx,lem:gc-profile}). Therefore the increase of $\eps$ per step tends to be quadratic in $\gridSpacing$ after composition (\cref{cor:ctd-error}). The bounds on the moments also give an explanation of why CtD is more accurate in practice even for a single step (\cref{fig:ctd-gauss}). However, by themselves they cannot guarantee a smaller error for small $\delta$ and general algorithms, as the example of randomized response shows.
\begin{lemma}[CtD discretization of one step]\label{lem:ctd-facts}
Let $\tilde \lossRV$ and $(\tilde P,\tilde Q)$ be as in \cref{def:ctd} for a grid with spacing at most $\gridSpacing$, and assume that $\lossVar{P}{Q}<\infty$. Then
\begin{enumerate}
 \item $0\le\lossMean{\tilde P}{\tilde Q}-\lossMean{P}{Q}\le\gridSpacing^2/8$;
 \item $-(\gridSpacing^2/8)\sqrt{\lossVar{P}{Q}}\le\lossVar{\tilde P}{\tilde Q}-\lossVar{P}{Q}\le\gridSpacing^2/4+(\gridSpacing^2/8)\sqrt{\lossVar{P}{Q}}$.
\end{enumerate}
\end{lemma}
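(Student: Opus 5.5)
The plan is to treat the discretization as a coupling. Write $\tilde\lossRV=\lossRV+D$, where, conditionally on $\lossRV=\loss$ with $\loss^-<\loss<\loss^+$, the increment $D$ takes the two values $-a$ and $b$, with $a\coloneqq\loss-\loss^-$, $b\coloneqq\loss^+-\loss$ and $a+b\le\gridSpacing$; if $\lossRV\in\Lambda$, then $D=0$. By \cref{eq:ctd-mean}, the privacy loss of $(\tilde P,\tilde Q)$ is distributed as $\tilde\lossRV$. So $\lossMean{\tilde P}{\tilde Q}-\lossMean{P}{Q}=\expect{}{m(\lossRV)}$ with $m(\loss)\coloneqq\expect{}{D\mid\lossRV=\loss}$, and everything reduces to pointwise facts about a two-point distribution on an interval of length at most $\gridSpacing$ that preserves $\expect{}{e^{-\tilde\lossRV}\mid\lossRV}$.

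\textbf{Mean.} Both bounds on $m$ come from the constraint $\expect{}{e^{-\tilde\lossRV}\mid\lossRV=\loss}=e^{-\loss}$. Jensen's inequality for the convex map $x\mapsto e^{-x}$ gives $e^{-\expect{}{\tilde\lossRV\mid\lossRV=\loss}}\le e^{-\loss}$, so $m(\loss)\ge0$. Hoeffding's lemma, applied with parameter $s=-1$ to a variable supported on an interval of length at most $\gridSpacing$, gives
\[
 -\loss=\ln\expect{}{e^{-\tilde\lossRV}\mid\lossRV=\loss}\le-\expect{}{\tilde\lossRV\mid\lossRV=\loss}+\frac{\gridSpacing^2}{8},
\]
so $m(\loss)\le\gridSpacing^2/8$. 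Taking expectations yields the first item.

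\textbf{Variance.} I would decompose by the law of total variance:
\[
 \lossVar{\tilde P}{\tilde Q}=\expect{}{\Var(\tilde\lossRV\mid\lossRV)}+\Var\left(\lossRV+m(\lossRV)\right).
\]
Expanding the second term gives $\lossVar{P}{Q}+2\operatorname{Cov}(\lossRV,m(\lossRV))+\Var(m(\lossRV))$. Since $m$ takes values in $[0,\gridSpacing^2/8]$, we have $\Var(m(\lossRV))\le\gridSpacing^4/256$. Cauchy--Schwarz then gives $2|\operatorname{Cov}(\lossRV,m(\lossRV))|\le2\sqrt{\lossVar{P}{Q}}\cdot\gridSpacing^2/16=(\gridSpacing^2/8)\sqrt{\lossVar{P}{Q}}$. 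For the lower bound I drop the two nonnegative terms $\expect{}{\Var(\tilde\lossRV\mid\lossRV)}$ and $\Var(m)$ and keep only the covariance bound; this gives the left inequality of the second item.

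For the upper bound it remains to show that
\[
 \expect{}{\Var(\tilde\lossRV\mid\lossRV)}+\Var(m(\lossRV))\le\frac{\gridSpacing^2}{4}.
\]
The conditional variance of a two-point distribution on $\{\loss^-,\loss^+\}$ with mean $\loss+m$ equals $(b-m)(a+m)=ab+m(b-a)-m^2$. Hence the left side is at most $\expect{}{ab+m(b-a)}$.

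The main obstacle is bounding $\expect{}{ab+m(b-a)}$ by $\gridSpacing^2/4$. A pointwise bound $ab+m(b-a)\le(a+b)^2/4$ would suffice, but it holds only up to a term of order $\gridSpacing^3$: it needs $m\le(b-a)/4$ when $b>a$, and for nearly centered points $m\approx ab/2$ exceeds this. So I would first compute $m$ explicitly from \cref{eq:ctd-kernel}, using $\prob{}{\tilde\lossRV=\loss^+\mid\lossRV=\loss}=(1-e^{-a})/(1-e^{-(a+b)})$, and look for a sharper pointwise inequality. The most promising candidate is to compare $ab+m(b-a)$ directly with $(a+b)^2/4$ after subtracting the $m^2$ term that was discarded above. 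If the constant $\gridSpacing^2/4$ still cannot be reached that way, I would regroup terms: move the covariance contribution of $m$ together with $\Var(m)$ into the term $(\gridSpacing^2/8)\sqrt{\lossVar{P}{Q}}$, using $\Var(\lossRV+m)\le\left(\sqrt{\lossVar{P}{Q}}+\sqrt{\Var(m)}\right)^2$. Failing that, the fallback is to accept a $(1+O(\gridSpacing))$ factor on the $\gridSpacing^2/4$ term.
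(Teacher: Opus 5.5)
\textbf{Verdict: genuine gap.} Your argument for item~1 and for the lower bound in item~2 is correct and is essentially the paper's. Jensen and Hoeffding's lemma give $0\le m(\lossRV)\le\gridSpacing^2/8$. The covariance $\operatorname{Cov}(\lossRV,D)=\operatorname{Cov}(\lossRV,m(\lossRV))$ is then controlled by $\Var(m(\lossRV))\le\gridSpacing^4/256$ and Cauchy--Schwarz.

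The gap is the upper bound in item~2. It requires $\Var(D)\le\gridSpacing^2/4$, and none of your routes reaches that constant.
\begin{itemize}
\item Your quantity $\expect{}{ab+m(b-a)}$ is exactly $\expect{}{D^2}$, the second moment of $D$ about $0$. By contrast, $\expect{}{\Var(\tilde\lossRV\mid\lossRV)}+\Var(m(\lossRV))=\Var(D)$.
\item Pointwise $ab+m(b-a)\le(a+b)^2/4+m^2$, so the excess is of order $\gridSpacing^4$, not $\gridSpacing^3$. But the excess is real: if $\lossRV$ concentrates slightly to the left of a midpoint of $\gridSpacing\mathbb{Z}$, then $\expect{}{D^2}>\gridSpacing^2/4$. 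So no sharper pointwise inequality for this quantity can work.
\item Your other options are to bound $\Var(\tilde\lossRV\mid\lossRV)\le h_0^2/4$ pointwise and treat $\Var(m(\lossRV))$ separately, or to use $\bigl(\sqrt{\lossVar{P}{Q}}+\sqrt{\Var(m(\lossRV))}\bigr)^2$. Both end at $\gridSpacing^2/4+\gridSpacing^4/256+(\gridSpacing^2/8)\sqrt{\lossVar{P}{Q}}$.
\item The $(1+O(\gridSpacing))$ fallback proves a weaker statement.
\end{itemize}
The underlying problem is that you bound the conditional variance and the fluctuation of the conditional mean separately. The constant $\gridSpacing^2/4$ requires trading one against the other.

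\textbf{The missing idea} is to center $D$ at a suitable constant. For every $c$,
\[
 \Var(D)\le\expect{}{(D-c)^2}=\expect{}{\Var(D\mid\lossRV)+\left(m(\lossRV)-c\right)^2}.
\]
The paper takes $c=c_\gridSpacing\coloneqq\ln\cosh(\gridSpacing/2)$, the conditional mean at a midpoint. For this $c$ the integrand is at most $\gridSpacing^2/4$ pointwise:
\begin{itemize}
\item Off the grid, let $h_0\coloneqq a+b$ and let $p$ be the conditional probability of moving to $\loss^+$. Then $\Var(D\mid\lossRV)=h_0^2p(1-p)$, and from $\expect{}{e^{-D}\mid\lossRV}=1$ you get $m=m_{h_0}(p)\coloneqq h_0p+\ln(1-p+pe^{-h_0})$.
\item This function is concave on $[0,1]$, with $m_{h_0}(0)=m_{h_0}(1)=0$, $m_{h_0}(1/2)=c_{h_0}\in[0,h_0/2]$ and $m_{h_0}'(1/2)=h_0-2\tanh(h_0/2)\in[0,h_0]$.
\item Comparing chord slopes gives $|m_{h_0}(p)-c_{h_0}|\le h_0|p-1/2|$. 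Hence $h_0^2p(1-p)+(m_{h_0}(p)-c_{h_0})^2\le h_0^2/4$. The drop in conditional variance as $p$ leaves $1/2$ pays for the movement of the conditional mean.
\item For $h_0<\gridSpacing$, pass from $c_{h_0}$ to $c_\gridSpacing$ using $m\ge0$, $c_\gridSpacing\ge c_{h_0}$ and $c_\gridSpacing^2-c_{h_0}^2\le(\gridSpacing^2-h_0^2)/4$.
\item On the grid the integrand is $c_\gridSpacing^2\le\gridSpacing^2/4$.
\end{itemize}
With $\Var(D)\le\gridSpacing^2/4$ in hand, your decomposition $\lossVar{\tilde P}{\tilde Q}=\lossVar{P}{Q}+\Var(D)+2\operatorname{Cov}(\lossRV,m(\lossRV))$ and your covariance bound give the stated upper bound.
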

We state the results of this section for the privacy loss $\lossRV$ of $(P,Q)$. They also apply to the reversed pairs, since the CtD pair of $(Q,P)$ on the grid $-\Lambda$ is $(\tilde Q,\tilde P)$ with every knot $\loss$ replaced by $-\loss$. Indeed, by \cref{eq:ctd-kernel} for the privacy loss $-\lossFunc{\omega}{P}{Q}$ of $(Q,P)$ (where $\omega\sim Q$), the knot $-\loss$ receives the mass $e^{-\loss}\tilde P(\loss)=\tilde Q(\loss)$. In practice, the two directions are usually discretized separately, as in \texttt{dp\_accounting} \citep{GoogleDPAccounting} and \texttt{PLD\_accounting} \citep{PLDAccounting}.

We now discretize every step of the composition $(P^t,Q^t)$ of \cref{sec:gaussapprox}, whose privacy loss $\lossRVfunc{P^t}{Q^t}$ has mean $\lossMean{P^t}{Q^t}$ and variance $\lossVar{P^t}{Q^t}$. Let $(\tilde P,\tilde Q)$ be the CtD pair of $(P,Q)$ on a grid $\Lambda$, and for every $i\in[t]$, let $\lossRV_i$ be a copy of $\lossRVfunc{P}{Q}$ and $\tilde \lossRV_i$ its CtD discretization on $\Lambda$ (\cref{def:ctd}), where $\lossRV_1,\ldots,\lossRV_t$ and the post-processings are independent. We write $\eps(\delta)$ and $\tilde\eps(\delta)$ for the smallest $\eps\ge0$ such that $\HockeyStickDiv{e^{\eps}}{P^t}{Q^t}\le\delta$ and $\HockeyStickDiv{e^{\eps}}{\tilde P^t}{\tilde Q^t}\le\delta$, respectively. We apply \cref{lem:ctd-facts} to $(P,Q)$ and \cref{thm:gauss-approx} to the discretized composition, and obtain the following comparison.
\begin{theorem}[Composition of CtD discretizations]\label{thm:ctd-composition}
Let $C$ be the constant of \cref{thm:gauss-approx} and let $\rangeBnd,\gridSpacing>0$. Assume that the grid $\Lambda$ has spacing at most $\gridSpacing$ and that $|\lossRVfunc{P}{Q}-\lossMean{P}{Q}|\le\rangeBnd$ almost surely. Then $(\tilde P^t,\tilde Q^t)$ dominates $(P^t,Q^t)$, and
\[
 t\cdot\lossMean{P}{Q}\le\lossMean{\tilde P^t}{\tilde Q^t}\le t\cdot\lossMean{P}{Q}+\frac{t\gridSpacing^2}8,\quad t\cdot\lossVar{P}{Q}-\frac{t\gridSpacing^2}{8}\sqrt{\lossVar{P}{Q}}\le\lossVar{\tilde P^t}{\tilde Q^t}\le t\cdot\lossVar{P}{Q}+\frac{t\gridSpacing^2}{4}+\frac{t\gridSpacing^2}{8}\sqrt{\lossVar{P}{Q}} .
\]
Furthermore, for every $0<\tailShift\le1$, the privacy loss of $(\tilde P^t,\tilde Q^t)$ is $(C(\rangeBnd+2\gridSpacing)\ln(2/\tailShift),\tailShift)$-approximately $(\lossMean{\tilde P^t}{\tilde Q^t},\lossVar{\tilde P^t}{\tilde Q^t})$-Gaussian-concentrated.
\end{theorem}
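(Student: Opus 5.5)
The proof has three parts, taken in order: domination, then the moments, then the concentration.

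\emph{Domination.} By \cref{def:ctd}, $\tilde \lossRV$ is a random post-processing of $\lossRV$. A pair is determined up to domination by its PLD, and the privacy loss of $(\tilde P,\tilde Q)$ is distributed as $\tilde\lossRV$. So it suffices to exhibit a randomized map sending $(\tilde P,\tilde Q)$ to a pair with the same PLD as $(P,Q)$. The natural candidate is the reverse kernel: given a knot $\loss$, draw $\lossRV$ from its conditional law given $\tilde\lossRV=\loss$ under $P$. Under $\tilde P$ this recovers the law of $\lossRV$ under $P$. Under $\tilde Q$ we must check that it recovers the law of $\lossRV$ under $Q$, i.e.\ $\lossFunc{\omega}{P}{Q}$ for $\omega\sim Q$, which has density $e^{-\loss}$ relative to the $P$-law. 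This is exactly where \cref{eq:ctd-mean} enters. Indeed, the joint measure $e^{-\tilde\loss}\,dP(\loss,\tilde\loss)$ has $\loss$-marginal $e^{-\loss}dP(\loss)$ by \cref{eq:ctd-mean}, and its $\tilde\loss$-marginal is $\tilde Q$. Hence the reverse kernel, whose conditionals under $P$ and under this tilted measure coincide because the tilt depends only on $\tilde\loss$, maps $\tilde Q$ to the $Q$-law. Then \cref{lem:kov} gives domination of $(P,Q)$ by $(\tilde P,\tilde Q)$. Applying the map coordinatewise gives domination of the product pair $(P^t,Q^t)$ by $(\tilde P^t,\tilde Q^t)$, again by \cref{lem:kov}.

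\emph{Moments.} The privacy loss of $(\tilde P^t,\tilde Q^t)$ is $\sum_i\tilde\lossRV_i$ with independent summands by \cref{eq:product-pld}. Hence its mean and variance are $t$ times those of $\tilde\lossRV$, and the displayed inequalities follow by multiplying the two items of \cref{lem:ctd-facts} by $t$.

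\emph{Concentration.} I will apply \cref{thm:gauss-approx} to the pair $(\tilde P,\tilde Q)$ with $\truncSet$ the whole space, so $\tailShift$ may be arbitrary in $(0,1]$. This requires a deterministic bound $|\tilde\lossRV-\lossMean{\tilde P}{\tilde Q}|\le\rangeBnd+2\gridSpacing$. Since $\tilde\lossRV$ is a knot adjacent to $\lossRV$, we have $|\tilde\lossRV-\lossRV|\le\gridSpacing$. The assumption gives $|\lossRV-\lossMean{P}{Q}|\le\rangeBnd$. Finally, $0\le\lossMean{\tilde P}{\tilde Q}-\lossMean{P}{Q}\le\gridSpacing^2/8$, which needs to be at most $\gridSpacing$. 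The triangle inequality then gives $\rangeBnd+\gridSpacing+\gridSpacing^2/8$, which is at most $\rangeBnd+2\gridSpacing$ only when $\gridSpacing\le 8$.

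This last point is the one real obstacle. I would avoid the restriction on $\gridSpacing$ by bounding the mean shift directly: $\lossMean{\tilde P}{\tilde Q}-\lossMean{P}{Q}=\mathbb{E}[\tilde\lossRV-\lossRV]$, and $|\tilde\lossRV-\lossRV|\le\gridSpacing$ pointwise. Hence this difference is at most $\gridSpacing$ for every $\gridSpacing$, and the deviation bound $\rangeBnd+2\gridSpacing$ holds in general. Plugging this into \cref{thm:gauss-approx} gives the stated $(C(\rangeBnd+2\gridSpacing)\ln(2/\tailShift),\tailShift)$-approximate $(\lossMean{\tilde P^t}{\tilde Q^t},\lossVar{\tilde P^t}{\tilde Q^t})$-Gaussian concentration.
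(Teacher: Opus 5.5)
Your proof is correct. Its moment and concentration parts match the paper's: the paper also bounds the deviation of the correction $\tilde\lossRV-\lossRV$ from its mean by $2\gridSpacing$ using only $|\tilde\lossRV-\lossRV|\le\gridSpacing$, so the restriction $\gridSpacing\le 8$ never arises.

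The genuine difference is the domination step. The paper combines \cref{eq:ctd-mean} with the conditional Jensen inequality for the convex function $x\mapsto[1-\kappa x]_+$. This gives $\expect{}{[1-\kappa e^{-\tilde\lossRV}]_+\mid\lossRV}\ge[1-\kappa e^{-\lossRV}]_+$, and taking expectations verifies \cref{def:dom} directly for every $\kappa$. The existence direction of \cref{lem:kov} then supplies a map, which the paper applies coordinatewise to the products.

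You instead write the map down explicitly, as the Bayes reverse of the CtD kernel. \cref{eq:ctd-mean} shows that the $\loss$-marginal of the $e^{-\tilde\loss}$-tilted joint law is the law of the privacy loss under $Q$, while its conditionals given $\tilde\lossRV$ are unchanged.

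The paper's route is a one-line check that needs no conditioning on $\tilde\lossRV$. Yours is constructive, and it stays elementary here because $\Lambda$ is countable. For the products it needs only the easy data-processing direction of \cref{lem:kov}, together with the observation (via \cref{eq:product-pld}) that the coordinatewise image of $(\tilde P^t,\tilde Q^t)$ has the same PLD as $(P^t,Q^t)$.
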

We bound the cost of replacing the parameters $(\lossMean{\tilde P^t}{\tilde Q^t},\lossVar{\tilde P^t}{\tilde Q^t})$ of the Gaussian random variable by $(\lossMean{P^t}{Q^t},\lossVar{P^t}{Q^t})$ using \cref{eq:param-change}, and obtain the following bound on the error caused by the discretization.
\begin{corollary}[Error of the discretization]\label{cor:ctd-error}
Assume the conditions of \cref{thm:ctd-composition} and that $\lossVar{P}{Q}>0$. Then for every $0<\delta\le1$ and $0<\tailShift\le1$ with $3\tailShift<\delta$,
\[
 \eps(\delta)\le\tilde\eps(\delta)\le\eps(\delta-3\tailShift)+2C(\rangeBnd+\gridSpacing)\ln(2/\tailShift)+\frac{t\gridSpacing^2}8+\frac{\gridSpacing^2\sqrt t}8\left(\frac1{\sqrt{\lossVar{P}{Q}}}+1\right)\sqrt{2\ln(1/\tailShift)} .
\]
\end{corollary}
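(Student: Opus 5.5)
The lower bound $\eps(\delta)\le\tilde\eps(\delta)$ comes directly from the domination statement of \cref{thm:ctd-composition}. By \cref{def:dom}, domination gives $\HockeyStickDiv{e^{\eps}}{P^t}{Q^t}\le\HockeyStickDiv{e^{\eps}}{\tilde P^t}{\tilde Q^t}$ for every $\eps$, so the smallest admissible $\eps$ can only grow.

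For the upper bound I will build a chain of approximate stochastic dominations running from the discretized privacy loss $\tilde L\coloneqq\lossRVfunc{\tilde P^t}{\tilde Q^t}$ to the original privacy loss $L\coloneqq\lossRVfunc{P^t}{Q^t}$. I then transfer it to hockey-stick divergences via \citep[Claim~4.2]{FS26pld}. The chain has three links, composed with the transitivity rule stated after \cref{def:gauss-conc}:
\[
 \tilde L\preceq\tilde G\preceq G\preceq L,\qquad \tilde G\sim\Nrm(\lossMean{\tilde P^t}{\tilde Q^t},\lossVar{\tilde P^t}{\tilde Q^t}),\quad G\sim\Nrm(\lossMean{P^t}{Q^t},\lossVar{P^t}{Q^t}).
\]
\begin{itemize}
\item The first link is the upper half of the Gaussian concentration in \cref{thm:ctd-composition}. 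It has shifts $(C(\rangeBnd+2\gridSpacing)\ln(2/\tailShift),\tailShift)$.
\item The last link is the lower half of \cref{thm:gauss-approx} applied to $(P,Q)$ with $\truncSet$ the whole space, which is allowed since $|\lossRVfunc{P}{Q}-\lossMean{P}{Q}|\le\rangeBnd$. It has shifts $(C\rangeBnd\ln(2/\tailShift),\tailShift)$.
\end{itemize}
The two loss shifts add to $2C(\rangeBnd+\gridSpacing)\ln(2/\tailShift)$, which is exactly the corresponding term of the claim.

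The middle link compares two Gaussians; this is the content of \cref{eq:param-change}, and it is the only step that needs care. I will use the explicit coupling
\[
 \tilde G=\tilde\mu+\sqrt{\tilde V/V}\,(G-\mu),
\]
where $\mu,V$ are the parameters of $G$ and $\tilde\mu,\tilde V$ those of $\tilde G$. Then
\[
 \tilde G-G=(\tilde\mu-\mu)+\Big(\sqrt{\tilde V/V}-1\Big)(G-\mu).
\]
For one-sided domination only one tail of $G-\mu$ matters: the upper tail if $\tilde V\ge V$, the lower tail otherwise. By $\Phibar(w)\le e^{-w^2/2}/2$, that tail exceeds $\sqrt{2V\ln(1/\tailShift)}$ with probability at most $\tailShift$. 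Hence
\[
 \tilde G\preceq_{(a,\tailShift)}G,\qquad a=(\tilde\mu-\mu)+\bigl|\sqrt{\tilde V}-\sqrt V\bigr|\sqrt{2\ln(1/\tailShift)}.
\]
I then bound $a$ using the moment bounds of \cref{thm:ctd-composition}, with $V=t\lossVar{P}{Q}$.
\begin{itemize}
\item The mean term satisfies $\tilde\mu-\mu\le t\gridSpacing^2/8$.
\item For the variance term I use $|\sqrt{\tilde V}-\sqrt V|=|\tilde V-V|/(\sqrt{\tilde V}+\sqrt V)$. When $\tilde V\ge V$ the denominator is at least $2\sqrt V$ and $|\tilde V-V|\le t\gridSpacing^2/4+(t\gridSpacing^2/8)\sqrt{\lossVar{P}{Q}}$, which gives at most $\frac{\gridSpacing^2\sqrt t}{8}\bigl(1/\sqrt{\lossVar{P}{Q}}+1/2\bigr)$. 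When $\tilde V<V$ the denominator is at least $\sqrt V$ and $|\tilde V-V|\le(t\gridSpacing^2/8)\sqrt{\lossVar{P}{Q}}$, which gives at most $\gridSpacing^2\sqrt t/8$.
\end{itemize}
In both cases the bound is at most $\frac{\gridSpacing^2\sqrt t}{8}\bigl(1/\sqrt{\lossVar{P}{Q}}+1\bigr)$, matching the claim. The assumption $\lossVar{P}{Q}>0$ ensures the division is legitimate.

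To conclude, let $\lossShift$ be the sum of the three loss shifts. The chain gives $\tilde L\preceq_{(\lossShift,3\tailShift)}L$, so by \citep[Claim~4.2]{FS26pld}
\[
 \HockeyStick{e^{\eps}}(\tilde L)\le\HockeyStick{e^{\eps-\lossShift}}(L)+3\tailShift .
\]
Set $\eps=\eps(\delta-3\tailShift)+\lossShift$, which is well defined since $3\tailShift<\delta$. The right-hand side is then at most $(\delta-3\tailShift)+3\tailShift=\delta$, so $\tilde\eps(\delta)\le\eps(\delta-3\tailShift)+\lossShift$, which is the stated upper bound.
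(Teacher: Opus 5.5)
Your proposal is correct and follows the paper's proof essentially step for step. You use domination for the lower bound, then the same three-link chain $\tilde L\preceq\tilde G\preceq G\preceq L$ through \cref{thm:ctd-composition}, \cref{eq:param-change} and \cref{thm:gauss-approx}, the same case split on the sign of $\tilde V-V$, and the same final transfer via \citep[Claim~4.2]{FS26pld}. The only cosmetic difference is that you re-derive the Gaussian parameter change with an explicit affine coupling and a one-sided tail, whereas the paper cites \cref{eq:param-change}, which uses the two-sided tail bound.
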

The term $2C(\rangeBnd+\gridSpacing)\ln(2/\tailShift)$ is the error of the two Gaussian concentrations. Both loss shifts enter, since the comparison uses the lower bound for the original composition and the upper bound for the discretized one. This term depends neither on the variance nor on the number of steps. The term $t\gridSpacing^2/8$ bounds the increase of the mean, which biases the privacy profile itself rather than adding an error of the approximation. The last term bounds the effect of the change of the variance. The last term is $O\left(\gridSpacing^2\sqrt{t\ln(1/\tailShift)/\lossVar{P}{Q}}\right)$ whenever $\lossVar{P}{Q}\le1$. This condition holds for $\rangeBnd\le1$, since $\lossVar{P}{Q}\le\rangeBnd^2$. This explains the finding in \citep{DGKKM22} for bounded privacy losses.

For comparison, we note that the error of a mean-preserving discretization can be bounded by coupling the original and the discretized privacy losses and applying Hoeffding's inequality \citep{Hoeffding63}, as in \citep[Lemma~5.3]{GLW21}. This argument also applies to CtD, since given $\lossRV_1,\ldots,\lossRV_t$ the corrections $\tilde \lossRV_i-\lossRV_i$ are independent, with means at most $\gridSpacing^2/8$ and ranges of length at most $\gridSpacing$ (see the proof of \cref{lem:ctd-facts}). Thus the privacy loss of $(P^t,Q^t)$ approximately stochastically dominates that of $(\tilde P^t,\tilde Q^t)$ with tail shift $\tailShift$ and loss shift $t\gridSpacing^2/8+\gridSpacing\sqrt{t\ln(1/\tailShift)/2}$. This loss shift is of order $\gridSpacing\sqrt{t\ln(1/\tailShift)}$ when $t\gridSpacing^2\ll1$. The last term of \cref{cor:ctd-error} is $O\left(t\gridSpacing^2\sqrt{\ln(1/\tailShift)}\right)$ when $t$ is of order $1/\lossVar{P}{Q}$. In this case our bound on the error caused by the discretization is quadratically better (ignoring logarithmic factors).

\subsection*{AI Usage}
The authors formulated the questions and the key approach to the problem. The development of these ideas and writing were performed through an iterative process in which AI (Claude Opus 5.5 and ChatGPT Astra 6) found the relevant technical results, generated details of proofs and wrote parts of the prose based on author-provided guidance and structure. The resulting text was also extensively edited and proofread by the authors. The implementation of the evaluation code was written by AI under close supervision and auditing by the authors. The authors take full responsibility for the content of this paper.

\subsection*{Acknowledgments}
Shenfeld's work was supported in part by ERC grant 101125913, Simons Foundation Collaboration 733792, Israel Science Foundation (ISF) grant 2861/20, and Apple. Views and opinions expressed are however those of the authors only and do not necessarily reflect those of the European Union or the European Research Council Executive Agency. Neither the European Union nor the granting authority can be held responsible for them.

\appendix
\crefalias{section}{appendix}
\crefalias{subsection}{appendix}

\section{Comparison with existing approaches}\label{sec:comparison}
We compare existing analyses of the Gaussian mechanism at sampling rate $\lambda=1/t$ (which is also the regime of random allocation) by the noise scale they need and by how their leading term relates to the central limit rate $\sqrt{\ln(1/\delta)}/\left(\sigma\sqrt t\right)$. \Cref{tab:comparison} summarizes the comparison.

\para{R\'enyi divergence.}
For the $\lambda$-subsampled Gaussian mechanism, the moments accountant \citep{ACGMMT16} shows that the R\'enyi divergence of integer order $\renyiOrder\le\sigma^2\ln(1/(\lambda\sigma))$ is $O(\lambda^2\renyiOrder/\sigma^2)$. This quadratic behavior fails for orders much larger than $\sigma^2\ln(1/\lambda)$. Indeed, by the exact binomial expression \citep{MTZ19}, the divergence is at least $\renyiOrder/(2\sigma^2)-\frac{\renyiOrder}{\renyiOrder-1}\ln(1/\lambda)$ (the contribution of the term in which the element is sampled in all $\renyiOrder$ draws; see also \citep[Section~6.6]{Steinke25}). At the same time, a quadratic bound for $t$ steps is converted to $(\eps,\delta)$-DP at the central limit rate using the order $\renyiOrder\asymp\sigma\sqrt{\ln(1/\delta)/t}/\lambda$. The two constraints are compatible only if $\lambda\sigma\ln(1/(\lambda\sigma))\gtrsim\sqrt{\ln(1/\delta)/t}$. At $\lambda=1/t$ this condition reads $\sigma\gtrsim\sqrt{t\ln(1/\delta)}/\ln(t/\ln(1/\delta))$ up to lower-order terms.

\para{Amplification and advanced composition.}
A black-box analysis starts from the $(\eps_0,\delta_0)$-DP guarantee of the Gaussian mechanism with $\eps_0=1/(2\sigma^2)+\sqrt{2\ln(1/\delta_0)}/\sigma$. It amplifies this guarantee by subsampling to $(\ln(1+\lambda(e^{\eps_0}-1)),\lambda\delta_0)$-DP and composes $t$ steps by advanced composition \citep{DRV10}. At $\lambda=1/t$ the $t$ steps fail with total probability $\delta_0$, which forces $\delta_0\lesssim\delta$. The amplified $\eps$ is of order $\eps_0/t$ only when $\eps_0=O(1)$, that is, when $\sigma\gtrsim\sqrt{\ln(1/\delta)}$. Advanced composition then gives $\eps\lesssim\ln(1/\delta)/\left(\sigma\sqrt t\right)$. The condition on $\sigma$ is mild, but the bound is a factor $\sqrt{\ln(1/\delta)}$ above the central limit rate.

\para{Central limit theorems.}
\citet[Section~3.1 and Theorem~6]{BDLS20} show that $t$ steps of the Poisson-subsampled Gaussian mechanism tend to $\lambda\sqrt{t\chiVar}$-GDP when $\lambda\sqrt t$ converges to a positive constant (here $\chiVar=e^{1/\sigma^2}-1$ is the $\chi^2$ divergence of the Gaussian pair). Through the Gaussian tail bound this limit yields $\eps\approx t\lambda^2\chiVar/2+\lambda\sqrt{2t\chiVar\ln(1/\delta)}$, the form of the bound of \cref{thm:subsampled}. It is a limit statement, however, and its hypotheses exclude $\lambda=1/t$, for which $\lambda\sqrt t\to0$.

For every finite $t$, there is a Berry--Esseen bound \citep{DRS22}. For batches of a fixed size $\lambda n$ out of $n$ examples, the trade-off function of $t$ steps of the Gaussian mechanism is at least the GDP trade-off function moved to the left and down by $\tailShift_{\mathrm{BE}}$ and truncated at $0$ \citep[Corollary~5.6]{DRS22}. Here $\tailShift_{\mathrm{BE}}$ equals $0.56/\sqrt t$ times a normalized third moment of the privacy loss of one step \citep[Lemma~5.5]{DRS22}. At type I error $0$ this lower bound is at most $1-\tailShift_{\mathrm{BE}}$, whereas the trade-off function of $(\eps,\delta)$-DP equals $1-\delta$ there. Therefore the guarantee yields $(\eps,\delta)$-DP only for $\delta\ge\tailShift_{\mathrm{BE}}$. For $\lambda=1/t$ and fixed $\sigma$, the error $\tailShift_{\mathrm{BE}}$ is of order $1/\sqrt t$.

Edgeworth expansions \citep{ZDLS20,WGZSSW26} and the saddle-point accountant \citep[Theorem~5.6]{AGACKS23} approximate the privacy profile more accurately. The first of these expansions applies only to type I errors bounded away from $0$ and $1$ \citep[Section~3.2]{ZDLS20}, while small values of $\delta$ depend on type I errors near $0$. The finite-sample bound on the error of the second expansion is additive and uniform in the argument of the distribution function, of order $1/\sqrt t$ with constants given by normalized moments of the privacy losses \citep[Lemma~4.3]{WGZSSW26}. This bound improves to order $1/t$ under a decay condition on the characteristic function and up to an unspecified lower-order remainder \citep[Remark~4.4 and Corollary~E.1]{WGZSSW26}. As for the Berry--Esseen bound, such an error gives $(\eps,\delta)$-DP only for $\delta$ above it, and we have not specialized these bounds to $\lambda=1/t$. The saddle-point bounds are evaluated numerically. The accompanying asymptotic composition theorem lets $\delta$ depend on the number of steps \citep[Theorem~4.1]{AGACKS23}, but it assumes that the standard deviation of the total privacy loss is much larger than $\sqrt{\ln(1/\delta)}$. In the regime of the central limit rate $\eps\lesssim\ln(1/\delta)$, this assumption fails for the subsampled Gaussian mechanism. Indeed, by \cref{lem:chi2-moments,lem:subsampling-chi2}, its total privacy loss has variance at most $t\lambda^2\chiVar/(1-\lambda)$ and therefore standard deviation at most of order $\eps/\sqrt{\ln(1/\delta)}$.

\begin{table}[t]
\centering
\small
\setlength{\tabcolsep}{4pt}
\begin{tabular}{@{}>{\raggedright\arraybackslash}p{4.9cm}l>{\raggedright\arraybackslash}p{4.3cm}l@{}}
\toprule
Method & Scheme & Sufficient noise scale & Leading term \\
\midrule
Quadratic R\'enyi bounds \citep{ACGMMT16,MTZ19} & Poisson & $\sigma\gtrsim\sqrt{t\ln(1/\delta)}/\ln(t/\ln(1/\delta))$ & $\sqrt{\ln(1/\delta)}/\left(\sigma\sqrt t\right)$ \\[2pt]
$(\eps_0,\delta_0)$ amplification with advanced composition & Poisson & $\sigma\gtrsim\sqrt{\ln(1/\delta)}$ & $\ln(1/\delta)/\left(\sigma\sqrt t\right)$ \\[2pt]
Reduction to Poisson subsampling \citep{FS25ra} & allocation & $\sigma\gtrsim\sqrt{\ln(t/\delta)}$ and the condition of the Poisson analysis & $\sqrt{\ln(t/\delta)}/\left(\sigma\sqrt t\right)$ \\[2pt]
This work (\cref{clm:gauss-closed-form,cor:gauss-ra-closed-form}) & both & $\sigma\ln\left(1+\frac{\sqrt t}{\sigma\sqrt{C\ln(1/\delta)}}\right)\gtrsim\sqrt{\ln(t/\delta)}$ & $\sqrt{\ln(1/\delta)}/\left(\sigma\sqrt t\right)$ \\
\bottomrule
\end{tabular}
\caption{Noise scales at which each analysis gives its leading privacy term for the Gaussian mechanism with Poisson subsampling at rate $1/t$ or with $1$-out-of-$t$ random allocation (lower-order terms omitted). For polynomially small $\delta$, the condition of this work holds for every fixed $\sigma\ge1$ once $t$ is sufficiently large.}
\label{tab:comparison}
\end{table}
\section{Comparison with other privacy notions}\label{app:notions}

The definitions below apply to the output distributions $P$ and $Q$ on every pair of adjacent datasets, in both directions. For $\renyiOrder>1$, we write $D_\renyiOrder(P\|Q)\coloneqq\frac1{\renyiOrder-1}\ln\left(\expect{}{e^{(\renyiOrder-1)\lossRVfunc{P}{Q}}}\right)$ for the R\'enyi divergence of order $\renyiOrder$.

\para{Zero-concentrated differential privacy.}
An algorithm is $(\xi,\rho)$-zCDP if $D_\renyiOrder(P\|Q)\le\xi+\rho\renyiOrder$ for every $\renyiOrder>1$, and it is $\rho$-zCDP if it is $(0,\rho)$-zCDP \citep[Definition~1.1]{BS16}. For a Gaussian privacy loss $\Nrm(\mu,V)$, equality holds for $(\xi,\rho)=(\mu-V/2,V/2)$. Every $\sqrt V$-GDP algorithm is $V/2$-zCDP \citep[Lemmas~1.8 and~2.4]{BS16}. Shifted GDP does not imply zCDP when $\tailShift>0$, since the R\'enyi divergences can be infinite. Subsampling does not amplify zCDP for the Gaussian mechanism \citep[Section~6.6]{Steinke25}.

\para{Truncated concentrated differential privacy.}
For $\rho>0$ and $\tcdpOrder>1$, an algorithm is $(\rho,\tcdpOrder)$-tCDP if $D_\renyiOrder(P\|Q)\le\rho\renyiOrder$ for every $\renyiOrder\in(1,\tcdpOrder)$ \citep[Definition~1]{BDRS18}. For a Gaussian privacy loss $\Nrm(\mu,V)$, this holds exactly when $\rho\ge\mu$. \citet[Theorem~13]{BDRS18} prove privacy amplification by uniform subsampling without replacement for tCDP. We discuss the limitations of R\'enyi bounds for Poisson subsampling in \cref{sec:comparison}.

\para{Concentrated differential privacy.}
An algorithm is $(\mu,\tau)$-CDP if its privacy loss $\lossRV\coloneqq\lossRVfunc{P}{Q}$ has mean at most $\mu$ and $\lossRV-\expect{}{\lossRV}$ is $\tau$-subgaussian \citep[Definitions~2.5, 3.2 and~3.3]{DR16}. For a Gaussian privacy loss $\Nrm(\mu,V)$, both bounds hold with equality for $\left(\mu,\sqrt V\right)$. Subsampling does not reduce $\tau$ for the Gaussian mechanism \citep[Section~6.6]{Steinke25}, \citep[Lemma~4.2]{BS16}. Unlike zCDP and shifted GDP, CDP is not closed under post-processing \citep[Proposition~A.1]{BS16}.

\section{Proofs for \texorpdfstring{\cref{sec:gaussapprox}}{Section 3}}\label{app:gaussapprox-proofs}
We use the notation of \cref{sec:gaussapprox}. We first bound the privacy profile of GDP, which we use for the leading constants in \cref{cor:gauss-leading}. We then prove \cref{thm:gauss-approx} from a Gaussian coupling of sums of bounded independent random variables. Finally, we bound the cost of changing the parameters of a Gaussian random variable, which we also use in later sections.

\begin{lemma}[Privacy profile of GDP]\label{lem:gdp-small-var}
Let $V>0$ and $\delta\in(0,1)$ be such that $k\coloneqq2\ln(1/\delta)-\ln(1/V)\ge1$. Then
\[
 \eps_{\mathrm G}(V,\delta)\le\frac V2+\sqrt{kV} .
\]
\end{lemma}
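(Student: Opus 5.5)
Since $g_V$ is nonincreasing in $\eps$, it suffices to show $g_V(\eps^*)\le\delta$ at $\eps^*\coloneqq V/2+\sqrt{kV}$. I will bound $g_V$ directly from the closed form \cref{eq:gauss-ref}, rather than through the tail bound \cref{eq:tail-bound}. The point is that the tail bound discards the factor of order $\sqrt V$ that makes $\ln(1/V)$ appear in $k$.

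\textbf{Step 1 (reduction to a Mills-ratio bound).} Let $a\coloneqq\eps/\sqrt V-\sqrt V/2$ and $b\coloneqq\eps/\sqrt V+\sqrt V/2=a+\sqrt V$, so that
\[
g_V(\eps)=\Phibar(a)-e^{\eps}\Phibar(b).
\]
We have $e^{\eps}\varphi(b)=\varphi(a)$, where $\varphi$ is the standard normal density, because $b^2-a^2=2\eps$. Writing $m(x)\coloneqq\Phibar(x)/\varphi(x)$ for the Mills ratio, this gives
\[
g_V(\eps)=\varphi(a)\bigl(m(a)-m(a+\sqrt V)\bigr).
\]
Since $m$ is decreasing and convex with $m'(x)=x\,m(x)-1$, and $-m'(x)\le 1/x^2$ for $x>0$ (from the standard bound $m(x)\ge x/(x^2+1)$), the mean value theorem gives
\[
g_V(\eps)\le\varphi(a)\,\frac{\sqrt V}{a^2}.
\]
Alternatively, one can bound the difference by the integral $\int_0^{\sqrt V}-m'(a+s)\,ds\le\sqrt V/a^2$.

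\textbf{Step 2 (plug in $\eps^*$).} At $\eps^*$ we get $a=\sqrt{k}$, so
\[
g_V(\eps^*)\le\frac{\sqrt V}{k}\cdot\frac{e^{-k/2}}{\sqrt{2\pi}}.
\]
By the definition of $k$, $e^{-k/2}=\delta\cdot V^{-1/2}$. Hence $g_V(\eps^*)\le\delta/(k\sqrt{2\pi})\le\delta$, using $k\ge1$. This is where the hypothesis $k\ge1$ is used: it guarantees $a\ge1$, which keeps $1/a^2\le 1$ and the Mills-ratio derivative bound meaningful.

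\textbf{Main obstacle.} The delicate step is Step 1's bound on $m(a)-m(a+\sqrt V)$, which must hold uniformly in $V$ (including large $V$) with a constant small enough to absorb into $k\ge1$. I would first try the derivative bound $-m'(x)\le 1/x^2$ together with monotonicity of $-m'$ on $[a,\infty)$. If that bound is not clean, the fallback is the cruder estimate $m(a)-m(b)\le m(a)\le 1/a$, combined with $\min\{\sqrt V/a^2,\,1/a\}$. Only the $\sqrt V/a^2$ branch matters when $V$ is small, and for large $V$ the constant factor $1/\sqrt{2\pi}$ leaves room.
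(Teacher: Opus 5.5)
Your proof is correct, and it takes a different route from the paper's. The paper never touches the closed form \cref{eq:gauss-ref}. It views $\mathcal G_V$ as the output pair of a Gaussian mechanism that is $V/2$-zCDP. It then invokes the zCDP-to-DP conversion of Bun and Steinke (their Lemma~3.6), $g_V(\eps)\le e^{-(\eps-V/2)^2/(2V)}/(1+(\eps-V/2)/V)$, which at $\eps^*=V/2+\sqrt{kV}$ equals $\delta/(\sqrt V+\sqrt k)\le\delta$.

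Your argument is self-contained and uses only \cref{eq:gauss-ref}. The identity $e^{\eps}\varphi(b)=\varphi(a)$ gives $g_V(\eps)=\varphi(a)\bigl(m(a)-m(a+\sqrt V)\bigr)$. Gordon's inequality gives $-m'(x)=1-xm(x)\le1/(1+x^2)$, and the mean value theorem then yields $g_V(\eps^*)\le\delta/(\sqrt{2\pi}\,k)$.

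Both proofs capture the same phenomenon: the profile is only a fraction of order $\sqrt V$ of the Gaussian tail at $a$, which is why $\ln(1/V)$ enters $k$. The paper gets this in one line from an existing lemma, whereas you derive it from first principles. Your final bound is sharper when $V$ is small compared with $k^2$, which is the regime of \cref{cor:gauss-leading}. The paper's bound is sharper when $V$ is large compared with $k^2$. Neither difference matters for the statement.

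Your \emph{Main obstacle} worry is unfounded. The mean value point $\xi$ lies in $(a,a+\sqrt V)$ with $a=\sqrt k$, so $-m'(\xi)\le1/(1+\xi^2)\le1/k$ uniformly in $V$. Neither the convexity of $m$ nor the fallback $m(a)\le1/a$ is needed. The hypothesis $k\ge1$ is used only to ensure $a>0$ and $1/(\sqrt{2\pi}\,k)\le1$.
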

\begin{proof}
The pair $\mathcal G_V$ consists of the output distributions of the Gaussian mechanism with noise variance $1/V$ on two adjacent datasets on which a sensitivity-$1$ query takes the values $1$ and $0$, and this mechanism is $V/2$-zCDP \citep[Proposition~1.6]{BS16}.
Therefore, by \citep[Lemma~3.6]{BS16} with $\rho=V/2$, for every $\eps\ge V/2$,
\[
 g_V(\eps)\le\frac{e^{-(\eps-V/2)^2/(2V)}}{1+(\eps-V/2)/V} .
\]
For $\eps=V/2+\sqrt{kV}$ the right-hand side equals $e^{-k/2}/\left(1+\sqrt{k/V}\right)=\delta/\left(\sqrt V+\sqrt k\right)\le\delta$, since $\sqrt Ve^{-k/2}=\delta$ and $k\ge1$.
\end{proof}
We derive \cref{lem:coupling} from the following bound on the moments of the error of a Gaussian coupling.
\begin{theorem}[{\citep[Corollary~4 and inequality~(1.6)]{Sakhanenko96}}]\label{thm:sakhanenko}
There is a universal constant $C_0>0$ such that the following holds. Let $X_1,\ldots,X_t$ be independent mean-zero random variables, let $S\coloneqq\sum_{i=1}^tX_i$ with $V_S\coloneqq\Var(S)>0$, and let $p\ge2$. Then $S$ can be coupled with a standard normal random variable $G_0$ so that
\[
 \expect{}{\left|\frac{S}{\sqrt{V_S}}-G_0\right|^p}
 \le(C_0p)^p\,V_S^{-(p+2)/2}\sum_{i=1}^t\expect{}{|X_i|^{p+2}} .
\]
\end{theorem}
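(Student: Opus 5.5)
This statement is a direct specialization of Sakhanenko's strong approximation result, so the plan is to reduce it to the normalized form in \citep{Sakhanenko96} by rescaling and matching the moment quantities. I am not reproving Sakhanenko's construction itself.

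First, if $\expect{}{|X_i|^{p+2}}=\infty$ for some $i$, the bound is trivial, so assume all these moments are finite. Then normalize. Set $Y_i\coloneqq X_i/\sqrt{V_S}$. The $Y_i$ are independent and mean zero, with $\sum_i\Var(Y_i)=1$, and
\[
 \sum_{i=1}^t\expect{}{|Y_i|^{p+2}}=V_S^{-(p+2)/2}\sum_{i=1}^t\expect{}{|X_i|^{p+2}}\eqqcolon L_{p+2}.
\]
Thus the right-hand side of the claimed inequality is exactly $(C_0p)^pL_{p+2}$, the Lyapunov fraction of order $p+2$ of the normalized summands. It therefore suffices to couple $\sum_iY_i$ with a standard normal $G_0$ so that $\expect{}{|\sum_iY_i-G_0|^p}\le(C_0p)^pL_{p+2}$.

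Second, invoke \citep[Corollary~4]{Sakhanenko96}. It constructs on one probability space the $Y_i$ together with independent Gaussians $Z_i\sim\Nrm(0,\Var(Y_i))$ such that the maximal deviation of the partial sums satisfies a moment bound of the form $\expect{}{\max_k|\sum_{i\le k}(Y_i-Z_i)|^p}\le(C_1p)^p\sum_i\expect{}{|Y_i|^{p+2}}$. For this one needs the version of the bound in which the $(p+2)$-th moment enters rather than the $p$-th moment, which is the variant recorded as inequality~(1.6) there. Taking $k=t$ and $G_0\coloneqq\sum_iZ_i$ gives a standard normal, since the variances sum to $1$, and hence the claimed bound with $C_0=C_1$. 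The only bookkeeping is checking that Sakhanenko's constant is linear in $p$ uniformly for $p\ge2$, which is how it is stated there.

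The main obstacle, if one wanted a self-contained proof, is Sakhanenko's construction itself. It is a refinement of the Koml\'os--Major--Tusn\'ady dyadic quantile coupling, and controlling the $p$-th moment with constant $(Cp)^p$ requires the careful conditional quantile transforms and Bernstein-type estimates of that work. I would not redo this. Instead I would treat it as a black box and verify only that the normalization above matches its hypotheses: independence, zero means, unit total variance, and finite $(p+2)$-th moments.
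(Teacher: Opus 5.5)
Your overall route matches the paper's, which gives no proof of this statement and simply cites \citep[Corollary~4 and inequality~(1.6)]{Sakhanenko96}. Your preliminary steps are also correct: the trivial case of an infinite moment, and the rescaling $Y_i=X_i/\sqrt{V_S}$, which turns the right-hand side into $(C_0p)^p$ times the Lyapunov ratio $L_{p+2}$. The gap is in the form in which you invoke the black box. The path bound
\[
 \expect{}{\max_{k\le t}\Bigl|\sum_{i\le k}(Y_i-Z_i)\Bigr|^p}\le(C_1p)^p\sum_{i=1}^t\expect{}{|Y_i|^{p+2}}
\]
is false. Take i.i.d.\ Rademacher $X_i$, so that $V_S=t$ and $\sum_i\expect{}{|Y_i|^{p+2}}=t^{-p/2}$. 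Undoing the normalization, the inequality says that Gaussian partial sums $T_k$ track $S_k$ uniformly in $k\le t$ with error $O(p)$ in $L^p$, independently of $t$. That cannot hold. Over windows of length $m=\lceil\ln t\rceil$, the Rademacher walk increases by at most $m$. With probability tending to one, some window of the Gaussian walk increases by $(\sqrt2-o(1))\,m$. Hence every coupling has $\max_{k\le t}|S_k-T_k|\ge c\ln t$ with probability tending to one. This is B\'artfai's obstruction, which makes the Koml\'os--Major--Tusn\'ady rate optimal for path couplings. For path couplings, Sakhanenko's moment bounds have $\sum_i\expect{}{|X_i|^p}$ on the right-hand side, and for bounded summands this yields the unavoidable order $\ln t$ after choosing $p\asymp\ln t$.

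The $(p+2)$-th moment divided by $V_S$ gives an error independent of $t$ for bounded summands, which is exactly what \cref{lem:coupling} exploits. Such an error is attainable only for the endpoint $S$ coupled with a single normal variable, and that is the result the paper takes from the cited source. So your step ``take $k=t$ in the maximal inequality'' rests on a statement that does not hold. The repair is to invoke the single-sum coupling directly. It gives a standard normal $G_0$ with $\expect{}{|\sum_iY_i-G_0|^p}\le(C_0p)^pL_{p+2}$. Your normalization computation then finishes the proof, with no auxiliary Gaussians $Z_i$ and no maximum over $k$.
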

\begin{proofof}{Proof of \cref{lem:coupling} (Gaussian coupling of bounded summands)}
We may assume that $V_S\coloneqq\Var(S)>0$, since otherwise $S=0$ almost surely and we can take $G=0$. For every $p\ge2$, let $G_0$ be the standard normal random variable of \cref{thm:sakhanenko}, and let $C_0$ be its constant. Since $|X_i|^{p+2}\le \rangeBnd^pX_i^2$ and $\sum_i\expect{}{X_i^2}=V_S$, the random variable $G\coloneqq\sqrt{V_S}\,G_0\sim\Nrm(0,V_S)$ satisfies $\expect{}{|S-G|^p}\le(C_0p\rangeBnd)^p$. We choose $p\coloneqq\max\{2,\lceil u\rceil\}$ for $u\coloneqq\ln(2/\tailShift)>1$, so that $u\le p\le2u$. Then Markov's inequality for $|S-G|^p$ gives, for $C\coloneqq2eC_0$,
\[
 \prob{}{|S-G|>C\rangeBnd u}\le\left(\frac{C_0p}{Cu}\right)^p\le e^{-p}\le e^{-u}=\frac\tailShift2 .
\]
\end{proofof}

\begin{proofof}{Proof of \cref{thm:gauss-approx} (Gaussian concentration for compositions)}
We prove the claim with $C\coloneqq2C_1$, where $C_1$ is the constant of \cref{lem:coupling}. Let $\omega\sim P^t$ and $\lossRV\coloneqq\lossFunc{\omega}{P^t}{Q^t}=\sum_i\lossFunc{\omega_i}{P}{Q}$ (by \cref{eq:product-pld}), and let $\lossRV'\coloneqq\sum_i\lossFunc{\omega_i}{P}{Q}\ind\{\omega_i\in \truncSet\}$ be the truncated privacy loss. By a union bound over the events $\omega_i\notin \truncSet$, we have that $\lossRV'=\lossRV$ except with probability at most $\tailShift/2$.

The centered summands $\lossFunc{\omega_i}{P}{Q}\ind\{\omega_i\in \truncSet\}-\truncMean{P}{Q}$ of $\lossRV'-t\cdot\truncMean{P}{Q}$ are independent and at most $\rangeBnd$ in absolute value almost surely. By \cref{lem:coupling} with $\tailShift/2$ in place of $\tailShift$, we can couple $\lossRV'-t\cdot\truncMean{P}{Q}$ with $G_0\sim\Nrm(0,t\cdot\truncVar{P}{Q})$ so that $|\lossRV'-t\cdot\truncMean{P}{Q}-G_0|\le C_1\rangeBnd\ln(4/\tailShift)$ except with probability $\tailShift/2$ (given $\omega$, we sample $G_0$ from the conditional distribution of $G_0$ given $\lossRV'$ in this coupling). Thus $G\coloneqq t\cdot\truncMean{P}{Q}+G_0\sim\Nrm(t\cdot\truncMean{P}{Q},t\cdot\truncVar{P}{Q})$ satisfies $|\lossRV'-G|\le C_1\rangeBnd\ln(4/\tailShift)$ except with probability $\tailShift/2$. Since $\ln(4/\tailShift)\le2\ln(2/\tailShift)$ for $\tailShift\le1$, we get that $|\lossRV-G|\le C\rangeBnd\ln(2/\tailShift)$ except with probability $\tailShift$. As noted after \cref{def:gauss-conc}, this coupling gives the claim.
\end{proofof}

Changing the parameters of a Gaussian random variable costs a loss shift. Let $\mu,\mu_0\in\reals$, $V,V_0\ge0$ and $0<\tailShift\le1$. Then random variables with distributions $\Nrm(\mu_0,V_0)$ and $\Nrm(\mu,V)$ each $(\lossShift,\tailShift)$-approximately stochastically dominate the other for $\lossShift$ given by \cref{eq:param-change} below. Indeed, for $G\sim\Nrm(0,1)$ the random variables $\mu+\sqrt V\,G$ and $\mu_0+\sqrt{V_0}\,G$ are distributed as $\Nrm(\mu,V)$ and $\Nrm(\mu_0,V_0)$. By the Gaussian tail bound, $|G|\le\sqrt{2\ln(1/\tailShift)}$ except with probability $\tailShift$, and on this event the two random variables differ by at most
\begin{equation}\label{eq:param-change}
 |\mu-\mu_0|+\left|\sqrt V-\sqrt{V_0}\right|\sqrt{2\ln(1/\tailShift)} .
\end{equation}
As noted after \cref{def:gauss-conc}, this coupling gives the claim.

\section{Proofs for \texorpdfstring{\cref{sec:chi2-subsampling}}{Section 4}}\label{app:bernstein-proofs}
In this appendix $\ratio$ is the likelihood ratio of $(P,Q)$, and $\ratio_\lambda\coloneqq1-\lambda+\lambda \ratio$ is the likelihood ratio of $(P_\lambda,Q)$.
\subsection{The \texorpdfstring{$\chi^2$}{chi-square} divergence as a proxy for the variance}\label{app:chi2-tightness}
We first prove \cref{lem:chi2-moments,lem:subsampling-chi2}. We then show that the bound of \cref{lem:chi2-moments} on the variance is tight up to lower-order terms, and use this to prove \cref{lem:chi2-tight-sub}.
\begin{proofof}{Proof of \cref{lem:chi2-moments} (Mean and variance are upper bounded by $\chi^2$)}
By Jensen's inequality,
\begin{align*}
 \lossMean{P}{Q}=\expect{\omega\sim P}{\ln(\ratioFunc{\omega}{P}{Q})}&\le\ln\left(\expect{\omega\sim P}{\ratioFunc{\omega}{P}{Q}}\right)\\
 &=\ln\left(\expect{\omega\sim Q}{\ratioFunc{\omega}{P}{Q}^2}\right)=\ln(1+\chisq{P}{Q})\le \chisq{P}{Q} ,
\end{align*}
and similarly $\expect{}{-\lossRVfunc{P}{Q}}\le\ln\left(\expect{\omega\sim P}{1/\ratioFunc{\omega}{P}{Q}}\right)\le0$. For the second moment we use that $s\ln^2(s)\le(s-1)^2$ for $s\ge0$, which for $s=e^{2a}$ is the inequality $|a|\le|\sinh a|$. Thus
\[
 \expect{}{\lossRVfunc{P}{Q}^2}=\expect{\omega\sim Q}{\ratioFunc{\omega}{P}{Q}\ln^2(\ratioFunc{\omega}{P}{Q})}\le\expect{\omega\sim Q}{(\ratioFunc{\omega}{P}{Q}-1)^2}=\chisq{P}{Q} .
\]
The bound on $\truncVar{P}{Q}$ follows since the variance of the truncated privacy loss is at most its second moment, which is at most $\expect{}{\lossRVfunc{P}{Q}^2}$.
\end{proofof}

\begin{proofof}{Proof of \cref{lem:subsampling-chi2} (Subsampling and the $\chi^2$ divergence)}
By the definition $P_\lambda=(1-\lambda)Q+\lambda P$, for $Q$-almost every $\omega$ we have
\[
 \ratio_\lambda(\omega)=\frac{(1-\lambda)Q(\omega)+\lambda P(\omega)}{Q(\omega)}
 =1-\lambda+\lambda\ratio(\omega),
\]
so $\ratio_\lambda(\omega)-1=\lambda(\ratio(\omega)-1)$.
The definition of the $\chi^2$ divergence now gives
\[
 \chisq{P_\lambda}{Q}=\expect{\omega\sim Q}{(\ratio_\lambda(\omega)-1)^2}=\lambda^2\expect{\omega\sim Q}{(\ratio(\omega)-1)^2}=\lambda^2\chisq{P}{Q}.
\]

For the reverse direction, $\ratio_\lambda(\omega)\ge1-\lambda>0$, so $Q$ and $P_\lambda$ are mutually absolutely continuous and the likelihood ratio of $(Q,P_\lambda)$ is $1/\ratio_\lambda(\omega)$.
Changing the sampling distribution from $P_\lambda$ to $Q$ introduces the factor $\ratio_\lambda(\omega)$, giving
\begin{align*}
 \chisq{Q}{P_\lambda}
 &=\expect{\omega\sim P_\lambda}{\left(\frac{1}{\ratio_\lambda(\omega)}-1\right)^2}\\
 &=\expect{\omega\sim Q}{\ratio_\lambda(\omega)\left(\frac{1}{\ratio_\lambda(\omega)}-1\right)^2}\\
 &=\expect{\omega\sim Q}{\frac{(\ratio_\lambda(\omega)-1)^2}{\ratio_\lambda(\omega)}}
 =\lambda^2\expect{\omega\sim Q}{\frac{(\ratio(\omega)-1)^2}{\ratio_\lambda(\omega)}}.
\end{align*}
Finally, using $\ratio_\lambda(\omega)\ge1-\lambda$ and the nonnegativity of the numerator, we obtain
\[
 \chisq{Q}{P_\lambda}
 \le\frac{\lambda^2}{1-\lambda}\expect{\omega\sim Q}{(\ratio(\omega)-1)^2}
 =\frac{\lambda^2}{1-\lambda}\chisq{P}{Q}.
\]
\end{proofof}

We bound the gap between the variance and its upper bound in \cref{lem:chi2-moments} by a fourth moment of the likelihood ratio and, for a bounded privacy loss, by its range.
\begin{lemma}[Variance is lower bounded by $\chi^2$ up to a fourth moment]\label{lem:chi2-tight}
Let $P$ and $Q$ be mutually absolutely continuous with $\chisq{P}{Q}<\infty$, and let $\ratio\coloneqq\ratioFunc{\cdot}{P}{Q}$. If $\lossRVfunc{Q}{P}\le\ln2$ almost surely (that is, $\ratio\ge1/2$) and $m_4\coloneqq\expect{\omega\sim Q}{(\ratio(\omega)-1)^4}<\infty$, then
\[
 \lossVar{P}{Q}\ge\chisq{P}{Q}-2m_4
 \qquad\text{and}\qquad
 \lossVar{Q}{P}\ge\chisq{Q}{P}-5m_4 .
\]
Furthermore, if $|\lossRVfunc{P}{Q}|\le a$ almost surely for some $a\le1$, then
\[
 \lossVar{P}{Q}\ge(1-4a^2)\,\chisq{P}{Q}
 \qquad\text{and}\qquad
 \lossVar{Q}{P}\ge(1-4a^2)\,\chisq{Q}{P} .
\]
\end{lemma}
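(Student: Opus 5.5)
The plan is to expand both variances as second moment minus squared mean, write everything in terms of $u\coloneqq\ratio-1$ under $Q$, and compare pointwise with the integrand of the corresponding $\chi^2$ divergence. Here $\expect{Q}{u}=0$, $\expect{Q}{u^2}=\chisq{P}{Q}$ and $\expect{Q}{u^4}=m_4$, and the assumption $\ratio\ge1/2$ means $u\ge-1/2$.

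\emph{Remove direction.} We have $\lossVar{P}{Q}=\expect{Q}{\ratio\ln^2\ratio}-\lossMean{P}{Q}^2$.
\begin{itemize}
\item For the second moment, the Taylor expansion $(1+u)\ln^2(1+u)=u^2-\tfrac1{12}u^4+O(u^5)$ has no cubic term. This is what lets a fourth moment control the gap without a third moment. I will prove the pointwise bound $u^2-(1+u)\ln^2(1+u)\le u^4$ for all $u\ge-1/2$: it holds near $0$ by the expansion, at $u=-1/2$ by direct evaluation ($\approx0.0098\le1/16$), and for large $u$ because $u^4$ dominates. A derivative or convexity check on the intermediate range should close it.
\item For the mean, \cref{lem:chi2-moments} gives $0\le\lossMean{P}{Q}\le\chisq{P}{Q}$. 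Hence $\lossMean{P}{Q}^2\le\chisq{P}{Q}^2=(\expect{Q}{u^2})^2\le m_4$ by Cauchy--Schwarz.
\end{itemize}
Together these give $\lossVar{P}{Q}\ge\chisq{P}{Q}-2m_4$.

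\emph{Add direction.} Here $\lossRVfunc{Q}{P}=-\ln\ratio$ under $Q$, and $\chisq{Q}{P}=\expect{Q}{u^2/(1+u)}$.
\begin{itemize}
\item For the second moment, $\tfrac{u^2}{1+u}-\ln^2(1+u)=\tfrac1{12}u^4+O(u^5)$, again with the cubic terms cancelling. I will show this difference is at most $u^4$ on $u\ge-1/2$: at the endpoint it is $\approx0.0195\le1/16$, and for large $u$ the left side grows only linearly.
\item For the mean, \cref{lem:chi2-moments} applied to $(Q,P)$ gives $0\le\lossMean{Q}{P}\le\chisq{Q}{P}\le2\chisq{P}{Q}$, using $1+u\ge1/2$. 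Hence $\lossMean{Q}{P}^2\le4\chisq{P}{Q}^2\le4m_4$.
\end{itemize}
This totals $5m_4$, as claimed.

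\emph{Bounded case.} Write $s\coloneqq\ln\ratio$, so $|s|\le a\le1$. The two second-moment integrands compare exactly:
\begin{itemize}
\item $\ratio\ln^2\ratio=e^{s}s^2$ against $(e^{s}-1)^2=e^{s}\cdot4\sinh^2(s/2)$;
\item $s^2$ against $u^2/\ratio=4\sinh^2(s/2)$.
\end{itemize}
In both cases the ratio is $\bigl(\tfrac{s/2}{\sinh(s/2)}\bigr)^2\ge1-s^2/12\ge1-a^2/12$. So each second moment is at least $(1-a^2/12)$ times the corresponding $\chi^2$ divergence.

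For the means, \cref{lem:chi2-moments} bounds each mean by its own $\chi^2$ divergence, so the squared mean is at most $\chi^2\cdot\chi^2$. Pointwise bounds give $\chisq{P}{Q}\le(e^a-1)^2\le3a^2$ for $a\le1$ (since $(e-1)^2<3$), and similarly $\chisq{Q}{P}\le\sup_{|s|\le a}4\sinh^2(s/2)\le3a^2$. So the squared mean is at most $3a^2\chi^2$, and the variance is at least $(1-a^2/12-3a^2)\chi^2\ge(1-4a^2)\chi^2$.

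\emph{Main obstacle.} The only nonroutine step is verifying the two global pointwise inequalities on $u\ge-1/2$ with constant $1$. I would handle each by splitting at a small interval around $0$, where the Taylor remainder suffices, and treating the two tails separately by monotonicity.
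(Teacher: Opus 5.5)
Your proposal is correct and follows the paper's proof closely. The common skeleton is: variance as second moment minus squared mean; the means controlled by \cref{lem:chi2-moments} together with $(\chisq{P}{Q})^2\le m_4$; and in the add direction $\chisq{Q}{P}\le 2\chisq{P}{Q}$, which gives the same $4m_4$ as the paper's Jensen bound $(\chisq{Q}{P})^2\le 4m_4$. Your bounded case is also identical to the paper's, down to the ratio bound $\bigl((s/2)/\sinh(s/2)\bigr)^2\ge 1-s^2/12$ and the estimates $(e^a-1)^2\le 3a^2$ and $2(\cosh a-1)\le 3a^2$. You integrate under $Q$ there, so you should also note that $|\ln\ratio|\le a$ holds $Q$-a.s.\ by mutual absolute continuity.

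The only real difference is the step you flag as the main obstacle. Instead of a case-by-case calculus check of the two pointwise gaps with constant $1$, the paper derives them from the same $\sinh$ inequality you already use in the bounded case. Writing $s=\ln\ratio$,
\[
 (\ratio-1)^2-\ratio\ln^2\ratio=(\ratio-1)^2\Bigl(1-\tfrac{(s/2)^2}{\sinh^2(s/2)}\Bigr)\le\frac{(\ratio-1)^2\ln^2\ratio}{12},
\]
and the add-direction gap is this expression divided by $\ratio$. Then $\ln^2\ratio\le(\ratio-1)^2/\ratio\le2(\ratio-1)^2$ for $\ratio\ge1/2$ gives the bounds $(\ratio-1)^4/6$ and $(\ratio-1)^4/3$ with no case analysis.

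Your inequalities $u^2-(1+u)\ln^2(1+u)\le u^4$ and $u^2/(1+u)-\ln^2(1+u)\le u^4$ on $u\ge-1/2$ are true; they even hold with constants $1/6$ and $1/3$. So your route works, but as written the verification on the middle range is only sketched. Reusing your own $\sinh$ bound closes it in one line. It also yields the slightly better constants $7/6$ and $13/3$ in place of $2$ and $5$.
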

\begin{proof}
Using the power series of $4\sinh^2(x/2)=2(\cosh x-1)$ and the fact that $2k(2k-1)\ge12$ for $k\ge2$, we have that, for every $x\in\reals$,
\[
 4\sinh^2(x/2)-x^2=\sum_{k\ge2}\frac{2x^{2k}}{(2k)!}\le\sum_{k\ge2}\frac{2x^{2k}}{12\,(2k-2)!}=\frac{x^2}{12}\cdot4\sinh^2(x/2) .
\]
For $s=e^x$ we have that $4\sinh^2(x/2)=(s-1)^2/s$, and therefore, for every $s>0$,
\[
 (s-1)^2-s\ln^2(s)\le\frac{(s-1)^2\ln^2(s)}{12}
 \qquad\text{and}\qquad
 \frac{(s-1)^2}s-\ln^2(s)\le\frac{(s-1)^2\ln^2(s)}{12s} ,
\]
where the second inequality is the first divided by $s$. We take the expectations over $s=\ratio(\omega)$ for $\omega\sim Q$. The expectations of $(s-1)^2$, $s\ln^2(s)$, $(s-1)^2/s$ and $\ln^2(s)$ are $\chisq{P}{Q}$, $\expect{}{\lossRVfunc{P}{Q}^2}$, $\chisq{Q}{P}$ and $\expect{}{\lossRVfunc{Q}{P}^2}$, respectively. If $\ratio\ge1/2$, then $\ln^2(s)\le(s-1)^2/s\le2(s-1)^2$, where the first inequality is the inequality $s\ln^2(s)\le(s-1)^2$ from the proof of \cref{lem:chi2-moments}. The right-hand sides above are then at most $(s-1)^4/6$ and $(s-1)^4/3$, and therefore
\[
 \chisq{P}{Q}-\expect{}{\lossRVfunc{P}{Q}^2}\le\frac{m_4}6
 \qquad\text{and}\qquad
 \chisq{Q}{P}-\expect{}{\lossRVfunc{Q}{P}^2}\le\frac{m_4}3 .
\]
By \cref{lem:chi2-moments}, the means satisfy $0\le\lossMean{P}{Q}\le\chisq{P}{Q}$ and $0\le\lossMean{Q}{P}\le\chisq{Q}{P}$. By Jensen's inequality, $\left(\chisq{P}{Q}\right)^2\le m_4$ and $\left(\chisq{Q}{P}\right)^2\le\expect{\omega\sim Q}{(\ratio(\omega)-1)^4/\ratio(\omega)^2}\le4m_4$. As a variance is the second moment minus the squared mean, this gives the first claim. For the second claim, $|\ln\ratio|\le a$ also $Q$-almost surely, since $P$ and $Q$ are mutually absolutely continuous. Therefore the expectations of the right-hand sides of the two inequalities are at most $a^2\chisq{P}{Q}/12$ and $a^2\chisq{Q}{P}/12$. Moreover, for $a\le1$ we have that $(\ratio-1)^2\le(e^a-1)^2\le3a^2$ and $(\ratio-1)^2/\ratio\le2(\cosh a-1)\le3a^2$, so the squared means are at most $3a^2\chisq{P}{Q}$ and $3a^2\chisq{Q}{P}$. The second claim follows in the same way.
\end{proof}

\begin{proofof}{Proof of \cref{lem:chi2-tight-sub} (Lower bound on variance of subsampled privacy losses)}
Let $Y\coloneqq\ratio(\omega)-1$ for $\omega\sim Q$, so that the likelihood ratio of $(P_\lambda,Q)$ is $\ratio_\lambda=1+\lambda Y\ge1-\lambda>1/2$. Thus \cref{lem:chi2-tight} applies to $(P_\lambda,Q)$, and its fourth moment for this pair is $\expect{}{(\ratio_\lambda-1)^4}=\lambda^4m_4$. By \cref{lem:subsampling-chi2} and its proof, $\chisq{P_\lambda}{Q}=\lambda^2\chiVar$ and $\chisq{Q}{P_\lambda}=\lambda^2\expect{}{Y^2/(1+\lambda Y)}\ge\lambda^2\chiVar-\lambda^3\expect{}{Y^3}$, where we used that $1/(1+z)\ge1-z$ for $z>-1$. Therefore
\[
 \lossVar{P_\lambda}{Q}\ge\lambda^2\chiVar-2\lambda^4m_4
 \qquad\text{and}\qquad
 \lossVar{Q}{P_\lambda}\ge\lambda^2\chiVar-\lambda^3\expect{}{Y^3}-5\lambda^4m_4 .
\]
Both bounds imply the first claim, since $\lambda^3\expect{}{Y^3}\le\lambda^3\sqrt{\chiVar m_4}=x\lambda^2\chiVar$ by the Cauchy--Schwarz inequality and $\lambda^4m_4=x^2\lambda^2\chiVar$. In the bounded case $m_4\le\rangeBnd^2\chiVar$, and therefore $x\le\lambda\rangeBnd\le1$ and $x+5x^2\le6\lambda\rangeBnd$.

For the last claim, let $\ind_\truncSet\coloneqq\ind\{\omega\in\truncSet\}$, $q\coloneqq Q(\truncSet^c)$ and $\tau\coloneqq\expect{}{Y^2(1-\ind_\truncSet)}/\chiVar$. In the remove and the add direction, the second moments of the truncated privacy losses are $\expect{}{\ratio_\lambda\ln^2(\ratio_\lambda)\ind_\truncSet}$ and $\expect{}{\ln^2(\ratio_\lambda)\ind_\truncSet}$, and their means are $\expect{}{\ratio_\lambda\ln(\ratio_\lambda)\ind_\truncSet}$ and $-\expect{}{\ln(\ratio_\lambda)\ind_\truncSet}$. The two pointwise inequalities in the proof of \cref{lem:chi2-tight} hold for every $s>0$. We take their expectations with the factor $\ind_\truncSet$ for $s=\ratio_\lambda\ge1/2$ and argue as in that proof and above. This implies that the two second moments are at least
\[
 \lambda^2(1-\tau)\chiVar-\frac{\lambda^4m_4}6
 \qquad\text{and}\qquad
 \lambda^2(1-\tau-x)\chiVar-\frac{\lambda^4m_4}3 ,
\]
respectively. For the means, note that $0\le s\ln s-(s-1)\le(s-1)^2$ and $0\le(s-1)-\ln s\le(s-1)^2$ for $s\ge1/2$. Moreover, $\expect{}{(\ratio_\lambda-1)\ind_\truncSet}=-\lambda\expect{}{Y(1-\ind_\truncSet)}$, since $\expect{}{Y}=0$, and this is at most $\lambda\sqrt{\chiVar q}$ in absolute value by the Cauchy--Schwarz inequality. Thus both means are at most $\lambda^2\chiVar+\lambda\sqrt{\chiVar q}$ in absolute value, and their squares are at most $2\lambda^4\chiVar^2+2q\lambda^2\chiVar\le(2x^2+2q)\lambda^2\chiVar$, where we used that $\chiVar^2\le m_4$. As a variance is the second moment minus the squared mean, both variances are at least $(1-x-5x^2-\tau-2q)\lambda^2\chiVar$. The last claim follows since $\tau\le\sqrt{m_4q}/\chiVar$ by the Cauchy--Schwarz inequality, $2q\le2\sqrt q\le2\sqrt{m_4q}/\chiVar$, and $\sqrt{m_4q}/\chiVar=(x/\lambda)\sqrt{q/\chiVar}$.
\end{proofof}

\subsection{Subsampled compositions}\label{app:subsampled-compositions}
\begin{proofof}{Proof of \cref{lem:subsampling-tails} (Tails of subsampled privacy losses)}
We have that $\lossFunc{\omega}{P_\lambda}{Q}=\ln(\ratio_\lambda(\omega))$ and $\lossFunc{\omega}{Q}{P_\lambda}=-\ln(\ratio_\lambda(\omega))$.

For the first bound, let $a\coloneqq1+(e^l-1)/\lambda>1$ and $A\coloneqq\{\ratio>a\}$. Then
\[
 \lossFunc{\omega}{P_\lambda}{Q}>l
 \iff\ratio_\lambda(\omega)>e^l
 \iff\ratio(\omega)>a
 \iff\lossFunc{\omega}{P}{Q}>\ln(a) ,
\]
and $P>aQ\ge Q$ on $A$, so
\[
 \prob{}{\lossRVfunc{P_\lambda}{Q}>l}=P_\lambda(A)=(1-\lambda)Q(A)+\lambda P(A)\le P(A)=\prob{}{\lossRVfunc{P}{Q}>\ln(a)}.
\]

For the second bound, concavity of the logarithm gives $\ln(\ratio_\lambda)\ge(1-\lambda)\ln(1)+\lambda\ln(\ratio)=\lambda\ln(\ratio)$, that is, $\lossFunc{\omega}{Q}{P_\lambda}\le\lambda\lossFunc{\omega}{Q}{P}$. Hence, for $\omega\sim Q$,
\[
 \prob{}{\lossRVfunc{Q}{P_\lambda}>l}=\prob{\omega\sim Q}{\lossFunc{\omega}{Q}{P_\lambda}>l}\le\prob{\omega\sim Q}{\lossFunc{\omega}{Q}{P}>\tfrac l\lambda}=\prob{}{\lossRVfunc{Q}{P}>\tfrac l\lambda}.
\]

For the last claim, let $|\ratio(\omega)-1|\le\rangeBnd$. Then $|\ratio_\lambda(\omega)-1|=\lambda|\ratio(\omega)-1|\le\lambda\rangeBnd$ and $\ratio_\lambda(\omega)\ge1-\lambda$, and since $\ln'(u)=1/u\le1/(1-\lambda)$ for $u\ge1-\lambda$, the mean value theorem gives
\[
 |\lossFunc{\omega}{P_\lambda}{Q}|=|\ln(\ratio_\lambda(\omega))-\ln(1)|\le\frac{|\ratio_\lambda(\omega)-1|}{1-\lambda}\le\frac{\lambda\rangeBnd}{1-\lambda}\le2\lambda\rangeBnd .
\]
\end{proofof}

\begin{proofof}{Proof of \cref{thm:subsampled} (Subsampled composition)}
We bound $\HockeyStickDiv{e^{\eps}}{P_\lambda^t}{Q^t}$ and may assume that $\chiVar>0$, since otherwise $P=Q$. The bound on $\HockeyStickDiv{e^{\eps}}{Q^t}{P_\lambda^t}$ follows in the same way from the pair $(Q,P_\lambda)$, the second tail bound of \cref{lem:subsampling-tails} and the second assumption.

Let $\lossRV\coloneqq\lossRVfunc{P_\lambda}{Q}$. By the proof of \cref{lem:chi2-moments} and by \cref{lem:subsampling-chi2}, $\expect{}{\lossRV^2}\le\chisq{P_\lambda}{Q}=\lambda^2\chiVar\le\subsampVar/t$, and in the add direction the second moment is at most $\chisq{Q}{P_\lambda}\le\subsampVar/t$. For the mean, let $x\coloneqq\lambda(\ratio-1)$, so that $\expect{}{\lossRV}=\expect{Q}{(1+x)\ln(1+x)}$. For every $x>-1$,
\begin{equation}\label{eq:log-quadratic}
 (1+x)\ln(1+x)\le x+\frac{x^2}{2(1+\min\{x,0\})}
 \qquad\text{and}\qquad
 -\ln(1+x)\le -x+\frac{x^2}{2(1+\min\{x,0\})} .
\end{equation}
Indeed, substituting $z=sx$ we have that
\begin{align*}
 (1+x)\ln(1+x)-x&=\int_0^x\frac{x-z}{1+z}\,dz=x^2\int_0^1\frac{1-s}{1+sx}\,ds,\\
 x-\ln(1+x)&=\int_0^x\frac{z}{1+z}\,dz=x^2\int_0^1\frac{s}{1+sx}\,ds ,
\end{align*}
and both claims follow since $1+sx\ge1+\min\{x,0\}$ and $\int_0^1(1-s)\,ds=\int_0^1s\,ds=1/2$. Since $x\ge-\lambda$ and $\expect{Q}{x}=0$, \cref{eq:log-quadratic} gives that $\expect{}{\lossRV}\le\expect{Q}{x^2}/(2(1-\lambda))=\subsampVar/(2t)$, and in the same way the mean $\expect{Q}{-\ln(1+x)}$ in the add direction is at most $\subsampVar/(2t)$. We truncate $\lossRV$ at the balanced cutoff $\lambda \rangeBnd=\sqrt{2t\lambda^2\chiVar/\ln(2/\delta)}$. As $(e^{\lambda \rangeBnd}-1)/\lambda\ge \rangeBnd$, the first tail bound of \cref{lem:subsampling-tails} and the first assumption give that
\[
 \prob{}{\lossRV>\lambda\rangeBnd}\le\prob{}{\lossRVfunc{P}{Q}>\ln(1+\rangeBnd)}\le\frac{\delta}{2t} .
\]

The truncated privacy loss $\lossRV'\coloneqq \lossRV\,\ind\{\lossRV\le \lambda\rangeBnd\}$ satisfies $\lossRV'\le \lossRV$ and $(\lossRV')^2\le \lossRV^2$, since $\lambda\rangeBnd>0$. Thus $\expect{}{\lossRV'}\le\subsampVar/(2t)$, $\expect{}{(\lossRV')^2}\le\subsampVar/t$ and $|\expect{}{\lossRV'}|\le\sqrt{\subsampVar/t}\le\lambda\sqrt{2\chiVar}\le \lambda\rangeBnd$, where we used that $\lambda<1/2$ and that the last inequality is equivalent to the assumption that $\ln(2/\delta)\le t$. In particular, $\lossRV'-\expect{}{\lossRV'}\le2\lambda\rangeBnd$.

Let $\lossRV_1,\ldots,\lossRV_t$ be independent copies of $\lossRV$ and $\lossRV_i'\coloneqq \lossRV_i\ind\{\lossRV_i\le \lambda\rangeBnd\}$. By \cref{eq:product-pld}, $\sum_i\lossRV_i$ is distributed as $\lossRVfunc{P_\lambda^t}{Q^t}$, and by a union bound, $\lossRV_i=\lossRV_i'$ for all $i$ except with probability $\delta/2$. By \cref{lem:bernstein} for the centered $\lossRV_i'$ with $2\lambda\rangeBnd$ in place of $\rangeBnd$, $V=\subsampVar$ and $\delta/2$ in place of $\delta$, except with probability $\delta$,
\[
 \sum_{i=1}^t\lossRV_i=\sum_{i=1}^t\lossRV_i'\le\frac{\subsampVar}2+\sqrt{2\subsampVar\ln(2/\delta)}+\frac23\lambda\rangeBnd\ln(2/\delta)=\eps .
\]
By the tail bound \cref{eq:tail-bound}, this gives the claim.
\end{proofof}

\begin{proofof}{Proof of \cref{cor:subsampled-approx}}
Let $C$ be the constant of \cref{thm:gauss-approx} and $\truncSet\coloneqq\{\omega:-\rangeBnd\le\lossFunc{\omega}{P}{Q}\le\ln(1+\rangeBnd)\}$. On $\truncSet$ the likelihood ratio $\ratio=e^{\lossFunc{\omega}{P}{Q}}$ is between $e^{-\rangeBnd}\ge1-\rangeBnd$ and $1+\rangeBnd$, and therefore $|\ratio-1|\le\rangeBnd$. Both $P(\truncSet^c)$ and $Q(\truncSet^c)$ are at most $\prob{}{\lossRVfunc{P}{Q}>\ln(1+\rangeBnd)}+\prob{}{\lossRVfunc{Q}{P}>\rangeBnd}\le\tailShift/(2t)$, since $\lossFunc{\omega}{Q}{P}=-\lossFunc{\omega}{P}{Q}$ and since $P\le Q$ where $\lossFunc{\omega}{P}{Q}<0$ and $Q\le P$ where $\lossFunc{\omega}{P}{Q}>0$. Therefore $P_\lambda(\truncSet^c)=(1-\lambda)Q(\truncSet^c)+\lambda P(\truncSet^c)\le\tailShift/(2t)$, and since $|\ratio-1|\le\rangeBnd$ on $\truncSet$, \cref{lem:subsampling-tails} gives that $|\lossFunc{\omega}{Q}{P_\lambda}|=|\lossFunc{\omega}{P_\lambda}{Q}|\le 2\lambda\rangeBnd$ for every $\omega\in \truncSet$. Thus in both directions the truncated privacy loss of one step (which is zero outside $\truncSet$) and its mean are at most $2\lambda\rangeBnd$ in absolute value. In particular, the truncated privacy loss deviates from its mean by at most $4\lambda\rangeBnd$. Therefore, by \cref{thm:gauss-approx} for the pairs $(P_\lambda,Q)$ and $(Q,P_\lambda)$, with the set $\truncSet$ and $4\lambda\rangeBnd$ in place of $\rangeBnd$, we obtain the claimed concentration with $\mu'$ and $V'$ the mean and the variance of the truncated privacy loss. In the remove direction this loss is $\sum_{i=1}^t\lossFunc{\omega_i}{P_\lambda}{Q}\,\ind\{\omega_i\in\truncSet\}$ for $\omega\sim P_\lambda^t$, and in the add direction it is $\sum_{i=1}^t\lossFunc{\omega_i}{Q}{P_\lambda}\,\ind\{\omega_i\in\truncSet\}$ for $\omega\sim Q^t$.

We now bound the means and the variances. By \cref{lem:chi2-moments,lem:subsampling-chi2}, the variance of the truncated privacy loss of one step is at most $\chisq{P_\lambda}{Q}=\lambda^2\chiVar$ in the remove direction and at most $\chisq{Q}{P_\lambda}\le\lambda^2\chiVar/(1-\lambda)$ in the add direction, so both variances are at most $\subsampVar$. For the means, let $x\coloneqq\lambda(\ratio-1)\ge-\lambda$, so that the likelihood ratio of $(P_\lambda,Q)$ is $1+x$, and recall \cref{eq:log-quadratic} from the proof of \cref{thm:subsampled}. Since $1+\min\{x,0\}\ge1-\lambda$, the change of measure from $P_\lambda$ to $Q$ and \cref{eq:log-quadratic} give that the means of the truncated privacy losses of one step satisfy
\begin{align*}
 \expect{\omega\sim P_\lambda}{\lossFunc{\omega}{P_\lambda}{Q}\ind\{\omega\in\truncSet\}}&\le\expect{Q}{x\,\ind\{\omega\in\truncSet\}}+\frac{\expect{Q}{x^2}}{2(1-\lambda)},\\
 \expect{\omega\sim Q}{\lossFunc{\omega}{Q}{P_\lambda}\ind\{\omega\in\truncSet\}}&\le-\expect{Q}{x\,\ind\{\omega\in\truncSet\}}+\frac{\expect{Q}{x^2}}{2(1-\lambda)} .
\end{align*}
Here $\expect{Q}{x^2}=\lambda^2\chiVar$, and $\expect{Q}{x\,\ind\{\omega\in\truncSet\}}=\lambda(P(\truncSet)-Q(\truncSet))=\lambda(Q(\truncSet^c)-P(\truncSet^c))$ is at most $\lambda\tailShift/(2t)$ in absolute value. Summing over the $t$ steps, both means are at most $\subsampVar/2+\lambda\tailShift/2$. If $\lossRVfunc{P}{Q}$ and $\lossRVfunc{Q}{P}$ have the same distribution, then the assumptions also hold with $\ln(1+\rangeBnd)$ in place of $\rangeBnd$ in the second one. Thus the argument above also applies to the set $\truncSet_2\coloneqq\{\omega:|\lossFunc{\omega}{P}{Q}|\le\ln(1+\rangeBnd)\}$ in place of $\truncSet$, since $\ratio$ is between $1/(1+\rangeBnd)$ and $1+\rangeBnd$ on $\truncSet_2$. Moreover, $P(\truncSet_2^c)=Q(\truncSet_2^c)$, since $\lossRVfunc{Q}{P}$ is distributed as $-\lossFunc{\omega}{P}{Q}$ for $\omega\sim Q$. Therefore both means are at most $\subsampVar/2$.

For the shifted GDP bound, fix one of the two directions and let $\mu'$ and $V'$ be as above. By \cref{lem:gc-profile}, the hockey-stick divergence $\HockeyStick{e^{\eps}}$ of the pair in this direction is at most $g_{V'}(\eps-(\mu'-V'/2)-4C\lambda\rangeBnd\ln(2/\tailShift))+\tailShift$ for every $\eps\in\reals$. This bound is at most $g_{\subsampVar}(\eps-\lossShift)+\tailShift$, since $\mu'\le\subsampVar/2+\lambda\tailShift/2$, $0\le V'\le\subsampVar$, and $g_V(\eps)$ is nonincreasing in $\eps$ and nondecreasing in $V$. If $\lossRVfunc{P}{Q}$ and $\lossRVfunc{Q}{P}$ have the same distribution, then $\mu'\le\subsampVar/2$, and the same argument gives this bound without the term $\lambda\tailShift/2$ of $\lossShift$. If $(P_\lambda^t,Q^t)$ dominates an algorithm, then it dominates the pairs of output distributions of the algorithm in the remove direction, and $(Q^t,P_\lambda^t)$ dominates those in the add direction, since reversing both pairs preserves domination (\cref{lem:kov}). Hence the algorithm is $(\lossShift,\tailShift)$-shifted $\sqrt{\subsampVar}$-GDP.
\end{proofof}

\begin{proofof}{Proof of \cref{cor:subsampled-lower} (Lower bounds on concentration parameters)}
By the proof of \cref{cor:subsampled-approx}, $\mu'$ and $V'$ are $t$ times the mean and the variance of the truncated privacy loss of one step, for the set $\truncSet$ of that proof (or $\truncSet_2$ in the symmetric case), and $P(\truncSet^c)$ and $Q(\truncSet^c)$ are at most $\tailShift/(2t)$. Since $t\lambda^2\chiVar=(1-\lambda)\subsampVar$, the bound on $V'$ follows from the last part of \cref{lem:chi2-tight-sub}.

For the mean, let $Y$, $\ind_\truncSet$ and $\tau$ be as in the proof of \cref{lem:chi2-tight-sub}, and let $u\coloneqq\lambda Y$, so that the likelihood ratio of $(P_\lambda,Q)$ is $1+u$. By the integral identities in the proof of \cref{thm:subsampled} and $1/(1+z)\ge1-z$, we have that $(1+u)\ln(1+u)-u\ge u^2/2-u^3/6$ and $u-\ln(1+u)\ge u^2/2-u^3/3$. Therefore the means of the truncated privacy losses of one step in the remove and the add direction are at least
\[
 \pm\expect{}{u\,\ind_\truncSet}+\frac{\lambda^2}2\expect{}{Y^2\ind_\truncSet}-\frac{\lambda^3}3\expect{}{|Y|^3}\ge\frac{\lambda^2\chiVar}2(1-\tau-x)-\frac{\lambda\tailShift}{2t},
\]
where we used that $|\expect{}{u\,\ind_\truncSet}|=\lambda|Q(\truncSet^c)-P(\truncSet^c)|\le\lambda\tailShift/(2t)$ and $\lambda^3\expect{}{|Y|^3}\le\lambda^3\sqrt{\chiVar m_4}=x\lambda^2\chiVar$. In the symmetric case $P(\truncSet_2^c)=Q(\truncSet_2^c)$, and the term $\lambda\tailShift/(2t)$ is not needed. Summing over the $t$ steps and using that $\tau\le\sqrt{m_4Q(\truncSet^c)}/\chiVar$ by the Cauchy--Schwarz inequality, we obtain the claimed bound on $\mu'$.
\end{proofof}

\subsection{The subsampled Gaussian mechanism}\label{app:gauss-central}
In the rest of this appendix $(P,Q)$ is the Gaussian pair $(\Nrm(1,\sigma^2),\Nrm(0,\sigma^2))$ and $\chiVar=e^{1/\sigma^2}-1$. \begin{claim}[The Gaussian pair]\label{clm:gauss-pair}
Let $\sigma>0$ and let $(P,Q)$ be the Gaussian pair. Then $\lossRVfunc{P}{Q}$ and $\lossRVfunc{Q}{P}$ are both distributed as $\Nrm(1/(2\sigma^2),1/\sigma^2)$, and thus for every $\loss\in\reals$,
\begin{equation}\label{eq:gauss-tail}
 \prob{}{\lossRVfunc{P}{Q}>\loss}=\prob{}{\lossRVfunc{Q}{P}>\loss}=\Phibar\left(\sigma \loss-\frac1{2\sigma}\right).
\end{equation}
Moreover, $\chisq{P}{Q}=e^{1/\sigma^2}-1$, and if $\sigma\ge1$, then $1/\sigma^2\le \chisq{P}{Q}\le(e-1)/\sigma^2$.
\end{claim}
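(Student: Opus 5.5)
The claim reduces to direct Gaussian computations. We will compute the privacy loss explicitly, read off its distribution under each of $P$ and $Q$, and then evaluate the $\chi^2$ divergence as a Gaussian moment-generating function.

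For $\omega\in\reals$ we have
\[
 \lossFunc{\omega}{P}{Q}=\frac{\omega^2-(\omega-1)^2}{2\sigma^2}=\frac{2\omega-1}{2\sigma^2}.
\]
\begin{itemize}
 \item For $\omega\sim P=\Nrm(1,\sigma^2)$, this affine function of $\omega$ is Gaussian with mean $(2-1)/(2\sigma^2)=1/(2\sigma^2)$ and variance $(1/\sigma^2)^2\sigma^2=1/\sigma^2$.
 \item Similarly, $\lossFunc{\omega}{Q}{P}=(1-2\omega)/(2\sigma^2)$. For $\omega\sim Q=\Nrm(0,\sigma^2)$ it has mean $1/(2\sigma^2)$ and variance $1/\sigma^2$.
\end{itemize}
So both privacy losses are distributed as $\Nrm(1/(2\sigma^2),1/\sigma^2)$. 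Standardizing, $\prob{}{\lossRVfunc{P}{Q}>\loss}=\Phibar\bigl((\loss-1/(2\sigma^2))\sigma\bigr)=\Phibar(\sigma\loss-1/(2\sigma))$, which is \cref{eq:gauss-tail}. The same expression holds for $\lossRVfunc{Q}{P}$, since it has the same distribution.

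For the $\chi^2$ divergence we use $\chisq{P}{Q}=\expect{\omega\sim Q}{\ratioFunc{\omega}{P}{Q}^2}-1$. Here
\[
 \ratioFunc{\omega}{P}{Q}^2=\exp\bigl((2\omega-1)/\sigma^2\bigr).
\]
The exponent $(2\omega-1)/\sigma^2$, for $\omega\sim Q$, is Gaussian with mean $-1/\sigma^2$ and variance $4/\sigma^2$. The Gaussian moment-generating function therefore gives
\[
 \expect{}{e^{Z}}=e^{-1/\sigma^2+2/\sigma^2}=e^{1/\sigma^2},
\]
and hence $\chisq{P}{Q}=e^{1/\sigma^2}-1$.

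For the bounds when $\sigma\ge1$, set $u=1/\sigma^2\in(0,1]$. The lower bound $e^u-1\ge u$ is elementary. For the upper bound, $(e^u-1)/u$ is increasing in $u$ (by its power series), so $e^u-1\le(e-1)u$ on $(0,1]$.

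No step is a real obstacle. The only care needed is in the signs of the affine map for the reversed pair and in the variance $4/\sigma^2$ inside the moment-generating function.
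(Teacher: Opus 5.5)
Your proposal is correct and takes essentially the same route as the paper. Both compute the affine privacy loss $(2\omega-1)/(2\sigma^2)$ under $P$ and under $Q$, and both evaluate $\mathbb{E}_{\omega\sim Q}[r(\omega)^2]=e^{1/\sigma^2}$ with the Gaussian moment-generating function (the paper says ``completing the square''). For the bounds, your monotonicity of $(e^u-1)/u$ on $(0,1]$ is the same fact as the paper's convexity (chord) bound. One cosmetic point: the $Z$ in your moment-generating-function display is never defined; it should be the exponent $(2\omega-1)/\sigma^2$.
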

\begin{proof}
The privacy loss $\lossFunc{\omega}{P}{Q}=(2\omega-1)/(2\sigma^2)$ is distributed as $\Nrm(1/(2\sigma^2),1/\sigma^2)$ for $\omega\sim P$ and as $\Nrm(-1/(2\sigma^2),1/\sigma^2)$ for $\omega\sim Q$, and $\lossRVfunc{Q}{P}$ is $-\lossFunc{\omega}{P}{Q}$ for $\omega\sim Q$. Completing the square gives that $\expect{\omega\sim Q}{\ratioFunc{\omega}{P}{Q}^2}=e^{1/\sigma^2}$, that is, $\chisq{P}{Q}=e^{1/\sigma^2}-1$. For $\sigma\ge1$ we have that $1+1/\sigma^2\le e^{1/\sigma^2}\le1+(e-1)/\sigma^2$ by the convexity of the exponential function and $1/\sigma^2\le1$.
\end{proof}

We prove that \cref{eq:gauss-conditions} holds in the two cases stated after \cref{cor:gauss-leading} for every value of its constant, since we also use them for random allocation (\cref{sec:alloc}).
\begin{lemma}[Sufficient conditions for \cref{eq:gauss-conditions}]\label{lem:gauss-conditions}
Let $C>0$. For every $\sigma\ge1$, $t\ge2$ and $0<\delta\le1/2$, the inequality \cref{eq:gauss-conditions} holds in each of the following two cases:
\begin{enumerate}
 \item $\eta \in (0,1/2)$, $t\ge t_\eta$, $\ln(1/\delta)\le t^{1/2-\eta}$ and $\sigma\ge C_\eta\max\left\{1,\sqrt{\ln(1/\delta)}/\ln(t)\right\}$, where $C_\eta$ and $t_\eta$ depend only on $\eta$ and $C$;
 \item $t\ge\max\left\{t_\sigma,\exp\left(3\sqrt2\,C\sqrt{\ln(1/\delta)}/\sigma\right)\right\}$, where $t_\sigma$ depends only on $\sigma$ and $C$.
\end{enumerate}
\end{lemma}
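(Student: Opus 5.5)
We need to show $\sigma\ln(1+\sqrt t/(\sigma\sqrt{C L}))\ge C\sqrt{\ln(t/\delta)}$, where $L\coloneqq\ln(1/\delta)\ge\ln 2$. Throughout we use $\sqrt{\ln(t/\delta)}\le\sqrt{\ln t}+\sqrt L$. It therefore suffices to make the left-hand side at least $C\sqrt{\ln t}$ and at least $C\sqrt L$. We treat the two cases by lower bounding the argument of the logarithm by a power of $t$.

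\emph{Case 1.} From $L\le t^{1/2-\eta}$ we get $\sqrt t/\sqrt{CL}\ge t^{1/4+\eta/2}/\sqrt C$. The difficulty is that $\sigma$ also appears in the denominator of the argument, so a large $\sigma$ shrinks the argument; we handle this by splitting on the size of $\sigma$.
\begin{itemize}
\item[(a)] If $\sigma\le t^{\eta/4}$, then $\sqrt t/(\sigma\sqrt{CL})\ge t^{1/4+\eta/4}/\sqrt C\ge t^{1/4}$ for $t\ge t_\eta$. The left-hand side is then at least $\sigma\ln(t)/4$. Using $\sigma\ge C_\eta$ gives $\sigma\ln(t)/4\ge C_\eta\ln(t)/4\ge C\sqrt{\ln t}$ for large $t$. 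Using $\sigma\ge C_\eta\sqrt L/\ln t$ gives $\sigma\ln(t)/4\ge C_\eta\sqrt L/4\ge C\sqrt L$ once $C_\eta\ge 4C$. Summing the two half-bounds, with constant $2C$ in place of $C$ adjusted via $C_\eta$, gives the claim.
\item[(b)] If $\sigma>t^{\eta/4}$, we use that $x\mapsto x\ln(1+a/x)$ is nondecreasing in $x$. Hence the left-hand side is at least its value at $\sigma=t^{\eta/4}$, which is at least $t^{\eta/4}\ln(t)/4$. This quantity grows polynomially in $t$. On the other hand, $C\sqrt{\ln(t/\delta)}\le C(\sqrt{\ln t}+t^{1/4})$. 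Here polynomial growth alone does not suffice, so we instead use $\sigma\ln(1+y)\ge\sigma y/(1+y)$. When $y\ge1$ this gives at least $\sigma/2$. Then $\sigma/2\ge C\sqrt{\ln(t/\delta)}$ holds because $\sigma>t^{\eta/4}$ and because $\sigma\ge C_\eta\sqrt L/\ln t$ controls $\sqrt L$. When $y<1$ it gives at least $\sigma y/2=\sqrt t/(2\sqrt{CL})\ge t^{1/4+\eta/2}/(2\sqrt C)$, which exceeds $C(\sqrt{\ln t}+t^{1/4-\eta/2})$ for large $t$. In all subcases the bound follows after choosing $t_\eta$ large.
\end{itemize}

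\emph{Case 2.} Here $\sigma$ is fixed. From $t\ge\exp(3\sqrt2C\sqrt L/\sigma)$ we get $L\le\sigma^2\ln^2 t/(18C^2)$. Hence $\sqrt t/(\sigma\sqrt{CL})\ge 3\sqrt2\sqrt{C}\,\sqrt t/(\sigma^2\ln t)\ge t^{1/3}$ for $t\ge t_\sigma$, and the left-hand side is at least $\sigma\ln(t)/3$. Then $\sigma\ln(t)/3\ge\sqrt2 C\sqrt L$ by the assumption on $t$, while $\sigma\ln t/3\ge\ln t/3\ge 2C\sqrt{\ln t}$ for $t\ge t_\sigma$. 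Combining these with $\sqrt{\ln(t/\delta)}\le\sqrt{\ln t}+\sqrt L$ works after slightly adjusting the split: use the fraction $1/3$ for the $\sqrt L$ term and the remainder of $\ln(t^{1/3})$ for $\sqrt{\ln t}$, taking the exponent slightly above $1/3$ if needed.

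The main obstacle is the non-monotone dependence on $\sigma$ in Case 1, which is resolved by the split into (a) and (b) and the inequality $\ln(1+y)\ge y/(1+y)$.
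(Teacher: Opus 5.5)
Case 1(a) and Case 2 are fine, and so is the subcase $y<1$ of Case 1(b). In Case 2 there is even built-in slack: $C\sqrt L\le\sigma\ln t/(3\sqrt2)$ leaves $(1-1/\sqrt2)\,\sigma\ln(t)/3\ge C\sqrt{\ln t}$ for large $t$, so no change of exponent is needed. The genuine gap is in Case 1(b), subcase $y\ge1$, where $a\coloneqq\sqrt{t/(CL)}$ and $y=a/\sigma$. There you bound $\sigma\ln(1+y)\ge\sigma/2$ and claim $\sigma/2\ge C\sqrt{\ln(t/\delta)}$ because $\sigma\ge C_\eta\sqrt L/\ln t$ ``controls'' $\sqrt L$. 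It does not. That hypothesis only gives $\sigma/2\ge C_\eta\sqrt L/(2\ln t)$, which is short by exactly the factor $\ln t$ that $\ln(1+y)\ge y/(1+y)\ge1/2$ discarded.

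Here is a concrete failure. Take $\eta<1/3$, $L=t^{1/2-\eta}$ and $\sigma=C_\eta\sqrt L/\ln t$, which is the smallest admissible $\sigma$ for large $t$. Then $\sigma>t^{\eta/4}$ and $y=t^{\eta}\ln t/(\sqrt C\,C_\eta)\ge1$ for large $t$, so you are in this subcase. Yet $\sigma/2<C\sqrt L\le C\sqrt{\ln(t/\delta)}$ as soon as $\ln t>C_\eta/(2C)$, however $C_\eta$ is chosen. The lemma itself still holds at this point. Since $y$ is polynomially large, $\sigma\ln(1+y)\ge\frac\eta2\sigma\ln t=\frac{\eta C_\eta}2\sqrt L$ for large $t$; your estimate simply throws this away.

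The fix also makes your case split unnecessary. The function $F(s)\coloneqq s\ln(1+a/s)$ is increasing in $s$: its derivative is $\ln(1+b)-b/(1+b)>0$ with $b=a/s$, a fact you already invoke in (b). So the ``non-monotone dependence on $\sigma$'' is not an obstacle, and it suffices to check the inequality at $\sigma_*\coloneqq C_\eta\max\{1,\sqrt L/\ln t\}$.

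The estimate your split never makes is a bound on the product $\sigma_*\sqrt L$. Using $\ln t\ge1$, $\sigma_*\sqrt L\le C_\eta\max\{\sqrt L,L\}\le C_\eta t^{1/2-\eta}$. This gives $a/\sigma_*\ge t^{\eta}/(C_\eta\sqrt C)$, hence $\ln(1+a/\sigma_*)\ge\frac\eta2\ln t$ for large $t$. Therefore $F(\sigma_*)\ge\frac{\eta C_\eta}2\max\{\ln t,\sqrt L\}$. With $C_\eta=4C/\eta$ this is at least $C(\ln t+\sqrt L)\ge C\sqrt{\ln t+L}$. This is exactly the paper's argument.
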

\begin{proof}
Let $H\coloneqq\ln(t/\delta)$. We write \cref{eq:gauss-conditions} as $F(\sigma)\ge C\sqrt H$, where $F(s)\coloneqq s\ln(1+a/s)$ and $a\coloneqq\sqrt{t/(C\ln(1/\delta))}$. The function $F$ is increasing, since $F'(s)=\ln(1+b)-b/(1+b)>0$ for $b=a/s$. We take $t_\eta,t_\sigma\ge3$, so that $\ln(t)\ge1$.

In the first case let $C_\eta\coloneqq4C/\eta$. As $F$ is increasing, it suffices to verify the condition at $\sigma_*\coloneqq C_\eta\max\left\{1,\sqrt{\ln(1/\delta)}/\ln(t)\right\}$. Since $\ln(t)\ge1$, $\eta<1/2$ and $\ln(1/\delta)\le t^{1/2-\eta}$, we have that $\sqrt{\ln(1/\delta)}\,\sigma_*\le C_\eta\max\left\{\sqrt{\ln(1/\delta)},\ln(1/\delta)\right\}\le C_\eta t^{1/2-\eta}$, that is, $a/\sigma_*\ge t^{\eta}/\left(C_\eta\sqrt C\right)$. Therefore $\ln(1+a/\sigma_*)\ge\frac\eta2\ln(t)$ for all $t$ larger than a threshold that depends only on $\eta$ and $C$, and for such $t$,
\[
 F(\sigma_*)\ge\frac{\eta C_\eta}2\max\left\{\ln(t),\sqrt{\ln(1/\delta)}\right\}\ge C\left(\ln(t)+\sqrt{\ln(1/\delta)}\right)\ge C\sqrt{\ln(t)+\ln(1/\delta)}=C\sqrt H .
\]

In the second case let $c\coloneqq1/(18C^2)$. The assumption $t\ge\exp\left(3\sqrt2\,C\sqrt{\ln(1/\delta)}/\sigma\right)$, that is, $\ln(1/\delta)\le c\sigma^2\ln^2(t)$, gives that $a/\sigma\ge\sqrt t/\left(\sigma^2\sqrt{Cc}\,\ln(t)\right)$. Thus for all $t$ larger than a threshold that depends only on $\sigma$ and $C$ we have that $F(\sigma)\ge\frac\sigma3\ln(t)$ and $\ln(t)\le c\sigma^2\ln^2(t)$, and therefore
\[
 C\sqrt H\le C\sqrt{\ln(t)+c\sigma^2\ln^2(t)}\le C\sigma\sqrt{2c}\,\ln(t)=\frac\sigma3\ln(t)\le F(\sigma) .
\]
\end{proof}

\begin{proofof}{Proof of \cref{cor:gauss-poisson} (Shifted GDP of the subsampled Gaussian mechanism)}
Let $w\coloneqq\sqrt{2\ln(2t/\tailShift)}$, so that $\ln(1+\rangeBnd)=1/(2\sigma^2)+w/\sigma$. By \cref{eq:gauss-tail},
\[
 \prob{}{\lossRVfunc{P}{Q}>\ln(1+\rangeBnd)}=\prob{}{\lossRVfunc{Q}{P}>\ln(1+\rangeBnd)}=\Phibar(w)\le\frac{e^{-w^2/2}}2=\frac{\tailShift}{4t} .
\]
Thus, as $\ln(1+\rangeBnd)\le\rangeBnd$, \cref{cor:subsampled-approx} applies with $\chiVar=e^{1/\sigma^2}-1$, and since $\lossRVfunc{P}{Q}$ and $\lossRVfunc{Q}{P}$ have the same distribution (\cref{clm:gauss-pair}), it applies without the term $\lambda\tailShift/2$ of $\lossShift$. Since the Gaussian pair dominates the Gaussian mechanism (\cref{clm:gauss-dom}), the pair $(P_\lambda^t,Q^t)$ dominates the $t$-step adaptive composition of the $\lambda$-subsampled Gaussian mechanism by \cref{lem:dom-comp-sub}, which gives the claim. The proof of \cref{cor:subsampled-approx} gives the same bound on the hockey-stick divergences of the pairs $(P_\lambda^t,Q^t)$ and $(Q^t,P_\lambda^t)$.
\end{proofof}

We now state formally the bound for every sampling rate that is stated informally before \cref{cor:gauss-leading}.
\begin{claim}[Closed-form bound for the subsampled Gaussian mechanism]\label{clm:gauss-closed-form}
There is a universal constant $C>0$ such that the following holds. Let $\sigma\ge1$, $t\ge2$, $\lambda \in (0,1/2)$ and $0<\delta\le1/2$. If \cref{eq:gauss-conditions} holds, then the $t$-step adaptive composition of the $\lambda$-subsampled Gaussian mechanism \cref{eq:gauss-mech} is $(\eps,\delta)$-DP for
\[
 \eps\coloneqq C\left(\frac{t\lambda^2}{\sigma^2}+\frac{\lambda\sqrt{t\ln(1/\delta)}}{\sigma}\right).
\]
\end{claim}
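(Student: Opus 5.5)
The plan is to derive \cref{clm:gauss-closed-form} from \cref{cor:gauss-poisson} with tail shift $\tailShift\coloneqq\delta/2$, converting shifted GDP into $(\eps,\delta)$-DP through \cref{eq:approx-inverse}. That gives $(\eps',\delta)$-DP for
\[
 \eps'=\left[\eps_{\mathrm G}(\subsampVar,\delta/2)+4C_1\lambda\rangeBnd\ln(4/\delta)+\frac{\subsampVar}2\right]_+ ,
\]
where $C_1$ is the constant of \cref{cor:gauss-poisson} and $\rangeBnd$ is given by \cref{eq:gauss-cutoff} with $\tailShift=\delta/2$. The work is then to bound the three terms by the target $C(t\lambda^2/\sigma^2+\lambda\sqrt{t\ln(1/\delta)}/\sigma)$.

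The first and third terms are routine. For $\sigma\ge1$, \cref{clm:gauss-pair} gives $\chiVar\le(e-1)/\sigma^2$, and $\lambda<1/2$ gives $1/(1-\lambda)\le2$. Hence $\subsampVar\le2(e-1)t\lambda^2/\sigma^2$, which makes $\subsampVar/2=O(t\lambda^2/\sigma^2)$. For the first term, $\eps_{\mathrm G}(\subsampVar,\delta/2)\le\subsampVar/2+\sqrt{2\subsampVar\ln(2/\delta)}$. Since $\delta\le1/2$ we have $\ln(2/\delta)\le2\ln(1/\delta)$, so this is $O(t\lambda^2/\sigma^2+\lambda\sqrt{t\ln(1/\delta)}/\sigma)$.

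The main obstacle is the loss shift $4C_1\lambda\rangeBnd\ln(4/\delta)$. We need it to be $O(\lambda\sqrt{t\ln(1/\delta)}/\sigma)$, that is,
\[
 \rangeBnd\lesssim\frac{\sqrt t}{\sigma\sqrt{\ln(1/\delta)}}.
\]
Here $\ln(4/\delta)\le3\ln(1/\delta)$. Condition \cref{eq:gauss-conditions} is designed exactly to give this. Write $a\coloneqq\sqrt t/(\sigma\sqrt{C\ln(1/\delta)})$, where $C$ is the constant in \cref{eq:gauss-conditions}. Dividing the condition by $\sigma$ gives $\ln(1+a)\ge C\sqrt{\ln(t/\delta)}/\sigma$.

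Now compare with the exponent in \cref{eq:gauss-cutoff}. Since $\tailShift=\delta/2$, we have $\ln(2t/\tailShift)=\ln(4t/\delta)\le3\ln(t/\delta)$, using $t\ge2$ and $\delta\le1/2$. Also $1/(2\sigma^2)\le1/(2\sigma)\le\sqrt{\ln(t/\delta)}/\sigma$, because $\sigma\ge1$ and $\ln(t/\delta)\ge\ln4\ge1$. So the exponent is at most $(1+\sqrt6)\sqrt{\ln(t/\delta)}/\sigma$. If we choose $C\ge1+\sqrt6$, then $\ln(1+\rangeBnd)\le\ln(1+a)$, that is, $\rangeBnd\le a$.

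Substituting this bound, the loss shift is at most
\[
 12C_1\lambda\ln(1/\delta)\,\frac{\sqrt t}{\sigma\sqrt{C\ln(1/\delta)}}=\frac{12C_1}{\sqrt C}\,\frac{\lambda\sqrt{t\ln(1/\delta)}}{\sigma}.
\]
Finally, we choose $C$ larger than both $1+\sqrt6$ and the constants collected in the three bounds. Since $C$ appears on both sides of the argument, we should check that it can be chosen consistently. Enlarging $C$ only strengthens \cref{eq:gauss-conditions} in the required direction: the left side decreases in $C$ and the right side increases. So a single universal $C$ works, and the claim follows.
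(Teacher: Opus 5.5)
Your proposal is correct and follows essentially the same route as the paper's proof. You apply \cref{cor:gauss-poisson} with $\tailShift=\delta/2$, convert via \cref{eq:approx-inverse}, bound $\subsampVar$ through \cref{clm:gauss-pair} and $\eps_{\mathrm G}$ through the Gaussian tail bound, and use \cref{eq:gauss-conditions} to get $\rangeBnd\le\sqrt t/(\sigma\sqrt{C\ln(1/\delta)})$. The differences are only in constants: you use $\ln(4t/\delta)\le3\ln(t/\delta)$ and $C\ge1+\sqrt6$ where the paper uses $\le2\ln(t/\delta)$ and $C\ge5/2$. Your remark that enlarging $C$ strengthens the hypothesis and weakens the conclusion is the justification the paper leaves implicit for its single choice of $C$.
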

\begin{proof}
Let $C_1$ be the constant of \cref{cor:gauss-poisson}. By \cref{cor:gauss-poisson} with $\tailShift=\delta/2$, the $t$-step adaptive composition of the $\lambda$-subsampled Gaussian mechanism is $(\lossShift,\delta/2)$-shifted $\sqrt{\subsampVar}$-GDP for
\[
 \subsampVar=\frac{t\lambda^2\chiVar}{1-\lambda}
 \qquad\text{and}\qquad
 \lossShift=4C_1\lambda\rangeBnd\ln\left(\frac4\delta\right)+\frac{\subsampVar}2 ,
\]
where $\ln(1+\rangeBnd)=\frac1{2\sigma^2}+\frac{\sqrt{2\ln(4t/\delta)}}{\sigma}$. Since $\lossShift\ge0$, \cref{eq:approx-inverse} gives that it is $(\eps_{\mathrm G}(\subsampVar,\delta/2)+\lossShift,\delta)$-DP.

Let $H\coloneqq\ln(t/\delta)$, so that $H\ge\ln(4)$ for $t\ge2$ and $\delta\le1/2$, and let $C\ge\max\left\{5/2,\,2(e-1),\,\sqrt{8(e-1)}+8C_1\right\}$. As $\ln(4t/\delta)=\ln(4)+H\le2H$ and $1/(2\sigma^2)\le\sqrt H/(2\sigma)$ for $\sigma\ge1$, we have that $\ln(1+\rangeBnd)\le\frac52\sqrt H/\sigma\le C\sqrt H/\sigma$. Therefore \cref{eq:gauss-conditions} gives that
\[
 \rangeBnd\le\frac{\sqrt t}{\sigma\sqrt{C\ln(1/\delta)}} .
\]
As $\ln(4/\delta)\le3\ln(1/\delta)$ for $\delta\le1/2$ and $C\ge5/2$, the first term of $\lossShift$ satisfies
\[
 4C_1\lambda\rangeBnd\ln\left(\frac4\delta\right)\le\frac{12C_1}{\sqrt C}\cdot\frac{\lambda\sqrt{t\ln(1/\delta)}}{\sigma}\le\frac{8C_1\lambda\sqrt{t\ln(1/\delta)}}{\sigma} .
\]
By \cref{clm:gauss-pair} and $\lambda<1/2$, we have that $\subsampVar\le2(e-1)t\lambda^2/\sigma^2$, and since $\eps_{\mathrm G}(\subsampVar,\delta/2)\le\subsampVar/2+\sqrt{2\subsampVar\ln(2/\delta)}$ and $\ln(2/\delta)\le2\ln(1/\delta)$,
\begin{align*}
 \eps_{\mathrm G}(\subsampVar,\delta/2)+\lossShift
 &\le\subsampVar+\sqrt{2\subsampVar\ln(2/\delta)}+\frac{8C_1\lambda\sqrt{t\ln(1/\delta)}}{\sigma}\\
 &\le\frac{2(e-1)t\lambda^2}{\sigma^2}+\left(\sqrt{8(e-1)}+8C_1\right)\frac{\lambda\sqrt{t\ln(1/\delta)}}{\sigma} ,
\end{align*}
which is at most $\eps$ by the choice of $C$.
\end{proof}

\begin{proofof}{Proof of \cref{cor:gauss-leading} (Asymptotic upper bounds for the subsampled Gaussian mechanism)}
Let $C_1$ be the constant of \cref{cor:gauss-poisson}. By \cref{cor:gauss-poisson} with $\lambda=1/t$ and $\tailShift=\delta/2$, the composition $\alg$ is $(\lossShift,\delta/2)$-shifted $\sqrt{\subsampVar}$-GDP for
\[
 \subsampVar=\frac{\chiVar}{t-1}
 \qquad\text{and}\qquad
 \lossShift=\frac{4C_1\rangeBnd}t\ln\left(\frac4\delta\right)+\frac{\subsampVar}2 ,
\]
where $\ln(1+\rangeBnd)=\frac1{2\sigma^2}+\frac{\sqrt{2\ln(4t/\delta)}}{\sigma}$. By \cref{eq:approx-inverse}, $\privParam{\alg}(\delta)\le\eps_{\mathrm G}(\subsampVar,\delta/2)+\lossShift$.

\para{The shift.} Note that $\subsampVar\le2\chiVar/t$ and $\ln(4/\delta)=O(\ln^2(t))$. If $\delta=t^{-\theta}$, then $\ln(1+\rangeBnd)=O\left(\sqrt{\ln(t)}\right)$, so $\rangeBnd=t^{o(1)}$ and $\lossShift\le t^{-1+o(1)}=o\left(\sqrt{\ln(t)/t}\right)$. In the second regime, by the assumption on the $\limsup$ there is a fixed $\gamma>0$ such that $8\ln(1/\delta)\le(\sigma\ln(t)/(1+\gamma))^2$ for all sufficiently large $t$. Then $\sqrt{2\ln(1/\delta)}\le\sigma\ln(t)/(2(1+\gamma))$ and $\ln(t)=o(\ln(1/\delta))$, so
\[
 \ln(1+\rangeBnd)\le(1+o(1))\frac{\ln(t)}{2(1+\gamma)}
 \quad\Longrightarrow\quad
 \lossShift\le t^{-1+\frac1{2(1+\gamma)}+o(1)}=o\left(1/\sqrt t\right) .
\]

\para{The Gaussian term.} Let $k\coloneqq2\ln(2/\delta)-\ln(1/\subsampVar)$ be the quantity of \cref{lem:gdp-small-var} for $\subsampVar$ and $\delta/2$. As $\subsampVar=(1+o(1))\chiVar/t$,
\[
 k=2\ln(1/\delta)-\ln(t)+O(1)=
 \begin{cases}
 (1+o(1))(2\theta-1)\ln(t) & \delta=t^{-\theta},\\
 (1+o(1))\,2\ln(1/\delta) & \text{second regime.}
 \end{cases}
\]
In both regimes $k\to\infty$ and $\subsampVar\to0$. Thus $\subsampVar/2=o\left(\sqrt{k\subsampVar}\right)$, and \cref{lem:gdp-small-var} gives $\eps_{\mathrm G}(\subsampVar,\delta/2)\le(1+o(1))\sqrt{k\subsampVar}=(1+o(1))\sqrt{k\chiVar/t}$, which equals the claimed bound in each regime. In the second regime this bound is at least of order $\sqrt{\ln(t)/t}$. Therefore in both regimes $\lossShift$ is of lower order than the bound, which completes the proof.
\end{proofof}

\section{Proofs for \texorpdfstring{\cref{sec:alloc}}{Section 5}}\label{app:alloc-proofs}
In this appendix $\ratio$ is the likelihood ratio of $(P,Q)$.
\begin{proofof}{Proof of \cref{thm:ra-gauss} (Gaussian concentration for random allocation)}
Let $C_1$ be the constant of \cref{lem:coupling}, let $V\coloneqq\chiVar/t$, $u\coloneqq\ln(4/\tailShift)$ and $\rho\coloneqq(\rangeBnd/t)u$, and let $c\coloneqq1/(16(C_1+2)^2)$. We take $C\ge C_1+8$, so that $u\le\ln(C/\tailShift)$ and the assumption implies that $\rho\le c$. We can assume that $\chiVar>0$ (otherwise $P=Q$ and both privacy losses are $0$). Since $\ratio\le1+\rangeBnd$ and $\expect{Q}{\ratio}=1$, we have that $\chiVar=\expect{Q}{\ratio^2}-1\le\rangeBnd$. Therefore $Vu\le\rho$, and $V\le\rho$ as $u>1$.

Let $\omega_0\sim P$ and $\omega_1,\ldots,\omega_t\sim Q$ be mutually independent, and let
\[
 A\coloneqq\frac1t\sum_{i=1}^t\ratio(\omega_i)
 \qquad\text{and}\qquad
 B\coloneqq\frac1t\left(\ratio(\omega_0)+\sum_{i=2}^t\ratio(\omega_i)\right).
\]
By \cref{lem:alloc-ratio}, the privacy losses $\lossRVfunc{Q^t}{\Pbar{t}}$ and $\lossRVfunc{\Pbar{t}}{Q^t}$ are distributed as $-\ln A$ and $\ln B$, and the claim depends only on their distributions. Since $|\ratio-1|\le\rangeBnd$ almost surely under $Q$ and under $P$ (as $P\ll Q$), we have that $|B-A|=|\ratio(\omega_0)-\ratio(\omega_1)|/t\le2\rangeBnd/t\le2\rho$.

The summands $(\ratio(\omega_i)-1)/t$ of $A-1$ are independent, have mean $0$, are at most $\rangeBnd/t$ in absolute value, and their sum has variance $V$. By \cref{lem:coupling} with $\tailShift/2$ in place of $\tailShift$, we can couple $A$ with $G\sim\Nrm(0,1)$ so that $|A-1-\sqrt V\,G|\le C_1\rho$ except with probability $\tailShift/2$ (given $\omega_1,\ldots,\omega_t$, we sample $G$ from its conditional distribution in this coupling). Moreover, $\prob{}{|G|>\sqrt{2u}}\le e^{-u}<\tailShift/2$. Thus, except with probability $\tailShift$, for $D\coloneqq C_1+2$ we have that
\[
 |A-1-\sqrt V\,G|\le D\rho
 \qquad\text{and}\qquad
 |B-1-\sqrt V\,G|\le D\rho .
\]
On this event both $|A-1|$ and $|B-1|$ are at most $\sqrt{2Vu}+D\rho\le\sqrt{2\rho}+D\rho\le1/2$, and their squares are at most $4\rho+2D^2\rho^2\le5\rho$, by the choice of $c$. As $|\ln(1+z)-z|\le z^2$ for $|z|\le1/2$, we get that
\[
 \left|-\ln A-\left(\frac V2-\sqrt V\,G\right)\right|\le D\rho+5\rho+\frac V2
 \qquad\text{and}\qquad
 \left|\ln B-\left(\frac V2+\sqrt V\,G\right)\right|\le D\rho+5\rho+\frac V2 .
\]
Both right-hand sides are at most $(C_1+8)\rho\le(C\rangeBnd/t)\ln(C/\tailShift)$. Since $V/2\pm\sqrt V\,G$ is distributed as $\Nrm(V/2,V)$, this coupling gives the first claim, as noted after \cref{def:gauss-conc}.

For the shifted GDP bound, note that the mean $V/2$ of this Gaussian distribution is half of its variance $V$. Thus, by \cref{lem:gc-profile}, the hockey-stick divergence $\HockeyStick{e^{\eps}}$ of each of the two pairs is at most $g_V(\eps-(C\rangeBnd/t)\ln(C/\tailShift))+\tailShift$ for every $\eps\in\reals$. If $(\Pbar{t},Q^t)$ dominates an algorithm, then it dominates the pairs of output distributions of the algorithm in the remove direction, and $(Q^t,\Pbar{t})$ dominates those in the add direction, since reversing both pairs preserves domination (\cref{lem:kov}). Hence the algorithm is $((C\rangeBnd/t)\ln(C/\tailShift),\tailShift)$-shifted $\sqrt V$-GDP.
\end{proofof}

We now prove the claim for unbounded likelihood ratios stated before \cref{cor:gauss-ra}.
\begin{lemma}[Conditioning for random allocation]\label{lem:ra-conditioning}
Let $c$ and $C$ be the constants of \cref{thm:ra-gauss}, let $t\in\naturals$, $\rangeBnd>0$ and $0<\tailShift\le1$ with $(\rangeBnd/t)\ln(2C/\tailShift)\le c$, and let $\truncSet$ be an event such that $P(\truncSet)=Q(\truncSet)\ge1-\tailShift/(2t)$ and $|\ratioFunc{\omega}{P}{Q}-1|\le\rangeBnd$ for every $\omega\in\truncSet$. Then the privacy losses of $(\Pbar{t},Q^t)$ and of $(Q^t,\Pbar{t})$ are $((C\rangeBnd/t)\ln(2C/\tailShift),\tailShift)$-approximately $(\chiVar_\truncSet/(2t),\chiVar_\truncSet/t)$-Gaussian-concentrated, where $P_\truncSet\coloneqq P(\cdot\mid\truncSet)$, $Q_\truncSet\coloneqq Q(\cdot\mid\truncSet)$ and $\chiVar_\truncSet\coloneqq\chisq{P_\truncSet}{Q_\truncSet}$.
\end{lemma}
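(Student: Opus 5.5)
We will reduce to \cref{thm:ra-gauss} applied to the conditional pair $(P_\truncSet,Q_\truncSet)$, and then account for the event $\truncSet^t$ failing by the tail shift. The conditional pair has likelihood ratio $\ratioFunc{\omega}{P_\truncSet}{Q_\truncSet}=\ratioFunc{\omega}{P}{Q}\cdot Q(\truncSet)/P(\truncSet)=\ratioFunc{\omega}{P}{Q}$ on $\truncSet$, because $P(\truncSet)=Q(\truncSet)$. Hence $|\ratioFunc{\omega}{P_\truncSet}{Q_\truncSet}-1|\le\rangeBnd$ for $Q_\truncSet$-almost every $\omega$, and $P_\truncSet\ll Q_\truncSet$.

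We apply \cref{thm:ra-gauss} with tail shift $\tailShift/2$. Its hypothesis $(\rangeBnd/t)\ln(C/(\tailShift/2))\le c$ is exactly the assumption, and $\tailShift/2\le1/2$. This shows that the privacy losses of $(\bar P_{\truncSet,t},Q_\truncSet^t)$ and $(Q_\truncSet^t,\bar P_{\truncSet,t})$ are $((C\rangeBnd/t)\ln(2C/\tailShift),\tailShift/2)$-approximately $(\chiVar_\truncSet/(2t),\chiVar_\truncSet/t)$-Gaussian-concentrated. Here $\bar P_{\truncSet,t}$ denotes the random allocation mixture built from $(P_\truncSet,Q_\truncSet)$.

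Next we relate the unconditioned pair to the conditioned one. We write $\Pbar{t}$ as a mixture. Each component $Q^{j-1}\times P\times Q^{t-j}$ assigns $\truncSet^t$ probability $Q(\truncSet)^{t-1}P(\truncSet)=Q(\truncSet)^t\eqqcolon\pi$, and its conditional on $\truncSet^t$ is $Q_\truncSet^{j-1}\times P_\truncSet\times Q_\truncSet^{t-j}$. So $\Pbar{t}(\cdot\mid\truncSet^t)=\bar P_{\truncSet,t}$ and $Q^t(\cdot\mid\truncSet^t)=Q_\truncSet^t$, with $\Pbar{t}(\truncSet^t)=Q^t(\truncSet^t)=\pi\ge1-\tailShift/2$ by a union bound. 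For $\omega\in\truncSet^t$ the likelihood ratio of $(\Pbar{t},Q^t)$, namely $\frac1t\sum_j\ratio(\omega_j)$, coincides with that of the conditional pair. Thus, for $\omega\sim\Pbar{t}$, the privacy loss $\lossFunc{\omega}{\Pbar{t}}{Q^t}$ equals $\lossFunc{\omega}{\bar P_{\truncSet,t}}{Q_\truncSet^t}$ on $\truncSet^t$. The same holds in the add direction with $\omega\sim Q^t$.

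To conclude we construct a coupling. We couple $\omega\sim\Pbar{t}$ with $\omega'\sim\bar P_{\truncSet,t}$ so that $\omega=\omega'$ whenever $\omega\in\truncSet^t$; for example, we sample $\omega'$ independently from the conditional law on $\truncSet^t$ otherwise. Then the two privacy losses agree except with probability $1-\pi\le\tailShift/2$. This gives $\lossRVfunc{\Pbar{t}}{Q^t}\preceq_{(0,\tailShift/2)}\lossRVfunc{\bar P_{\truncSet,t}}{Q_\truncSet^t}$ in both directions of $\preceq$. Composing these relations with the Gaussian concentration via the transitivity noted after \cref{def:gauss-conc} adds the tail shifts to $\tailShift$ and keeps the loss shift, which yields the claim for both directions.

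The main point to verify carefully is the identification of the conditional distributions and likelihood ratios. This is where $P(\truncSet)=Q(\truncSet)$ is essential: without it the likelihood ratio would be rescaled and each mixture component would get a different mass on $\truncSet^t$.
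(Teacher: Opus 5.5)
Your proposal is correct and follows the paper's proof step for step. You apply \cref{thm:ra-gauss} to $(P_\truncSet,Q_\truncSet)$ with tail shift $\tailShift/2$, and use $P(\truncSet)=Q(\truncSet)$ to identify the laws of $\Pbar{t}$ and $Q^t$ conditioned on $\truncSet^t$ with the conditional random-allocation pair, on which the privacy losses coincide. You then couple so that the losses agree except with probability $1-Q(\truncSet)^t\le\tailShift/2$, and conclude by transitivity. Your explicit coupling construction, and your observation that equal masses make every mixture component assign the same probability to $\truncSet^t$, spell out details the paper leaves implicit.
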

\begin{proof}
Let $\Pbar{t,\truncSet}\coloneqq\frac1t\sum_{j=1}^{t}Q_\truncSet^{j-1}\times P_\truncSet\times Q_\truncSet^{t-j}$ be the distribution $\Pbar{t}$ with $(P_\truncSet,Q_\truncSet)$ in place of $(P,Q)$. Since $P=P(\truncSet)\,P_\truncSet$ and $Q=P(\truncSet)\,Q_\truncSet$ on $\truncSet$, the restrictions of $\Pbar{t}$ and $Q^t$ to $\truncSet^t$ are $P(\truncSet)^t\,\Pbar{t,\truncSet}$ and $P(\truncSet)^t\,Q_\truncSet^t$, and the complement of $\truncSet^t$ has probability $1-P(\truncSet)^t\le t(1-P(\truncSet))\le\tailShift/2$ under both distributions. Thus on $\truncSet^t$ the privacy loss of $(\Pbar{t},Q^t)$ is equal to that of $(\Pbar{t,\truncSet},Q_\truncSet^t)$, and given $\truncSet^t$ a sample from $\Pbar{t}$ is a sample from $\Pbar{t,\truncSet}$. Therefore we can couple the two privacy losses so that they are equal except with probability at most $\tailShift/2$, and the same holds for the privacy losses of $(Q^t,\Pbar{t})$ and $(Q_\truncSet^t,\Pbar{t,\truncSet})$. By \cref{thm:ra-gauss} for $(P_\truncSet,Q_\truncSet)$, whose likelihood ratio is equal to $\ratio$ on $\truncSet$, and $\tailShift/2$ in place of $\tailShift$, the privacy losses of $(\Pbar{t,\truncSet},Q_\truncSet^t)$ and $(Q_\truncSet^t,\Pbar{t,\truncSet})$ are $((C\rangeBnd/t)\ln(2C/\tailShift),\tailShift/2)$-approximately $(\chiVar_\truncSet/(2t),\chiVar_\truncSet/t)$-Gaussian-concentrated. By the coupling and the transitivity noted after \cref{def:gauss-conc}, the same holds for the privacy losses of $(\Pbar{t},Q^t)$ and $(Q^t,\Pbar{t})$ with $\tailShift$ in place of $\tailShift/2$, as claimed.
\end{proof}

\begin{proofof}{Proof of \cref{cor:gauss-ra} (Gaussian random allocation)}
Let $\truncSet\coloneqq\{\omega:|\lossFunc{\omega}{P}{Q}|\le\ln(1+\rangeBnd)\}$ for $\rangeBnd$ of \cref{eq:gauss-cutoff}, and let $c_2$ and $C_2\ge1$ be the constants of \cref{thm:ra-gauss}. Let $w\coloneqq\sqrt{2\ln(2t/\tailShift)}$, so that $\ln(1+\rangeBnd)=1/(2\sigma^2)+w/\sigma$. The privacy loss $\lossFunc{\omega}{P}{Q}$ is $1/(2\sigma^2)+Z/\sigma$ for $\omega\sim P$ and $-1/(2\sigma^2)+Z/\sigma$ for $\omega\sim Q$, where $Z$ is a standard normal random variable. Thus $\truncSet$ is the event $\{-w-1/\sigma\le Z\le w\}$ under $P$ and the event $\{-w\le Z\le w+1/\sigma\}$ under $Q$. Therefore $P(\truncSet^c)=Q(\truncSet^c)=\Phibar(w)+\Phibar(w+1/\sigma)\le e^{-w^2/2}=\tailShift/(2t)$. As in the proof of \cref{cor:subsampled-approx}, $|\ratio-1|\le\rangeBnd$ on $\truncSet$. Thus, by \cref{lem:ra-conditioning,lem:gc-profile}, for $c=c_2$ and every $C\ge 2C_2$ the pairs $(\Pbar{t},Q^t)$ and $(Q^t,\Pbar{t})$ are $(\lossShift,\tailShift)$-shifted $\sqrt{\chiVar_\truncSet/t}$-GDP. Since $P(\truncSet)=Q(\truncSet)$, the likelihood ratio of $(P_\truncSet,Q_\truncSet)$ is equal to $\ratio$ on $\truncSet$. Therefore we have that
\[
 1+\chiVar_\truncSet=\frac{\expect{\omega\sim Q}{\ratioFunc{\omega}{P}{Q}^2\,\ind\{\omega\in\truncSet\}}}{Q(\truncSet)} .
\]
Let $q$ and $n_2$ be the densities of $Q=\Nrm(0,\sigma^2)$ and of $\Nrm(2,\sigma^2)$. As $\ratioFunc{\omega}{P}{Q}=e^{(\omega-1/2)/\sigma^2}$, completing the square gives $\ratioFunc{\omega}{P}{Q}^2q(\omega)=e^{1/\sigma^2}n_2(\omega)$. Thus $1+\chiVar_\truncSet$ is $e^{1/\sigma^2}$ times the ratio of the probabilities of $\truncSet$ under $\Nrm(2,\sigma^2)$ and under $\Nrm(0,\sigma^2)$. Here $\truncSet=\{\omega:|\omega-1/2|\le\sigma^2\ln(1+\rangeBnd)\}$ is an interval centered at $1/2$. A normal distribution with a fixed variance assigns it less probability when its mean is farther from $1/2$. Therefore this ratio is at most $1$, and $\chiVar_\truncSet\le e^{1/\sigma^2}-1=\chiVar$. As $g_V(\eps)$ is nondecreasing in $V$, the two pairs are $(\lossShift,\tailShift)$-shifted $\sqrt{\chiVar/t}$-GDP. By \cref{clm:gauss-dom} and \citep[Theorem~3.1]{FS25ra}, which holds for every $\eps\in\reals$, random allocation of the Gaussian mechanism is $(\eps,g_{\chiVar/t}(\eps-\lossShift)+\tailShift)$-DP for every $\eps\in\reals$. That is, it is $(\lossShift,\tailShift)$-shifted $\sqrt{\chiVar/t}$-GDP, which gives the claim.
\end{proofof}

\begin{proofof}{Proof of \cref{cor:gauss-ra-closed-form} (Closed-form bound for Gaussian random allocation)}
Let $c_2$ and $C_2\ge1$ be the constants of \cref{thm:ra-gauss}, let $\tailShift\coloneqq\delta/2$, $H\coloneqq\ln(t/\delta)\ge\ln(4)$ and $K\coloneqq1+\log_2(4C_2)$, and let $C\ge\max\left\{3,\,2C_2,\,\sqrt{K/c_2},\,1+2\sqrt{e-1}+2C_2K\right\}$. As $\ln(1+y)\le y$, \cref{eq:gauss-conditions} implies that $\sqrt{t/(C\ln(1/\delta))}\ge C\sqrt H$, that is, $t\ge C^3H\ln(1/\delta)$. Further, as in the proof of \cref{clm:gauss-closed-form} (here $\ln(2t/\tailShift)=\ln(4t/\delta)$ and $C\ge5/2$), condition \cref{eq:gauss-conditions} gives that
\[
 \rangeBnd\le\frac{\sqrt t}{\sigma\sqrt{C\ln(1/\delta)}} .
\]
For $\delta\le1/2$ we have that $\ln(2C_2/\tailShift)=\ln(4C_2)+\ln(1/\delta)\le K\ln(1/\delta)$, and since $\ln(1/\delta)\le t/C^3$ and $\sigma\ge1$,
\[
 \frac \rangeBnd t\ln(2C_2/\tailShift)\le\frac{K\sqrt{\ln(1/\delta)}}{\sigma\sqrt{Ct}}\le\frac K{C^2}\le c_2 .
\]
Therefore, by \cref{cor:gauss-ra}, which holds with the constants $c_2$ and $2C_2$ by its proof, $1$-out-of-$t$ random allocation of the Gaussian mechanism is $(\lossShift_1,\delta/2)$-shifted $\sqrt{\chiVar/t}$-GDP, where
\[
 \lossShift_1\coloneqq 2C_2\frac \rangeBnd t\ln(2C_2/\tailShift)\le\frac{2C_2K\sqrt{\ln(1/\delta)}}{\sigma\sqrt t} .
\]
By \cref{eq:approx-inverse}, it is $(\eps_{\mathrm G}(\chiVar/t,\delta/2)+\lossShift_1,\delta)$-DP. Finally, $\chiVar/t\le(e-1)/(\sigma^2t)$ by \cref{clm:gauss-pair}. Since $g_V$ is nondecreasing in $V$ and $\eps_{\mathrm G}(V,\delta)\le V/2+\sqrt{2V\ln(1/\delta)}$, we have that $\eps_{\mathrm G}(\chiVar/t,\delta/2)\le\eps_{\mathrm G}((e-1)/(\sigma^2t),\delta/2)\le(e-1)/(2\sigma^2t)+\sqrt{2(e-1)\ln(2/\delta)/(\sigma^2t)}$. As $\ln(2/\delta)\le2\ln(1/\delta)$, this implies that
\[
 \eps_{\mathrm G}\left(\frac{\chiVar}{t},\frac\delta2\right)\le\frac{e-1}{2\sigma^2t}+\frac{2\sqrt{(e-1)\ln(1/\delta)}}{\sigma\sqrt t} .
\]
As $\sigma\ge1$, $t\ge2$ and $\delta\le1/2$, we have that $\sigma\sqrt{t\ln(1/\delta)}\ge\sqrt{2\ln2}>(e-1)/2$. Therefore $(e-1)/(2\sigma^2t)\le\sqrt{\ln(1/\delta)}/\left(\sigma\sqrt t\right)$. Thus $\eps_{\mathrm G}(\chiVar/t,\delta/2)+\lossShift_1\le\eps$ by the choice of $C$, which gives the claim.
\end{proofof}

\begin{proofof}{Proof of \cref{cor:gauss-leading-ra} (Asymptotic upper bounds for random allocation)}
Let $\tailShift\coloneqq\delta/2$. The proof of \cref{cor:gauss-leading} uses only that the parameter of the shifted GDP guarantee is $(1+o(1))\,\chiVar/t$ and that the shift is $o\left(\sqrt{\ln(t)/t}\right)$ in the first regime and $o\left(1/\sqrt t\right)$ in the second. The proof of \cref{cor:gauss-ra} shows that the pairs $(\Pbar{t},Q^t)$ and $(Q^t,\Pbar{t})$ are shifted GDP with the parameter $\chiVar/t$. Both the condition of \cref{cor:gauss-ra} and the shift are given by $(\rangeBnd/t)\ln(2C/\delta)$. In addition to the order of the shift, we need that this condition holds for all sufficiently large $t$. By the proof of \cref{cor:gauss-leading}, $\rangeBnd=t^{o(1)}$ and $\ln(2C/\delta)=O(\ln(t))$ in the first regime, and $\rangeBnd\le t^{1/(2(1+\gamma))+o(1)}$ for the $\gamma$ of that proof and $\ln(2C/\delta)=O(\ln^2(t))$ in the second. Thus $(\rangeBnd/t)\ln(2C/\delta)$ is $t^{-1+o(1)}$ and at most $t^{-1+1/(2(1+\gamma))+o(1)}$, respectively. These bounds give the order of the shift and show that the condition holds for all sufficiently large $t$.
\end{proofof}
\section{Proofs for \texorpdfstring{\cref{sec:ctd}}{Section 6}}\label{app:ctd-proofs}
In this appendix, $(P,Q)$, $\lossRV$, $\Lambda$, $\tilde \lossRV$ and $(\tilde P,\tilde Q)$ are as in \cref{def:ctd}. If $\lossRV\notin\Lambda$, we denote by $\loss^-<\loss^+$ the two knots adjacent to $\lossRV$. The privacy losses of $(P,Q)$ and $(\tilde P,\tilde Q)$ are distributed as $\lossRV$ and $\tilde \lossRV$, and therefore, by \cref{def:hs}, for every $\kappa\ge0$,
\[
 \HockeyStickDiv{\kappa}{P}{Q}=\expect{}{[1-\kappa e^{-\lossRV}]_+}
 \qquad\text{and}\qquad
 \HockeyStickDiv{\kappa}{\tilde P}{\tilde Q}=\expect{}{[1-\kappa e^{-\tilde \lossRV}]_+}.
\]

\begin{claim}[CtD discretization of randomized response]\label{clm:ctd-rr}
Let $0<\eps_0<\gridSpacing$, let $(P,Q)$ be randomized response over $\{0,1\}$ with $P(1)=Q(0)=e^{\eps_0}/(1+e^{\eps_0})$, and let $(\tilde P,\tilde Q)$ be its CtD pair on the grid $\gridSpacing\mathbb{Z}$. Then $P$ and $Q$ are $(\eps_0,0)$-close, but for every $0\le\delta<\tanh(\eps_0/2)$ and $\eps\ge0$, the distributions $\tilde P$ and $\tilde Q$ are $(\eps,\delta)$-close exactly when $\eps\ge \gridSpacing+\ln(1-\delta/\tilde P(\gridSpacing))$, where $\tilde P(\gridSpacing)=\tanh(\eps_0/2)/(1-e^{-\gridSpacing})$. This threshold tends to $\gridSpacing$ as $\delta\to0$.
\end{claim}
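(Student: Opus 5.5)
The plan is to compute the CtD pair explicitly and then evaluate its hockey-stick divergences in both directions.

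\emph{Original pair.} The privacy loss $\lossRV$ of randomized response takes the value $\eps_0$ with probability $p\coloneqq e^{\eps_0}/(1+e^{\eps_0})$ and $-\eps_0$ with probability $1-p$. Since $|\lossRV|\le\eps_0$ almost surely, $[1-e^{\eps_0}e^{-\lossRV}]_+=0$, so $P$ and $Q$ are $(\eps_0,0)$-close; the reversed direction is symmetric.

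\emph{Discretized pair.} Because $0<\eps_0<\gridSpacing$, the adjacent knots of $\eps_0$ are $0$ and $\gridSpacing$, and those of $-\eps_0$ are $-\gridSpacing$ and $0$. Using \cref{eq:ctd-kernel}, the value $\eps_0$ moves to $\gridSpacing$ with probability $(1-e^{-\eps_0})/(1-e^{-\gridSpacing})$, and $-\eps_0$ moves to $-\gridSpacing$ with probability $(e^{\eps_0}-1)/(e^{\gridSpacing}-1)$. This gives
\[
 \tilde P(\gridSpacing)=p\,\frac{1-e^{-\eps_0}}{1-e^{-\gridSpacing}}=\frac{e^{\eps_0}-1}{e^{\eps_0}+1}\cdot\frac1{1-e^{-\gridSpacing}}=\frac{\tanh(\eps_0/2)}{1-e^{-\gridSpacing}} .
\]
By the symmetry $\lossRVfunc{Q}{P}\overset{d}{=}\lossRVfunc{P}{Q}$ of randomized response, together with the reversal remark after \cref{lem:ctd-facts} on the symmetric grid $\gridSpacing\mathbb{Z}$, the reversed CtD pair has the same PLD. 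It therefore suffices to analyze $\HockeyStickDiv{e^{\eps}}{\tilde P}{\tilde Q}=\expect{}{[1-e^{\eps-\tilde\lossRV}]_+}$, where $\tilde\lossRV\in\{-\gridSpacing,0,\gridSpacing\}$.

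\emph{Threshold.} For $\eps\ge0$ only the knot $\gridSpacing$ contributes. Hence for $0\le\eps<\gridSpacing$ the divergence is $\tilde P(\gridSpacing)(1-e^{\eps-\gridSpacing})$, and for $\eps\ge\gridSpacing$ it is $0$. The condition $\tilde P(\gridSpacing)(1-e^{\eps-\gridSpacing})\le\delta$ is equivalent to $e^{\eps-\gridSpacing}\ge1-\delta/\tilde P(\gridSpacing)$, that is, to $\eps\ge\gridSpacing+\ln(1-\delta/\tilde P(\gridSpacing))$. This requires $\delta<\tilde P(\gridSpacing)$, which holds because $\delta<\tanh(\eps_0/2)\le\tilde P(\gridSpacing)$. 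The resulting threshold is below $\gridSpacing$, so it is consistent with the case $\eps\ge\gridSpacing$. The threshold is also nonnegative exactly when $\delta\le\tilde P(\gridSpacing)(1-e^{-\gridSpacing})=\tanh(\eps_0/2)$, which is what the hypothesis on $\delta$ guarantees, so the constraint $\eps\ge0$ is compatible. Finally, $\ln(1-\delta/\tilde P(\gridSpacing))\to0$ as $\delta\to0$, so the threshold tends to $\gridSpacing$.

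The step most prone to error is the algebra of the kernel probabilities and the resulting closed form of $\tilde P(\gridSpacing)$, together with checking that the reversed direction gives exactly the same threshold rather than only a bound.
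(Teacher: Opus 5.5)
Your proposal is correct and follows essentially the same route as the paper: compute $\tilde P(\gridSpacing)=\tanh(\eps_0/2)/(1-e^{-\gridSpacing})$ from the kernel, use the symmetry of both pairs to reduce to one direction, note that for $\eps\ge0$ the profile is $\tilde P(\gridSpacing)[1-e^{\eps-\gridSpacing}]_+$, and solve for the threshold. The differences are cosmetic. You obtain the reversed direction from the reversal remark on the symmetric grid $\gridSpacing\mathbb{Z}$, whereas the paper checks $\tilde Q(\loss)=\tilde P(-\loss)$ directly, which amounts to $\tilde P(-\gridSpacing)=e^{-\gridSpacing}\tilde P(\gridSpacing)$. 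You also verify $(\eps_0,0)$-closeness from $|\lossRV|\le\eps_0$ instead of writing out the original profile.
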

\begin{proof}
The privacy loss of $(P,Q)$ is $\pm\eps_0$, and \cref{eq:ctd-kernel} splits the mass at $\pm\eps_0$ between $\pm \gridSpacing$ and $0$, with $\tilde P(\gridSpacing)=\tanh(\eps_0/2)/(1-e^{-\gridSpacing})$ and $\tilde P(-\gridSpacing)=e^{-\gridSpacing}\tilde P(\gridSpacing)$. Both pairs are symmetric ($Q(\omega)=P(1-\omega)$ and $\tilde Q(\loss)=\tilde P(-\loss)$), and for every $\eps\ge0$ their privacy profiles in both directions are
\[
 \HockeyStickDiv{e^{\eps}}{P}{Q}=\frac{[e^{\eps_0}-e^{\eps}]_+}{1+e^{\eps_0}}
 \qquad\text{and}\qquad
 \HockeyStickDiv{e^{\eps}}{\tilde P}{\tilde Q}=\tilde P(\gridSpacing)\,[1-e^{\eps-\gridSpacing}]_+ .
\]
The first is zero for $\eps\ge\eps_0$. Since $\delta<\tanh(\eps_0/2)=\tilde P(\gridSpacing)(1-e^{-\gridSpacing})$, the second is at most $\delta$ exactly when $e^{\eps-\gridSpacing}\ge1-\delta/\tilde P(\gridSpacing)$.
\end{proof}

\begin{proofof}{Proof of \cref{lem:ctd-facts} (CtD discretization of one step)}
We bound the correction $\zeta\coloneqq\tilde \lossRV-\lossRV$, which is $0$ when $\lossRV\in\Lambda$. Let $\lossRV=\loss\notin\Lambda$, $h_0\coloneqq \loss^+-\loss^-\le \gridSpacing$ and $p\coloneqq\prob{}{\tilde \lossRV=\loss^+\mid \lossRV=\loss}$. Given $\lossRV=\loss$, the correction is $\loss^--\loss\in(-h_0,0)$ with probability $1-p$ and $\loss^--\loss+h_0$ otherwise, and thus $|\zeta|\le h_0\le \gridSpacing$. By \cref{eq:ctd-mean}, $\expect{}{e^{-\zeta}\mid \lossRV=\loss}=1$, which means that $\loss^--\loss=\ln(1-p+pe^{-h_0})$. Therefore, given $\lossRV=\loss$, the correction is
\begin{equation}\label{eq:ctd-correction}
 \zeta=\ln\left(1-p+pe^{-h_0}\right)+h_0\,\ind\{\tilde \lossRV=\loss^+\},
\end{equation}
where the indicator has mean $p$ given $\lossRV=\loss$. Thus $\expect{}{\zeta\mid \lossRV=\loss}\ge0$, since $e^{-\expect{}{\zeta\mid \lossRV=\loss}}\le\expect{}{e^{-\zeta}\mid \lossRV=\loss}=1$ by Jensen's inequality. At the same time, by Hoeffding's lemma \citep{Hoeffding63} for a random variable in an interval of length $h_0$,
\[
 0=\ln\left(\expect{}{e^{-\zeta}\mid \lossRV=\loss}\right)\le-\expect{}{\zeta\mid \lossRV=\loss}+\frac{h_0^2}{8},
\]
and thus $\expect{}{\zeta\mid \lossRV=\loss}\le h_0^2/8\le \gridSpacing^2/8$. Taking expectations, we get the bounds on $\lossMean{\tilde P}{\tilde Q}-\lossMean{P}{Q}=\expect{}{\zeta}$.

To bound $\Var(\zeta)$ we let $c_h\coloneqq\ln(\cosh(\gridSpacing/2))$ and show that $\expect{}{(\zeta-c_h)^2\mid \lossRV}\le \gridSpacing^2/4$ almost surely, which suffices since $\Var(\zeta)\le\expect{}{(\zeta-c_h)^2}$. When $\lossRV\in\Lambda$, the left-hand side is $c_h^2\le \gridSpacing^2/4$, as $0\le\ln(\cosh(\gridSpacing/2))\le\gridSpacing/2$. When $\lossRV=\loss\notin\Lambda$, by \cref{eq:ctd-correction} the conditional variance of $\zeta$ is $h_0^2p(1-p)$ and its conditional mean is $m_{h_0}(p)\coloneqq h_0p+\ln(1-p+pe^{-h_0})$. The function $m_{h_0}$ is concave on $[0,1]$, with $m_{h_0}(0)=m_{h_0}(1)=0$, $m_{h_0}(1/2)=c_{h_0}\coloneqq\ln(\cosh(h_0/2))\in[0,h_0/2]$ and $m_{h_0}'(1/2)=h_0-2\tanh(h_0/2)\in[0,h_0]$. By concavity, the chord of $m_{h_0}$ between $p$ and $1/2$ has slope between $m_{h_0}'(1/2)$ and the slope $\pm2c_{h_0}$ of the chord from $1/2$ to the endpoint of $[0,1]$ on the side of $p$. Both slopes lie in $[-h_0,h_0]$, and thus $|m_{h_0}(p)-c_{h_0}|\le h_0|p-1/2|$. Therefore
\[
 \expect{}{(\zeta-c_{h_0})^2\mid \lossRV=\loss}=h_0^2p(1-p)+\left(m_{h_0}(p)-c_{h_0}\right)^2\le h_0^2p(1-p)+h_0^2\left(p-\frac12\right)^2=\frac{h_0^2}{4}.
\]
Now $c_h\ge c_{h_0}$ and $m_{h_0}(p)\ge0$, and therefore
\[
 \expect{}{(\zeta-c_h)^2\mid \lossRV=\loss}=\expect{}{(\zeta-c_{h_0})^2\mid \lossRV=\loss}-2(c_h-c_{h_0})\,m_{h_0}(p)+c_h^2-c_{h_0}^2\le\frac{h_0^2}{4}+c_h^2-c_{h_0}^2\le\frac{\gridSpacing^2}{4},
\]
where the last inequality uses that
\[
 c_h^2-c_{h_0}^2=\int_{h_0}^{\gridSpacing}\ln(\cosh(s/2))\tanh(s/2)\,ds\le\int_{h_0}^{\gridSpacing}\frac s2\,ds=\frac{\gridSpacing^2-h_0^2}{4}.
\]

For the covariance, let $\bar\zeta\coloneqq\expect{}{\zeta\mid \lossRV}$. As $\zeta$ is bounded and $\lossVar{P}{Q}<\infty$, we have that
\[
 \operatorname{Cov}(\lossRV,\zeta)=\expect{}{(\lossRV-\lossMean{P}{Q})\,\zeta}=\expect{}{(\lossRV-\lossMean{P}{Q})\,\bar\zeta}=\operatorname{Cov}(\lossRV,\bar\zeta).
\]
We showed that $\bar\zeta\in[0,\gridSpacing^2/8]$, and therefore $\Var(\bar\zeta)\le(\gridSpacing^2/16)^2$ by Popoviciu's inequality. By the Cauchy--Schwarz inequality, $|\operatorname{Cov}(\lossRV,\zeta)|\le(\gridSpacing^2/16)\sqrt{\lossVar{P}{Q}}$, and the bounds on $\lossVar{\tilde P}{\tilde Q}-\lossVar{P}{Q}$ follow from $\lossVar{\tilde P}{\tilde Q}=\Var(\tilde \lossRV)=\lossVar{P}{Q}+\Var(\zeta)+2\operatorname{Cov}(\lossRV,\zeta)$ and $0\le\Var(\zeta)\le\gridSpacing^2/4$.
\end{proofof}

\begin{proofof}{Proof of \cref{thm:ctd-composition} (Composition of CtD discretizations)}
By \cref{eq:ctd-mean} and the conditional Jensen inequality for the convex function $[1-\kappa\,\cdot\,]_+$, we have that $\expect{}{[1-\kappa e^{-\tilde \lossRV}]_+\mid \lossRV}\ge[1-\kappa e^{-\lossRV}]_+$. Taking expectations, we get that $(\tilde P,\tilde Q)$ dominates $(P,Q)$, and the domination of the products follows by applying the mappings of \cref{lem:kov} coordinatewise. Multiplying the bounds of \cref{lem:ctd-facts} by $t$, we obtain the bounds on $\lossMean{\tilde P^t}{\tilde Q^t}=t\lossMean{\tilde P}{\tilde Q}$ and $\lossVar{\tilde P^t}{\tilde Q^t}=t\lossVar{\tilde P}{\tilde Q}$.

The pair $(\tilde P,\tilde Q)$ is mutually absolutely continuous, and its privacy loss is distributed as $\tilde \lossRV=\lossRV+(\tilde\lossRV-\lossRV)$. By the proof of \cref{lem:ctd-facts}, $|\tilde \lossRV-\lossRV|\le\gridSpacing$. Thus the correction $\tilde\lossRV-\lossRV$ deviates from its mean by at most $2\gridSpacing$, and therefore $|\tilde\lossRV-\lossMean{\tilde P}{\tilde Q}|\le|\lossRV-\lossMean{P}{Q}|+2\gridSpacing\le\rangeBnd+2\gridSpacing$. Thus \cref{thm:gauss-approx} with $\truncSet$ the whole space and $\rangeBnd+2\gridSpacing$ in place of $\rangeBnd$ gives the Gaussian concentration of the privacy loss of $(\tilde P^t,\tilde Q^t)$.
\end{proofof}

\begin{proofof}{Proof of \cref{cor:ctd-error} (Error of the discretization)}
The first inequality follows from the domination of \cref{thm:ctd-composition}. We denote the privacy losses of the original and of the discretized composition by $\lossRV$ and $\tilde \lossRV$, their means by $\mu$ and $\tilde\mu$ and their variances by $V=t\cdot\lossVar{P}{Q}$ and $\tilde V$. Let $G\sim\Nrm(\mu,V)$ and $\tilde G\sim\Nrm(\tilde\mu,\tilde V)$. By \cref{thm:gauss-approx} with $\truncSet$ the whole space and by \cref{thm:ctd-composition}, $G\preceq_{(C\rangeBnd\ln(2/\tailShift),\tailShift)}\lossRV$ and $\tilde \lossRV\preceq_{(C(\rangeBnd+2\gridSpacing)\ln(2/\tailShift),\tailShift)}\tilde G$. By \cref{eq:param-change}, $\tilde G\preceq_{(\lossShift_1,\tailShift)}G$ for $\lossShift_1\coloneqq\tilde\mu-\mu+|\sqrt{\tilde V}-\sqrt V|\sqrt{2\ln(1/\tailShift)}$. By \cref{thm:ctd-composition}, $0\le\tilde\mu-\mu\le t\gridSpacing^2/8$. If $\tilde V\ge V$, then $\sqrt{\tilde V}-\sqrt V\le(\tilde V-V)/\left(2\sqrt V\right)\le t\gridSpacing^2/\left(8\sqrt V\right)+\gridSpacing^2\sqrt t/16=\gridSpacing^2\sqrt t/\left(8\sqrt{\lossVar{P}{Q}}\right)+\gridSpacing^2\sqrt t/16$, and otherwise $\sqrt V-\sqrt{\tilde V}\le(V-\tilde V)/\sqrt V\le \gridSpacing^2\sqrt t/8$. Chaining the three relations and applying the bound on the hockey-stick divergence \citep[Claim~4.2]{FS26pld}, we get that for every $\eps\in\reals$,
\[
 \HockeyStick{e^{\eps}}(\tilde \lossRV)\le\HockeyStick{e^{\eps-\lossShift}}(\lossRV)+3\tailShift ,
\]
where
\[
 \lossShift\coloneqq 2C(\rangeBnd+\gridSpacing)\ln(2/\tailShift)+\frac{t\gridSpacing^2}8+\frac{\gridSpacing^2\sqrt t}8\left(\frac1{\sqrt{\lossVar{P}{Q}}}+1\right)\sqrt{2\ln(1/\tailShift)} .
\]
At $\eps=\eps(\delta-3\tailShift)+\lossShift$ the right-hand side is at most $\delta$, and therefore $\tilde\eps(\delta)\le\eps(\delta-3\tailShift)+\lossShift$.
\end{proofof}
\ifarxivonly
\section{Subexponential tails}\label{app:cutoff}
In this appendix we consider privacy losses whose deviations from their means have exponentially decaying tails. We show that for such losses we can replace the mean $\truncMean{P}{Q}$ and the variance $\truncVar{P}{Q}$ of the truncated privacy loss in \cref{thm:gauss-approx} by the mean $\lossMean{P}{Q}$ and the variance $\lossVar{P}{Q}$ of the privacy loss itself, which do not depend on the truncation set $\truncSet$. We use the notation of \cref{sec:gaussapprox}.
\begin{theorem}[Gaussian concentration under subexponential tails]\label{thm:gauss-cutoff}
There is a universal constant $C>0$ such that the following holds. Let $\rangeBnd>0$, $0<\tailShift\le1$ and $0\le\tailShift_0\le\tailShift/(4t)$, and assume that $\lossRVfunc{P}{Q}$ has a finite mean and that for every $\loss\ge\rangeBnd$,
\[
 \prob{}{|\lossRVfunc{P}{Q}-\lossMean{P}{Q}|>\loss}\le\tailShift_0e^{1-\loss/\rangeBnd}.
\]
Then $\lossRVfunc{P^t}{Q^t}$ is $(C\rangeBnd\ln(2/\tailShift),\tailShift)$-approximately $(\lossMean{P^t}{Q^t},\lossVar{P^t}{Q^t})$-Gaussian-concentrated.
\end{theorem}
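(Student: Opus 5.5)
Proof plan. We reduce to \cref{thm:gauss-approx} with a truncation set at a cutoff of order $\rangeBnd\ln(2/\tailShift)$, and then pay for replacing the truncated moments by the true ones through \cref{eq:param-change}.

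\emph{Choice of truncation set.} Let $k\coloneqq1+\ln(2/\tailShift)\ge1$ and
\[
 \truncSet\coloneqq\{\omega:|\lossFunc{\omega}{P}{Q}-\lossMean{P}{Q}|\le k\rangeBnd\}.
\]
The tail assumption at $\loss=k\rangeBnd\ge\rangeBnd$ gives
\[
 P(\truncSet^c)\le\tailShift_0e^{1-k}=\tailShift_0\tailShift/2\le\tailShift/(2t),
\]
using $\tailShift_0\le\tailShift/(4t)\le1/t$; in fact even $\tailShift_0\le\tailShift/(4t)$ alone suffices. On $\truncSet$ the truncated loss is within $k\rangeBnd+|\lossMean{P}{Q}|$ of $0$. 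This is not yet a bound on its deviation from its own mean, because truncation sets the loss to zero rather than to $\lossMean{P}{Q}$. To avoid this, I apply the argument of \cref{thm:gauss-approx} directly to the centered clipped variable $Y_i\coloneqq(\lossRV_i-\lossMean{P}{Q})\ind\{\omega_i\in\truncSet\}$. That proof only uses that the summands of the sum that equals $\lossRV$ off a small event deviate from their common mean by at most a bound. Here $|Y_i|\le k\rangeBnd$, so $|Y_i-\expect{}{Y_i}|\le2k\rangeBnd$. Then \cref{lem:coupling} couples $\sum_iY_i$ with $\Nrm(t\expect{}{Y_1},t\Var(Y_1))$, with error $O(k\rangeBnd\ln(4/\tailShift))=O(\rangeBnd\ln^2(2/\tailShift))$ except with probability $\tailShift/2$. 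This has one log too many. So the cutoff must instead be chosen so that the Sakhanenko moment bound is used directly: in \cref{thm:sakhanenko} the summands need not be bounded, only $\sum_i\expect{}{|X_i|^{p+2}}$ must be controlled. I expect this to be the main step.

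\emph{Moment route (primary plan).} Apply \cref{thm:sakhanenko} to $X_i\coloneqq\lossRV_i-\lossMean{P}{Q}$ truncated at the level $k\rangeBnd$ (clipped, then recentered). Split $\expect{}{|X_i|^{p+2}}\le\rangeBnd^p\expect{}{X_i^2}+\expect{}{|X_i|^{p+2}\ind\{|X_i|>\rangeBnd\}}$. The second term, by the tail hypothesis and integration, is at most $\tailShift_0e(p+2)!\,\rangeBnd^{p+2}$. Summing over $i$ and dividing by $V_S^{(p+2)/2}$ gives
\[
 \expect{}{|S-G|^p}\le(C_0p\rangeBnd)^p\left(1+\frac{t\tailShift_0e(p+2)!\,\rangeBnd^2}{V_S}\cdot V_S^{-p/2}\rangeBnd^{p}\right).
\]
This second part is the obstacle, since it is small only if $V_S$ is not tiny compared with $\rangeBnd^2$. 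I would handle the case $V_S\le\rangeBnd^2\ln^2(2/\tailShift)$ separately. In that case both $S$ and $\Nrm(0,V_S)$ are within $O(\rangeBnd\ln(2/\tailShift))$ of $0$ except with probability $\tailShift$: for $S$ this follows from Bernstein on the truncated parts plus the union bound, and for the Gaussian from its tail. The trivial coupling then suffices.

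\emph{Changing the parameters.} If the truncation route is used, we still must move from $(t\truncMean{P}{Q},t\truncVar{P}{Q})$ to $(t\lossMean{P}{Q},t\lossVar{P}{Q})$ via \cref{eq:param-change}. By the tail hypothesis, $|\expect{}{Y_1}|\le\expect{}{|X_1|\ind_{\truncSet^c}}\le\tailShift_0e(k+1)\rangeBnd e^{-k}$, which is $O(\rangeBnd/t)$ since $\tailShift_0\le\tailShift/(4t)$. Thus the mean shift summed over $t$ steps is $O(\rangeBnd)$. Similarly
\[
 |\Var(Y_1)-\lossVar{P}{Q}|=O(\tailShift_0k^2\rangeBnd^2e^{-k})=O\!\left(\frac{\rangeBnd^2\tailShift}{t}\right),
\]
so $|\sqrt{t\Var(Y_1)}-\sqrt{t\lossVar{P}{Q}}|\le\sqrt{t|\Var(Y_1)-\lossVar{P}{Q}|}=O(\rangeBnd\sqrt{\tailShift})$. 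Multiplied by $\sqrt{2\ln(1/\tailShift)}$, this is $O(\rangeBnd)$. Chaining the relations by transitivity, with the tail shifts split as $\tailShift/2$ and $\tailShift/2$ and \cref{eq:param-change} applied with a constant fraction of $\tailShift$, gives the claim with the loss shift $C\rangeBnd\ln(2/\tailShift)$, provided the coupling step yields a single logarithm as discussed above.
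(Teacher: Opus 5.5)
There is a genuine gap: the proposal never establishes a loss shift of order $\rangeBnd\ln(2/\tailShift)$. Truncating at $k\rangeBnd$ with $k=1+\ln(2/\tailShift)$ gives $\rangeBnd\ln^2(2/\tailShift)$, as you note. The moment route you designate as the main step does not remove the extra logarithm.

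With the correct normalization, \cref{thm:sakhanenko} gives $\expect{}{|S-G|^p}\le(C_0p\rangeBnd)^p\bigl(1+t\tailShift_0\,e\,(p+2)!\,\rangeBnd^2/V_S\bigr)$. Your display has a spurious extra factor $V_S^{-p/2}\rangeBnd^p$. Markov's inequality at a level of order $\rangeBnd\ln(2/\tailShift)$ forces $p\asymp\ln(2/\tailShift)$. The factor $t\tailShift_0\le\tailShift/4\approx e^{-p}/2$ then cancels only an exponential in $p$, while $(p+2)!\ge(p/e)^p$. So for $V_S$ just above your threshold $\rangeBnd^2\ln^2(2/\tailShift)$, the $p$-th root of the second term is of order $p^2\rangeBnd$, which is again $\rangeBnd\ln^2(2/\tailShift)$. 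Bounding the high moments through the clipping, $|X_i|^{p+2}\le(2k\rangeBnd)^pX_i^2$, gives the same order $pk\rangeBnd$.

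The small-variance case does not rescue this. For $V_S$ near $\rangeBnd^2\ln^2(2/\tailShift)$, a $\Nrm(0,V_S)$ variable exceeds $C\rangeBnd\ln(2/\tailShift)$ with probability $2\Phibar(C)$, a constant, not at most $\tailShift$. In addition, Bernstein's inequality for summands clipped at $k\rangeBnd$ carries a term of order $k\rangeBnd\ln(2/\tailShift)$, again $\rangeBnd\ln^2(2/\tailShift)$.

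The missing observation is that the cutoff need not be enlarged by a logarithm. The hypothesis at $\loss=\rangeBnd$ already gives $\prob{}{|\lossRVfunc{P}{Q}-\lossMean{P}{Q}|>\rangeBnd}\le\tailShift_0\le\tailShift/(4t)$. You even remark that $\tailShift_0\le\tailShift/(4t)$ alone suffices for the union bound, so take $k=1$. This is the paper's proof:

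\begin{enumerate}
 \item It sets $\lossRV_i'\coloneqq\lossMean{P}{Q}+(\lossRV_i-\lossMean{P}{Q})\ind\{|\lossRV_i-\lossMean{P}{Q}|\le\rangeBnd\}$, so $\sum_i\lossRV_i'=\sum_i\lossRV_i$ except with probability $t\tailShift_0\le\tailShift/4$.
 \item Integrating the tail hypothesis gives $\expect{}{|\lossRV_i-\lossRV_i'|}\le2\tailShift_0\rangeBnd$ and $\expect{}{(\lossRV_i-\lossRV_i')^2}\le5\tailShift_0\rangeBnd^2$. Hence $\lossRV_i'$ deviates from its own mean by at most $(1+2\tailShift_0)\rangeBnd\le\frac32\rangeBnd$.
 \item \cref{lem:coupling} with $\tailShift/4$ then gives coupling error $\frac32C_1\rangeBnd\ln(8/\tailShift)$, with $C_1$ its constant. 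This has a single logarithm.
\end{enumerate}

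Your parameter-change step then goes through essentially as you wrote it. The mean moves by at most $2t\tailShift_0\rangeBnd\le\tailShift\rangeBnd/2$ and the standard deviation by at most $\rangeBnd\sqrt{5\tailShift/4}$. So \cref{eq:param-change} with $\tailShift/2$ costs at most $3\rangeBnd\ln(2/\tailShift)$, and chaining gives the claim with $C=\frac92C_1+3$. No moment bound for unbounded summands is needed.
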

If $\tailShift_0=0$, then the hypothesis says that $|\lossRVfunc{P}{Q}-\lossMean{P}{Q}|\le\rangeBnd$ almost surely, and the conclusion also follows from \cref{thm:gauss-approx} with $\truncSet$ the whole space. By \cref{lem:gc-profile}, we also get that $\HockeyStickDiv{e^{\eps}}{P^t}{Q^t}\le g_{\lossVar{P^t}{Q^t}}(\eps-\lossShift)+\tailShift$ for every $\eps$, where $\lossShift\coloneqq\lossMean{P^t}{Q^t}-\lossVar{P^t}{Q^t}/2+C\rangeBnd\ln(2/\tailShift)$. Compared with \cref{thm:gauss-approx}, the theorem assumes a tail bound at every scale beyond $\rangeBnd$ rather than at a single cutoff. Such bounds hold for subexponential privacy losses. Specifically, if the random variable $e^{|\lossRVfunc{P}{Q}-\lossMean{P}{Q}|/\rangeBnd}$ has expectation at most $2$, then by Markov's inequality the tail assumption holds with $\rangeBnd\ln(8t/\tailShift)$ in place of $\rangeBnd$ and with $\tailShift_0\coloneqq\tailShift/(4t)$. The cutoff grows only logarithmically in $t/\tailShift$.

\begin{proof}
Let $C_1$ be the constant of \cref{lem:coupling}. We prove the claim with $C\coloneqq\frac92C_1+3$. Let $\lossRV_1,\ldots,\lossRV_t$ be independent copies of $\lossRVfunc{P}{Q}$, let $\mu\coloneqq\lossMean{P}{Q}$, and let $\lossRV_i'\coloneqq\mu+(\lossRV_i-\mu)\ind\{|\lossRV_i-\mu|\le\rangeBnd\}$ be the privacy losses truncated at their mean, that is, set to $\mu$ where they deviate from it by more than $\rangeBnd$. (The truncation at zero of \cref{thm:gauss-approx} would add $|\mu|$ to the deviations of the truncated privacy losses from their mean.) By the tail assumption with $\loss=\rangeBnd$ and a union bound, $\sum_i\lossRV_i'=\sum_i\lossRV_i$ except with probability at most $t\tailShift_0\le\tailShift/4$.

Since $\lossRV_i-\lossRV_i'=(\lossRV_i-\mu)\ind\{|\lossRV_i-\mu|>\rangeBnd\}$, by integrating the tail assumption we get that
\begin{align*}
 \expect{}{|\lossRV_i-\lossRV_i'|}&=\rangeBnd\,\prob{}{|\lossRV_i-\mu|>\rangeBnd}+\int_{\rangeBnd}^\infty\prob{}{|\lossRV_i-\mu|>\loss}\,d\loss\le2\tailShift_0\rangeBnd,\\
 \expect{}{(\lossRV_i-\lossRV_i')^2}&=\rangeBnd^2\,\prob{}{|\lossRV_i-\mu|>\rangeBnd}+\int_{\rangeBnd}^\infty2\loss\,\prob{}{|\lossRV_i-\mu|>\loss}\,d\loss\le5\tailShift_0\rangeBnd^2 .
\end{align*}
In particular, the mean of $\lossRV_i'$ is within $2\tailShift_0\rangeBnd$ of $\mu$, since $\expect{}{\lossRV_i}=\mu$. Together with $|\lossRV_i'-\mu|\le\rangeBnd$, this implies that the truncated privacy loss $\lossRV_i'$ deviates from its mean by at most $(1+2\tailShift_0)\rangeBnd\le\frac32\rangeBnd$. By \cref{lem:coupling} with $\frac32\rangeBnd$ in place of $\rangeBnd$ and $\tailShift/4$ in place of $\tailShift$, we can couple $\sum_i\lossRV_i'$ with $G'\sim\Nrm\left(\expect{}{\sum_i\lossRV_i'},\Var(\sum_i\lossRV_i')\right)$ so that $|\sum_i\lossRV_i'-G'|\le\frac32C_1\rangeBnd\ln(8/\tailShift)$ except with probability $\tailShift/4$ (as in the proof of \cref{thm:gauss-approx}, we shift the coupling of the centered sum by its mean). Thus $|\sum_i\lossRV_i-G'|\le\lossShift_1\coloneqq\frac32C_1\rangeBnd\ln(8/\tailShift)$ except with probability $\tailShift/2$, and as noted after \cref{def:gauss-conc}, $\lossRVfunc{P^t}{Q^t}\preceq_{(\lossShift_1,\tailShift/2)}G'$ and $G'\preceq_{(\lossShift_1,\tailShift/2)}\lossRVfunc{P^t}{Q^t}$.

Further, $|\lossMean{P^t}{Q^t}-\expect{}{\sum_i\lossRV_i'}|\le2t\tailShift_0\rangeBnd\le\tailShift\rangeBnd/2$. By the triangle inequality for the standard deviation and the independence of the differences $\lossRV_i-\lossRV_i'$, we also get that
\[
 \left|\sqrt{\lossVar{P^t}{Q^t}}-\sqrt{\Var\left(\sum_{i=1}^t\lossRV_i'\right)}\right|
 \le\sqrt{\Var\left(\sum_{i=1}^t(\lossRV_i-\lossRV_i')\right)}
 \le\left(\sum_{i=1}^t\expect{}{(\lossRV_i-\lossRV_i')^2}\right)^{1/2}
 \le \rangeBnd\sqrt{5\tailShift/4} .
\]

Let $G\sim\Nrm(\lossMean{P^t}{Q^t},\lossVar{P^t}{Q^t})$. By \cref{eq:param-change} with $\tailShift/2$ in place of $\tailShift$, we have that $G'\preceq_{(\lossShift_2,\tailShift/2)}G$ and $G\preceq_{(\lossShift_2,\tailShift/2)}G'$ for
\[
 \lossShift_2\coloneqq\frac{\tailShift\rangeBnd}2+\rangeBnd\sqrt{\frac52\tailShift\ln(2/\tailShift)}\le3\rangeBnd\ln(2/\tailShift),
\]
where we used that $\tailShift\le2\ln(2/\tailShift)$ and $\frac52\tailShift\le4\ln(2/\tailShift)$ for $\tailShift\le1$. Chaining these relations with those for $\lossRVfunc{P^t}{Q^t}$ (the loss shifts and the tail shifts add up, as noted after \cref{def:gauss-conc}), we get that $\lossRVfunc{P^t}{Q^t}\preceq_{(\lossShift_1+\lossShift_2,\tailShift)}G$ and $G\preceq_{(\lossShift_1+\lossShift_2,\tailShift)}\lossRVfunc{P^t}{Q^t}$. This gives the claim, since $\ln(8/\tailShift)\le3\ln(2/\tailShift)$ and therefore $\lossShift_1+\lossShift_2\le(\frac92C_1+3)\rangeBnd\ln(2/\tailShift)=C\rangeBnd\ln(2/\tailShift)$.
\end{proof}
\fi

\printbibliography
\end{document}